\author{%
  Chao Tao\\
  Computer Science Department\\
  Indiana University at Bloomington
  \And
  Sa\'{u}l Blanco\\
  Computer Science Department\\
  Indiana University at Bloomington
  \And
  Jian Peng\\
  Computer Science Department\\
  University of Illinois at Urbana-Champaign
  \And
  Yuan Zhou\\
  Computer Science Department, Indiana University at Bloomington\\
  Department of ISE, University of Illinois at Urbana-Champaign
}
\newcommand{\LSA}{{\rm LSA}\xspace}
\newcommand{\KL}{D_{\rm{KL}}}
\newcommand{\stat}{\xi}
\newcommand{\num}{f(\varkappa)}
\title{Thresholding Bandit with Optimal Aggregate Regret}
\begin{document}

\maketitle

\begin{abstract}

We consider the thresholding bandit problem,  whose goal is to find arms of mean rewards above a given threshold $\theta$, with a fixed budget of $T$ trials. We introduce LSA, a new, simple and anytime algorithm that aims to minimize the aggregate regret (or the expected number of mis-classified arms). We prove that our algorithm is instance-wise asymptotically optimal. We also provide comprehensive empirical results to demonstrate the algorithm's superior performance over existing algorithms under a variety of different scenarios. 
\end{abstract}

\allowdisplaybreaks

\vspace{-2ex}

\section{Introduction}
\vspace{-2ex}

The stochastic \emph{Multi-Armed Bandit} (MAB) problem has been extensively studied in the past decade ~\cite{auer2002using, AudibertBM10, BubeckMS09,GabillonGL12, KarninKS13, jamieson2014lil, garivier2016optimal, chen2017towards}. In the classical framework, at each trial of the game, a learner faces a set of $K$ \emph{arms}, pulls an arm and receives an unknown stochastic reward. Of particular interest is the \emph{fixed budget} setting, in which the learner is only given a limited number of total pulls. Based on the received rewards, the learner will recommend the best arm, i.e., the arm with the highest mean reward. In this paper, we study a variant of the MAB problem, called the \emph{Thresholding Bandit Problem} (TBP). In TBP, instead of finding the best arm, we expect the learner to identify all the arms whose mean rewards $\theta_i$ ($i \in \{1, 2, 3, \dots, K\}$) are greater than or equal to a given threshold $\theta$. This is a very natural setting with direct real-world applications to active binary classification and anomaly detection~\cite{locatelli2016optimal, steinwart2005classification}.

In this paper, we propose to study TBP under the notion of \emph{aggregate regret}, which is defined as the expected number of errors after $T$ samples of the bandit game. Specifically, for a given algorithm $\mathbb{A}$ and a TBP instance $I$ with $K$ arms, if we use $e_i$ to denote the probability that the algorithm makes an incorrect decision corresponding to arm $i$ after $T$ rounds of samples, the {aggregate regret} is defined as $\mathcal{R}^{\mathbb{A}}(I; T) \eqdef \sum_{i=1}^{K} e_i$. In contrast, most previous works on TBP aim to minimize the \emph{simple regret}, which is the probability that at least one of the arms is incorrectly labeled. Note that the definition of aggregate regret directly reflects the overall classification accuracy of the TBP algorithm, which is more meaningful than the simple regret in many real-world applications. For example, in the crowdsourced binary labeling problem, the learner faces $K$ binary classification tasks, where each task $i$ is associated with a latent true label $z_i \in \{0, 1\}$, and a latent soft-label $\theta_i$. The soft-label $\theta_i$ may be used to model the \emph{labeling difficulty/ambiguity} of the task, in the sense that $\theta_i$ fraction of the crowd will label task $i$ as $1$ and the rest labels task $i$ as $0$. The crowd is also assumed to be \emph{reliable}, i.e., $z_i = 0$ if and only if $\theta_i \geq \frac{1}{2}$. The goal of the crowdsourcing problem is to sequentially query a random worker from the large crowd about his/her label on task $i$ for a budget of $T$ times, and then label the tasks with as high (expected) accuracy as possible. If we treat each of the binary classification task as a Bernoulli arm with mean reward $\theta_i$, then this crowdsourced problem becomes aggregate regret minimization in TBP with $\theta = \frac{1}{2}$. If a few tasks are extremely ambiguous (i.e., $\theta_i \to \frac{1}{2}$), the simple regret would trivially approach $1$ (i.e., every algorithm would almost always fail to correctly label all tasks). In such cases, however, a good learner may turn to accurately label the less ambiguous tasks and still achieve a meaningful aggregate regret.

A new challenge arising for the TBP with aggregate regret is how to balance the exploration for each arm given a fixed budget. Different from the exploration vs.\ exploitation trade-off in the classical MAB problems, where exploration is only aimed for finding the best arm, the TBP expects to maximize the accuracy of the classification of \emph{all} arms. Let $\Delta_i \eqdef |\theta_i - \theta|$ be the \emph{hardness} parameter or \emph{gap} for each arm $i$. An arm with smaller $\Delta_i$ would need more samples to achieve the same classification confidence. A TBP learner faces the following dilemma -- whether to allocate samples to determine the classification of one hard arm, or use it for improving the accuracy of another easier arm. 

\vspace{-1ex}
\paragraph{Related Work.} 
Since we focus on the TBP problem in this paper, due to limit of the space, we are sorry for not being able to include the significant amount of references to other MAB variants. 

In a previous work~\cite{locatelli2016optimal}, the authors introduced the APT (Anytime Parameter-free
Thresholding) algorithm with the goal of simple regret minimization. In this algorithm, a precision parameter $\eps$ is used to determine the tolerance of errors (a.k.a.\ the indifference zone); and the APT algorithm only attempts to correctly classify the arms with hardness gap $\Delta_i > \eps$. This variant goal of simple regret partly alleviates the trivialization problem mentioned previously because of the extremely hard arms. In details, at any time $t$, APT selects the arm that minimizes $\sqrt{T_i(t)}\hat{\Delta}_i(t)$, where $T_i(t)$ is the number of times arm $i$ has been pulled until time $t$, $\hat{\Delta}_i(t)$ is defined as $|\hat{\theta}_i(t) -\theta| + \eps$, and $\hat{\theta}_i(t)$ is the empirical mean reward of arm $i$ at time $t$. In their experiments, ~\citet{locatelli2016optimal} also adapted the UCBE algorithm from~\cite{AudibertBM10} for the TBP problem and showed that APT performs better than UCBE. 

When the goal is to minimize the aggregate regret, the APT algorithm also works better than UCBE. However, we notice that the choice of precision parameter $\eps$ has significant influence on the algorithm's performance. A large $\eps$  makes sure that, when the sample budget is limited, the APT algorithm is not intrigued by a hard arm to spend overwhelmingly many samples on it without achieving a confident label. However, when the sample budget is ample, a large $\eps$ would also prevent the algorithm from making enough samples for the arms with hardness gap $\Delta_i < \eps$.  
Theoretically, the optimal selection of this precision parameter $\epsilon$ may differ significantly across the instances, and also depends on the budget $T$. In this work, we propose an algorithm that does not require such a precision parameter and demonstrates improved robustness in practice. 

Another natural approach to TBP is the uniform sampling method, where the learner plays each arm the same number of times (about $T/K$ times). In Appendix~\ref{app:hard-instance-uniform-sampling}, we show that the uniform sampling approach may need $\Omega(K)$ times more budget than the optimal algorithm to achieve the same aggregate regret.

Finally, \citet{chen2015statistical} proposed the optimistic knowledge gradient heuristic algorithm for budget allocation in crowdsourcing binary classification with Beta priors, which is closely related to the TBP problem in the Bayesian setting.

\vspace{-2ex}
\paragraph{Our Results and Contributions.}


Let $\mathcal{R}^{\mathbb{A}}(I; T)$ denote the aggregate regret of an instance $I$ after $T$ time steps. Given a sequence of hardness parameters $\Delta_1, \Delta_2, \dots, \Delta_K$, assume $\mathcal{I}_{\Delta_1, \dots, \Delta_K}$ is the class of all $K$-arm instances where the gap between $\theta_i$ of the $i$-th arm and the threshold $\theta$ is $\Delta_i$, and let 

\vspace{-3ex}
\begin{align}
    \mathrm{OPT}(\{\Delta_i\}_{i = 1}^{K}, T) \eqdef \inf_{\mathbb{A}} \sup_{I \in \mathcal{I}_{\Delta_1, \dots, \Delta_K}} \mathcal{R}^{\mathbb{A}}(I; T)
\end{align}

\vspace{-2ex}
be the minimum possible aggregate regret that any algorithm can achieve among all instances with the given set of gap parameters. We say an algorithm $\mathbb{A}$ is \emph{instance-wise asymptotically optimal} if for every $T$, any set of gap parameters $\{\Delta_i\}_{i = 1}^{K}$, and any instance $I \in \mathcal{I}_{\Delta_1, \dots, \Delta_K}$, it holds that 
\begin{align}
\mathcal{R}^{\mathbb{A}} (I; T) \leq O(1) \cdot \mathrm{OPT}(\{\Delta_i\}_{i = 1}^{K}, \Omega(T)).
\end{align}
While it may appear that a constant factor multiplied to $T$ can  affect the regret if the optimal regret is an exponential function of $T$, we note that our definition aligns with major multi-armed bandit literature (e.g., fixed-budget best arm identification \cite{GabillonGL12,carpentier2016tight} and thresholding bandit with simple regret \cite{locatelli2016optimal}). Indeed, according to our definition, if the universal optimal algorithm requires a budget of $T$ to achieve $\epsilon$ regret, an asymptotically optimal algorithm requires a budget of only $T$ multiplying some constant to achieve the same order of regret. On the other hand, if one wishes to pin down the optimal constant before $T$, even for the single arm case, it boils down to the question of the optimal (and distribution dependent) constant in the exponent of existing concentration bounds such as Chernoff Bound, Hoeffding's Inequality, and Bernstein Inequalities, which is beyond the scope of this paper.

We address the challenges mentioned previously and introduce a simple and elegant algorithm, the Logarithmic-Sample Algorithm (\LSA). \LSA has a very similar form as the APT algorithm in \cite{locatelli2016optimal} but introduces an additive term that is proportional to the logarithm of the number of samples made to each arm in order to more carefully allocate the budget among the arms (see Line~\ref{line:LSA-4} of Algorithm~\ref{alg:HTA}). This logarithmic term arises from the optimal sample allocation scheme of an offline algorithm when the gap parameters are known beforehand.  The log-sample additive term of \LSA can be interpreted as an incentive to encourage the samples for arms with bigger gaps and/or less explored arms, which boasts a similar idea as the incentive term in the famous Upper Confidence Bound (UCB) type of algorithms that date back to \citep{lai1985asymptotically, agrawal1995sample, auer2002using}, while interestingly the mathematical forms of the two incentive terms are very different.

As the main theoretical result of this paper, we analyze the aggregate regret upper bound of \LSA in Theorem~\ref{thm:upper-bound}. We complement the upper bound result with a lower bound theorem (Theorem~\ref{thm:lower-bound}) for any online algorithm. In Remark~\ref{rem:uppr-bound}, we compare the upper and lower bounds and show that \LSA is instance-wise asymptotically optimal.


We now highlight the technical contributions made in our regret upper bound analysis at a very high level. Please refer to Section~\ref{sec:upper-bound} for more detailed explanations. In our proof of the upper bound theorem, we first define a global class of events $\{\mathcal{F}_C\}$ (in \eqref{eq:F-C-def}) which serves as a measurement of how well the arms are explored. Our analysis then goes by two steps. In the first step, we show that $\mathcal{F}_C$ happens with high probability, which intuitively means that all arms are ``well explored''.  In the second step, we show the quantitative upper bound on the mis-classification probability for each arm, when conditioned on $\mathcal{F}_C$. The final regret bound follows by summing up the mis-classification probability for each arm via linearity of expectation. Using this approach, we successfully by-pass the analysis that involves pairs of (or even more) arms, which usually brings in union bound arguments and extra $\ln K$ terms. Indeed, such $\ln K$ slack appears between the upper and lower bounds proved in \cite{locatelli2016optimal}. In contrast, our \LSA algorithm is asymptotically optimal, without any super-constant slack.

Another important technical ingredient that is crucial to the asymptotic optimality analysis is a new concentration inequality for the empirical mean of an arm that uniformly holds over all time periods, which we refer to as the \emph{Variable Confidence Level Bound}. This new inequality helps to reduce an extra $\ln \ln T$ factor in the upper bound. It is also a strict improvement of the celebrated Hoeffding's Maximal Inequality, which might be useful in many other problems.

Finally, we highlight that our \LSA is \emph{anytime}, i.e., it does not need to know the time horizon $T$ beforehand. \LSA does use a universal tuning parameter. However, this parameter does not depend on the instances. As we will show in Section~\ref{sec:experiments}, the choice of the parameter is quite robust; and the natural parameter setting leads to superior performance of \LSA among a set of very different instances, while APT may suffer from poor performance if the precision parameter is not chosen well for an instance.

\vspace{-2ex}
\paragraph{Organization.} The organization of the rest of the paper is as follows. In Section~\ref{sec:formulation} we provide the necessary notation and definitions. Then we present the details of the $\LSA$ algorithm in Section~\ref{sec:algorithm} and upper bound its aggregate regret in Section~\ref{sec:upper-bound}. In Section~\ref{sec:experiments}, we present experiments establishing the empirical advantages of $\LSA$ over other algorithms. The instance-wise aggregate regret lower bound theorem is deferred to Appendix~\ref{sec:lower-bound}.

\vspace{-2ex}
\section{Problem Formulation and Notation} \label{sec:formulation}

\vspace{-2ex}
Given an integer $K>1$, we let $S = [K] \eqdef \{1, 2, \dots, K\}$ be the set of $K$ arms in an instance $I$. Each arm $i \in S$ is associated with a distribution $\mathcal{D}_i$ supported on $[0, 1]$ which has an unknown mean $\theta_i$. We are interested in the following dynamic game setting: At any round $t \geq 1$, the learner chooses to pull an arm $i_t$ from $S$ and receives an \textit{i.i.d.}\ reward sampled from $\mathcal{D}_{i_t}$.

We let $T$, with $T\geq K$, be the \emph{time horizon}, or the \emph{budget} of the game, which is not necessarily known beforehand. We furthermore let $\theta \in (0, 1)$ be the \emph{threshold} of the game. After $T$ rounds, the learner $\mathbb{A}$ has to determine, for every arm $i\in S$, whether or not its mean reward is greater than or equal to $\theta$. So the learner outputs a vector $(d_1,\ldots,d_K)\in\{0,1\}^K$, where $d_i=0$ if and only if $\mathbb{A}$ decides that $\theta_i<\theta$. The goal of the Thresholding Bandit Problem (TBP) in this paper is to  maximize the expected number of correct labels after $T$ rounds of the game. 

More specifically, for any algorithm $\mathbb{A}$, we use $\mathcal{E}_i^{\mathbb{A}}(T)$ to denote the event that $\mathbb{A}$'s decision corresponding to arm $i$ is correct after $T$ rounds of the game. The goal of the TBP algorithm is to minimize the \emph{aggregate regret}, which is the expected number of incorrect classifications for the $K$ arms, i.e., 
\begin{equation} \label{goal-form-0}
\mathcal{R}^{\mathbb{A}}(T) = \mathcal{R}^{\mathbb{A}}(I; T)\eqdef \E\left[ \sum_{i = 1}^K \mathbb{I}_{\{ \overline{\mathcal{E}}_i^{\mathbb{A}}(T) \}} \right],
\end{equation}
where $\overline{\mathcal{E}}$ denotes the complement of event $\mathcal{E}$ and $\mathbb{I}_{\{ \textrm{condition} \}}$ denotes the indicator function.

Let $X_{i, t}$ denote the random variable representing the sample received by pulling arm $i$ for the $t$-th time. We further write
\vspace{-3ex}
\begin{align}
    \hat{\theta}_{i,t} \eqdef \frac{1}{s} \sum_{s = 1}^t X_{i, s} ~~ \text{and}~~ \hat{\Delta}_{i, t} \eqdef | \hat{\theta}_{i,t} - \theta |
\end{align}
to denote the \emph{empirical mean} and the \emph{empirical gap} of arm $i$ after being pulled $t$ times, respectively. For a given algorithm $\mathbb{A}$, let $T_i^{\mathbb{A}}(t)$ and $\hat{\theta}_i^{\mathbb{A}}(t)$ denote the number of times arm $i$ is pulled and the empirical mean reward of arm $i$ after $t$ rounds of the game, respectively. For each arm $i \in S$, we use $\hat{\Delta}_i^{\mathbb{A}}(t) \eqdef |\hat{\theta}_i^{\mathbb{A}}(t) - \theta|$ to denote the empirical gap after $t$ rounds of the game. We will omit the reference to $\mathbb{A}$ when  it is clear from the context.

\vspace{-2ex}
\section{Our Algorithm} \label{sec:algorithm}

\vspace{-2ex}
We now motivate our Logarithmic-Sample Algorithm by first designing an optimal but unrealistic algorithm with the assumption that the hardness gaps $\{\Delta_i\}_{i\in S}$ are known beforehand. Now we design the following algorithm $\mathbb{O}$. Suppose the algorithm pulls arm $i$ a total of $x_i$ times and makes a decision based on the empirical mean $\hat{\theta}_{i, x_i}$: if $\hat{\theta}_{i, x_i} \geq \theta$, the algorithm decides that $\theta_i \geq \theta$, and decides $\theta_i < \theta$ otherwise. Note that this is all algorithm can do when the gaps $\Delta_i$ are known. We upper bound the aggregate regret of the algorithm by 
\begin{align} \label{eq-1:subsec:intuition}
\mathcal{R}^{\mathbb{O}}(T) & = \sum_{i = 1}^K \Pr (  \overline{\mathcal{E}}_i^{ \mathbb{O} }(T) ) 
\leq \sum_{i = 1}^K \Pr (  | \hat{\theta}_{i, x_i} - \theta_i | \geq \Delta_i )
 \leq \sum_{i = 1}^K 2 \exp\left( - 2 x_i \Delta_i^2 \right),
\end{align}
where the last inequality follows from Chernoff-Hoeffding Inequality (Proposition~\ref{prop:chernoff-hoeff}).
Now we would like to minimize the RHS (right-hand-side) of \eqref{eq-1:subsec:intuition}, and upper bound the aggregate regret of the optimal algorithm $\mathbb{O}$ by 
\vspace{-2ex}
\[
2 \cdot  \min_{ \substack{x_1 + \cdots + x_K = T \\ x_1,\ldots,x_K \in \mathbb{N} } } \sum_{i = 1}^K \exp( - 2 x_i \Delta_i^2 ) = 2\mathscr{P}^*_{2} (\{\Delta_i\}_{i\in S}, T).
\]
Here, for every $c > 0$, we define
\vspace{-2ex}
\begin{align}
    \mathscr{P}^*_{c} (\{\Delta_i\}_{i\in S}, T) \eqdef  \min_{ \substack{x_1 + \cdots + x_K = T \\ x_1,\ldots,x_K \in \mathbb{N} } } \sum_{i = 1}^K \exp( - c x_i \Delta_i^2 ).
\end{align}
\vspace{-2ex}

We naturally introduce the following continuous relaxation of the program $\mathscr{P}_{c}$, by defining
\begin{align}
    \mathscr{P}_{c} (\{\Delta_i\}_{i\in S}, T) \eqdef  \min_{ \substack{x_1 + \cdots + x_K = T \\ x_1,\ldots,x_K \geq 0 } } \sum_{i = 1}^K \exp( - c x_i \Delta_i^2 ). 
\end{align}
$\mathscr{P}_{c}$ well approximates $\mathscr{P}^*_{c}$, as it is straightforward to see that 
\begin{equation} \label{eq:Pc-approximates-Pcstar}
    \mathscr{P}_{c} (\{\Delta_i\}_{i\in S}, T) \leq \mathscr{P}_{c}^* (\{\Delta_i\}_{i\in S}, T) 
    \leq \mathscr{P}_{c} (\{\Delta_i\}_{i\in S}, T - K) .
\end{equation} 

We apply the Karush-Kuhn-Tucker (KKT) conditions to the optimization problem $\mathscr{P}_{2} (\{\Delta_i\}_{i\in S}, T)$ and find that the optimal solution satisfies
\begin{align}\label{eq:opt-sol-rel-Pc}
x_i \Delta_i^2 + \ln \Delta_i^{-1} \geq \Phi, \text{for~} i \in S,
\end{align}
where $ \Phi \eqdef \argmax_x \{ x : \sum_{i = 1}^K \max\{  \frac{ x - \ln \Delta_i^{-1} }{\Delta_i^2} , 0 \} \leq T \} $ is independent of $i\in S$.  Furthermore, since $\sum_{i = 1}^K \max\{  \frac{ x - \ln \Delta_i^{-1} }{\Delta_i^2} , 0 \} $ is an increasing continuous function on $x$, $\Phi$ is indeed well-defined. Please refer to Lemma~\ref{lem:solution-of-programming} of  Appendix~\ref{app-sec:missing-computation} for the details of the relevant calculations.

In light of \eqref{eq:Pc-approximates-Pcstar} and \eqref{eq:opt-sol-rel-Pc}, the following algorithm $\mathbb{O}'$ (still, with the unrealistic assumption of the knowledge of the gaps $\{\Delta_i\}_{i\in S}$) incrementally solves $\mathscr{P}_{c}$ and approximates the algorithm $\mathbb{O}$ -- at each time $t$, the algorithm selects the arm $i$ that minimizes $T_i(t-1) \Delta_i^2 + \ln \Delta_i^{-1}$ and plays it.

Our proposed algorithm is very close to $\mathbb{O}'$. Since in reality the algorithm does not have access to the precise gap quantities, we use the empirical estimates $\hat{\Delta}_i^2$ instead of $\Delta_i^2$ in the $T_i(t-1) \Delta_i^2$ term. For the logarithmic  term, if we also use $\ln \hat{\Delta}_i^{-1}$ instead of $\ln \Delta_i^{-1}$, we may encounter extremely small empirical estimates when the arm is not sufficiently sampled, which would lead to unbounded value of $\ln \hat{\Delta}_i^{-1}$, and render the arm almost impossible to be sampled in future. To solve this problem, we note that  $\mathbb{O}'$ tries to maintain $T_i(t-1) \Delta_i^2$ to be roughly the same across the arms (if ignoring the $\ln \Delta_i^{-1}$ term). In light of this, we use $\sqrt{T_i(t-1)}$ to roughly estimate the order of $\Delta_i^{-1}$. This encourages the exploration of both the arms  with larger gaps and the ones with fewer trials.

To summarize, at each time $t$, our algorithm selects the arm $i$ that minimizes $\alpha \cdot T_i(t - 1) ( \hat{\Delta}_i( t - 1) )^2 + 0.5 \ln T_i(t - 1)$, where $\alpha > 0$ is a universal tuning parameter, and plays the arm.  The details of the algorithm are presented in Algorithm~\ref{alg:HTA}.  
\vspace{-2ex}
\begin{algorithm}[H]
\caption{Logarithmic-Sample Algorithm, $\LSA(S, \theta)$}
\label{alg:HTA}
\begin{algorithmic}[1]  
\State \textbf{Input:} A set of arms $S = [K]$, threshold $\theta$ 
\State \textbf{Initialization:} pull each arm once
\For {$t = K + 1 \textbf{~to~} T$}
\State Pull arm
$\displaystyle{
i_t =
 \argmin\limits_{i \in S} \left( \alpha T_i(t - 1)  ( \hat{\Delta}_i(t - 1) )^2 + 0.5 \ln T_i(t - 1) \right)
 }
$ \label{line:LSA-4}
\EndFor
\State For each arm $i \in S$, let $d_i \leftarrow 1$ if $\hat{\theta}_i(T) \geq \theta$ and $d_i \leftarrow 0$ otherwise
\State \textbf{Output:} $(d_1, \dots, d_K )$
\end{algorithmic}
\end{algorithm}

\vspace{-3ex}
\section{Regret Upper Bound for \LSA}\label{sec:upper-bound}
\vspace{-2ex}

In this section, we show the upper bound of the aggregate regret of Algorithm~\ref{alg:HTA}.

Let $x=\Lambda$ be the solution to the following equation 
\begin{equation} \label{eq:lambda-def-1}
    \sum_{i = 1}^K \Bigg( \mathbb{I}_{\{ x \leq \ln \Delta_i^{-1} \}} \cdot \exp(2x) \\ + \mathbb{I}_{\{ x > \ln \Delta_i^{-1} \}} \cdot \frac{x - \ln \Delta_i^{-1} + \alpha }{\alpha \Delta_i^2} \Bigg) = \frac{T}{ \max\{ 40/\alpha + 1, 40\}}.
\end{equation} 
Notice that $\sum_{i = 1}^K ( \mathbb{I}_{\{  x \leq \ln \Delta_i^{-1} \}} \cdot  \exp(2x) + \mathbb{I}_{\{ x > \ln \Delta_i^{-1} \}} \cdot \frac{x - \ln \Delta_i^{-1} + \alpha }{\alpha \Delta_i^2} )$ is a strictly increasing, continuous function with $x \geq 0$ that becomes $K$ when $x = 0$ and goes to infinity when $x \rightarrow \infty$. Hence $\Lambda$ is guaranteed to exist and is uniquely defined when $T$ is large.  Furthermore, for any $i\in S$, we let 
\begin{equation} \label{eq:lambda-def-2}
\lambda_i \eqdef \mathbb{I}_{\{  \Lambda \leq \ln \Delta_i^{-1} \}} \cdot  \exp(2 \Lambda) \\ 
+ \mathbb{I}_{\{ \Lambda > \ln \Delta_i^{-1} \}} \cdot \frac{ \Lambda - \ln \Delta_i^{-1} + \alpha }{ \alpha \Delta_i^2 }.
\end{equation}
We note that $\{ \lambda_i \}_{i \in S}$ is the optimal solution to $\mathscr{P}_{2\alpha} (\{\max\{ \Delta_i, \exp(- \Lambda) \}\}_{i\in S}, T/(\max\{ 40/\alpha + 1, 40\}))$. Please refer to Lemma~\ref{lem:solution-of-a-special-programming} of  Appendix~\ref{app-sec:missing-computation} for the detailed calculations.

The goal of this section is to prove the following theorem.

\begin{theorem} \label{thm:upper-bound}
Let $\mathcal{R}^{\LSA}(T)$ be the aggregate regret incurred by Algorithm~\ref{alg:HTA}. When $0 < \alpha \leq 8$, and $T \geq \max\{40/\alpha + 1, 40\} \cdot K $, we have 
\begin{equation} \label{eq:thm-main}
\mathcal{R}^{\LSA}(T) \leq \Phi(\alpha) \cdot \sum_{i \in S} \exp\left( - \frac{ \lambda_i  \Delta_i^2}{10} \right),
\end{equation}
where $\Phi(\alpha) = \frac{9.3 \cdot \sqrt[8\alpha]{2} }{ \sqrt[8\alpha]{2} - 1}  \exp \left( \frac{2.1 \alpha - \ln \alpha - 0.5}{ 4\alpha} \right) $ is a constant that only depends on the universal tuning parameter $\alpha$.
\end{theorem}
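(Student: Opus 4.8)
The plan is to follow the two-step strategy outlined in the introduction: first control a global "well-explored" event, then bound the per-arm misclassification probability conditioned on that event, and finally sum over arms by linearity of expectation.

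First I would define, for a suitable constant $C$ (to be taken proportional to $\Lambda$ or to $\max\{40/\alpha+1,40\}$), the global event $\mathcal{F}_C$ that captures the statement "every arm $i$ has been pulled at least $\lambda_i$ times by the end of the horizon, up to constant factors." The key structural fact to establish here is that the LSA selection rule, which at time $t$ minimizes $\alpha T_i(t-1)\hat\Delta_i(t-1)^2 + 0.5\ln T_i(t-1)$, forces the quantities $\alpha T_i(t-1)\hat\Delta_i(t-1)^2 + 0.5\ln T_i(t-1)$ to stay roughly balanced across arms; combined with the fact that $\{\lambda_i\}$ is exactly the optimal solution to $\mathscr{P}_{2\alpha}(\{\max\{\Delta_i,\exp(-\Lambda)\}\}, T/\max\{40/\alpha+1,40\})$ (cited from Lemma~\ref{lem:solution-of-a-special-programming}), this balancing means the empirical allocation tracks $\{\lambda_i\}$ up to the slack built into the definition of $\Lambda$ in~\eqref{eq:lambda-def-1} — the right-hand side being $T$ divided by $\max\{40/\alpha+1,40\}$ leaves exactly the budget headroom needed to absorb the constant-factor losses. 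To make this rigorous I would need the new \emph{Variable Confidence Level Bound} so that $\hat\Delta_i(t)$ is uniformly close to $\Delta_i$ (or at least not much smaller) across \emph{all} times $t$ simultaneously, with failure probability that is itself summable against the regret target; this is what lets me convert the algorithm's behavior on empirical gaps into statements about true gaps without paying a $\ln\ln T$ or $\ln K$ penalty. I would show $\Pr[\overline{\mathcal{F}_C}]$ is small — small enough that its contribution to the regret is dominated by the main term $\Phi(\alpha)\sum_i\exp(-\lambda_i\Delta_i^2/10)$.

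Second, conditioned on $\mathcal{F}_C$, arm $i$ has been pulled $\Omega(\lambda_i)$ times, so its final decision $d_i$ is determined by an empirical mean of $\Omega(\lambda_i)$ i.i.d.\ samples; a misclassification requires $|\hat\theta_{i,T_i(T)}-\theta_i|\ge\Delta_i$, which by Chernoff–Hoeffding (Proposition~\ref{prop:chernoff-hoeff}) has probability at most $2\exp(-2\cdot c\lambda_i\cdot\Delta_i^2)$ for the appropriate constant $c$; chasing the constants (the $1/10$ in the exponent, and the factor $\Phi(\alpha)$) through the slack introduced by $\mathcal{F}_C$ and the union bound with $\overline{\mathcal{F}_C}$ gives~\eqref{eq:thm-main}. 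The constant $\Phi(\alpha)=\frac{9.3\sqrt[8\alpha]{2}}{\sqrt[8\alpha]{2}-1}\exp\!\big(\frac{2.1\alpha-\ln\alpha-0.5}{4\alpha}\big)$ has the tell-tale shape of a geometric series sum (the $\frac{r}{r-1}$ factor with $r=2^{1/(8\alpha)}$), which strongly suggests that inside the proof one decomposes the horizon (or the possible values of $T_i$) into geometrically growing blocks and sums a geometric series of Hoeffding-type tail bounds — the $2^{1/(8\alpha)}$ base being dictated by how $\alpha$ scales the exponent in each block; the restriction $\alpha\le 8$ presumably keeps this ratio bounded away from $1$ and keeps various constants in the concentration steps valid.

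The main obstacle I expect is the first step: proving that $\mathcal{F}_C$ holds with high probability, i.e., that LSA actually allocates close to the offline-optimal number of pulls $\lambda_i$ to each arm. The difficulty is the feedback loop between the random empirical gaps $\hat\Delta_i(t)$ and the selection rule — an arm whose empirical gap is transiently underestimated will be over-sampled, or vice versa — and one must show this self-correction does not destroy the balance by more than a constant factor, uniformly over all $t\le T$. This is precisely where the Variable Confidence Level Bound does the heavy lifting, replacing a naive $\ln\ln T$-lossy maximal inequality; I would spend most of the effort setting up the right inductive/potential argument on $\sum_i\big(\alpha T_i(t)\hat\Delta_i(t)^2+0.5\ln T_i(t)\big)$ and showing it stays pinched between the values predicted by $\Lambda$, then translating the terminal allocation into the per-arm lower bound $T_i(T)\gtrsim\lambda_i$ that feeds the second step.
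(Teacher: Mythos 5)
Your two-step skeleton matches the paper's, and you correctly identify both the role of the Variable Confidence Level Bound and the dyadic-peeling origin of the $\frac{\sqrt[8\alpha]{2}}{\sqrt[8\alpha]{2}-1}$ factor (it is the geometric series in the proof of Lemma~\ref{lem:variational-lil}, instantiated with $b=1/(4\alpha)$). But there is a genuine gap in how you execute the first step, and it sits exactly where the paper's main idea lives. You define the global event directly as ``every arm is pulled at least order $\lambda_i$ times'' and propose to prove it by using the uniform concentration bound to control the feedback loop between $\hat\Delta_i(t)$ and the selection rule, via a potential/inductive argument that you do not supply. The paper instead defines $\mathcal{F}_C$ in terms of the \emph{statistic} $\stat_i(t)=\alpha T_i(t)(\hat\Delta_i(t))^2+0.5\ln T_i(t)$ exceeding a level $C$ at a common time, and this choice is what makes step one cheap: the event that $\stat_{i,t}$ crosses $C$ is only helped by over-estimation of the gap, so the stopping time $\tau_{i,C}$ can be tail-bounded with a single-time Chernoff bound (Lemma~\ref{lem:expo-tail}), with no uniform-in-$t$ concentration and no union bound over arms beyond summing exponential tails (Lemma~\ref{lem:not-too-large-samples-prob}); the argmin rule then converts $\sum_i\tau_{i,\Lambda-k}\le T$ into $\mathcal{F}_{\Lambda-k}$ deterministically. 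The translation from ``statistic large'' to ``pull count at least $\lambda_i/20$'' is deferred to a \emph{per-arm, conditional} argument (Lemma~\ref{lem:enough-pulls}), and only there does the Variable Confidence Level Bound enter, with its failure probability $\Pr(\overline{\mathcal{M}}_{i,\varkappa})$ charged against that single arm's regret target $\exp(-\lambda_i\Delta_i^2/10)$ (Lemma~\ref{lem:good-event-for-each-arm}). Your merged version needs all $K$ uniform-concentration events to hold simultaneously inside one global event, and the failure probability of that intersection, multiplied by the worst-case regret of the arms, reintroduces exactly the $\ln K$-type slack the construction is designed to avoid.

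A second, smaller omission: the final assembly is not a single conditioning on one event $\mathcal{F}_C$ but a layer-cake over the nested levels $\mathcal{F}_{\Lambda'-\alpha k}$ (the partition $\{\mathcal{G}_k\}$ in Appendix~\ref{app:proof-main-thm-ub}), pairing the conditional regret bound of Lemma~\ref{lem:regret-incurred-by-bad-arm} at level $k$ with the failure probability from Lemma~\ref{lem:good-event-prob} of the level above; the series converges because the conditional bound degrades like $e^{k/(4\alpha)}$ while the failure probability improves like $e^{-40k/\alpha}$. Without this trade-off between the two lemmas the constant $\Phi(\alpha)$, in particular the factor $\exp\left(\frac{2.1\alpha-\ln\alpha-0.5}{4\alpha}\right)$, cannot be recovered.
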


\begin{remark} \label{rem:uppr-bound}
If we set $\alpha = 1/20$, then the right-hand side of \eqref{eq:thm-main} would be at most $O\left( \sum_{i \in S} \exp\left(  - \frac{\lambda_i \Delta_i^2}{10}  \right)\right)$. One can verify that
\begin{align*}
 &\sum_{i \in S} {} \exp\left(  - \frac{\lambda_i \Delta_i^2}{10} \right) 
 \leq  {}  O \left(\mathscr{P}_{1/10} (\{\max\{ \Delta_i, \exp(- \Lambda) \}\}_{i\in S}, T/801)\right) \\
& \qquad =  {}  O \left(\mathscr{P}_{16} (\{\max\{ \Delta_i, \exp(- \Lambda) \}\}_{i\in S}, T/ 128160 )\right) 
\leq {}  O \left({\mathscr{P}}_{16} (\{ \Delta_i\}_{i\in S}, T/128160) \right) .
\end{align*}
where the first inequality is due to Lemma~\ref{lem:relationship-of-two-programmings} of Appendix~\ref{app-sec:missing-computation} and the equality is because of Lemma~\ref{lem:change-c-of-a-programming} of Appendix~\ref{app-sec:missing-computation}.
This matches the lower bound demonstrated in Theorem~\ref{thm:lower-bound} up to constant factors. \footnote{While the constants may seem large, we emphasize that {\romannumeral 1}) we make no effort in optimizing the constants in asymptotic bounds, {\romannumeral 2}) most of the constants come from the lower bound, while the constant factor in our upper bound is $10$, and {\romannumeral 3}) we believe that the actual constant of our algorithm is quite small, as the experimental evaluation in the later section demonstrates that our algorithm performs very well in practice.}
\end{remark}

The rest of this section is devoted to the proof of Theorem~\ref{thm:upper-bound}. Before proceeding, we note that the analysis of the APT algorithm \cite{locatelli2016optimal} crucially depends on a favorable event stating that the empirical mean of any arm at any time does not deviate too much from the true mean. This requires a union bound that introduces extra factors such as $\ln K$ and $\ln \ln T$. Our analysis adopts a novel approach that does not need a  union bound over all arms, and hence avoids the extra $\ln K$ factor. In the second step of our analysis, we introduce the new \emph{Variable Confidence Level Bound} to save the extra doubly logarithmic term in $T$.

Now we dive into details of the proof. Let $B \eqdef \{ i \in S: \ln \Delta_i^{-1} < \Lambda \}$. Intuitively, $B$ contains the arms that can be well classified by the ideal algorithm $\mathbb{O}$ (described in Section~\ref{sec:algorithm}), while even the ideal algorithm $\mathbb{O}$ suffers $\Omega(1)$ regret for each arm in $S \setminus B$. In light of this, the key of the proof is to upper bound the regret incurred by the arms in $B$.

Let $\mathcal{R}_B^{\LSA}(T)$ denote the regret incurred by arms in $B$. Note that $\Phi(\alpha) \cdot \exp(-\lambda_i \Delta_i^2/10) \geq 1$ for every arm $i \in S \setminus B$, and the regret incurred by each arm is at most $1$. Therefore, to establish (\ref{eq:thm-main}), we only need to show that 
\vspace{-2ex}
\begin{equation} \label{eq:main-thm-1}
    \mathcal{R}_B^{\LSA}(T)
    \leq \Phi(\alpha) \cdot \sum_{i \in B} \exp\left(\frac{ -  \lambda_i \Delta_i^2}{10} \right)  .
\end{equation}
\vspace{-2ex}

We set up a few notations to facilitate the proof of \eqref{eq:main-thm-1}. We define 
$\stat_i(t) \eqdef \alpha T_i(t)  ( \hat{\Delta}_i(t) )^2 + 0.5 \ln T_i(t)$
to be the expression inside the $\argmin(\cdot)$ operator in Line \ref{line:LSA-4} of the algorithm, for arm $i$ and at time $t$. We also define
$\stat_{i, t} \eqdef \alpha t  ( \hat{\Delta}_{i, t} )^2 + 0.5 \ln t$.

Intuitively, when $\stat_i(t)$ is large, we usually have a larger value for $T_i(t)$, and arm $i$ is better explored. Therefore, $\stat_i(t)$ can be used as a measurement of how well arm $i$ is explored, which directly relates to the mis-classification probability for classifying the arm. We say that arm $i$ is \emph{$C$-well explored} at time $T$ if there exists $T' \leq T$ such that $\stat_i(T') > C$. For any $C > 0$, we also define the event $\mathcal{F}_C$ to be
\vspace{-2ex}
\begin{align}\label{eq:F-C-def}
    \mathcal{F}_C \eqdef \left\{\exists T' \leq T\ :\ \forall i \in S, \stat_i(T') > C \right\} .
\end{align} 
When $\mathcal{F}_C$ happens, we know that all arms are $C$-well explored.

At a higher level, the proof of \eqref{eq:main-thm-1} goes by two steps. First, we show that for $C$ that is almost as large as $\Lambda$, $\mathcal{F}_C$ happens with high probability, which means that every arm is $C$-well explored. Second, we quantitatively relate that being $C$-well explored and the mis-classification probability for classifying each arm, which can be used to further deduce a regret upper bound given the event $\mathcal{F}_C$.

We start by revealing more details about the first step. The following Lemma~\ref{lem:good-event-prob} gives a lower bound on the probability of the event $\mathcal{F}_C$.

\begin{lemma} \label{lem:good-event-prob}
$\Pr( \mathcal{F}_{\Lambda - k} ) \geq 1 - \exp(-40k/\alpha)$ for $0 \leq k < \Lambda $.
\end{lemma}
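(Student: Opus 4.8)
The plan is to bound $\Pr(\overline{\mathcal{F}_{\Lambda-k}})$ in two stages: first convert it, by a purely deterministic property of \LSA's greedy selection rule, into a tail bound for a sum of \emph{independent} random variables (one per arm); then estimate that tail by a Chernoff-type computation driven by the generous constant slack $M\eqdef\max\{40/\alpha+1,40\}$ built into the definition of $\Lambda$. The first stage is what lets the argument avoid any union bound over the $K$ arms, and hence any spurious $\ln K$.

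\emph{Deterministic reduction.} On $\overline{\mathcal{F}_{\Lambda-k}}$ we have $\min_{i\in S}\stat_i(T')\le\Lambda-k$ for every $T'\in\{K,\dots,T\}$. Since Line~\ref{line:LSA-4} always plays the arm minimizing $\stat_i(\cdot)$, the arm $i_t$ pulled at loop step $t$ satisfies $\stat_{i_t}(t-1)=\min_i\stat_i(t-1)\le\Lambda-k$; writing this as $\stat_{i_t,\,T_{i_t}(t-1)}\le\Lambda-k$, we see that the $(m{+}1)$-th pull of any arm happens only when $\stat_{i,m}\le\Lambda-k$. Thus for each arm $i$, on $\overline{\mathcal{F}_{\Lambda-k}}$ we have $\stat_{i,m}\le\Lambda-k$ for all $m\le T_i(T)-1$, so $T_i(T)-1\le Z_i$ with
\[
Z_i\;\eqdef\;\#\bigl\{m\ge1:\ \alpha m(\hat\Delta_{i,m})^2+0.5\ln m\le\Lambda-k\bigr\}.
\]
Summing, $T-K=\sum_i(T_i(T)-1)\le\sum_i Z_i$ on $\overline{\mathcal{F}_{\Lambda-k}}$. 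The $Z_i$ are independent (each depends only on arm $i$'s i.i.d.\ reward stream), and $0\le Z_i\le N\eqdef\exp(2(\Lambda-k))$ surely, since $\stat_{i,m}\ge0.5\ln m>\Lambda-k$ once $m>N$. Hence $\Pr(\overline{\mathcal{F}_{\Lambda-k}})\le\Pr\bigl(\sum_i Z_i\ge T-K\bigr)$.

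\emph{Mean and tail of $\sum_i Z_i$.} If $\stat_{i,m}\le\Lambda-k$ then $\hat\Delta_{i,m}\le r_m\eqdef\sqrt{(\Lambda-k-0.5\ln m)^+/(\alpha m)}$, and once $r_m\le\Delta_i/2$ this forces $|\hat\theta_{i,m}-\theta_i|\ge\Delta_i/2$, of probability $\le 2\exp(-m\Delta_i^2/2)$ by Proposition~\ref{prop:chernoff-hoeff}. Splitting $\E[Z_i]$ at the last index $m_i^\star$ with $r_m\ge\Delta_i/2$ (bounding the head trivially, the tail by a geometric series, and using $0.5\ln m_i^\star\ge\ln\Delta_i^{-1}-\alpha$ when the arm is easy) gives $\E[Z_i]\le C_0\bigl(\mathbb I_{\{\Lambda-k\le\ln\Delta_i^{-1}\}}\exp(2(\Lambda-k))+\mathbb I_{\{\Lambda-k>\ln\Delta_i^{-1}\}}\tfrac{\Lambda-k-\ln\Delta_i^{-1}+\alpha}{\alpha\Delta_i^2}\bigr)$ for a universal constant $C_0$. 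Writing $h(x)$ for the left-hand side of \eqref{eq:lambda-def-1}, so $h(\Lambda)=T/M$ and $h$ is increasing, this reads $\E[Z_i]\le C_0\cdot(\text{$i$-th summand of }h(\Lambda-k))$, hence $\sum_i\E[Z_i]\le C_0\,h(\Lambda-k)\le C_0 T/M$: a fixed fraction of $T$, safely below $T-K$. Now, using $0\le Z_i\le N$ and convexity, $\E[e^{uZ_i/N}]\le\exp(\tfrac{e^u-1}{N}\E[Z_i])$, so by independence and Markov,
\[
\Pr\Bigl(\sum_i Z_i\ge T-K\Bigr)\le\exp\!\Bigl(-\tfrac1N\bigl(u(T-K)-\tfrac{C_0(e^u-1)}{M}T\bigr)\Bigr);
\]
taking $u\approx\ln(M/C_0)$ makes the bracket a positive constant times $T$, so the bound is $\exp(-\Omega(T/N))=\exp(-\Omega(Te^{-2(\Lambda-k)}))$. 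Finally $h$ grows on $[\Lambda-k,\Lambda]$ at rate $\ge 2e^{2x}$ whenever some arm is ``hard at $x$''; when some arm is hard at $\Lambda$ this yields $T=Mh(\Lambda)\ge Me^{2\Lambda}(1-e^{-2k})$, so $Te^{-2(\Lambda-k)}\ge M(e^{2k}-1)\ge\tfrac{40}{\alpha}(e^{2k}-1)\ge\tfrac{40k}{\alpha}$, and substituting gives $\Pr(\overline{\mathcal{F}_{\Lambda-k}})\le e^{-40k/\alpha}$.

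\emph{Main obstacle.} The real work is the second stage. One must keep $C_0$ small enough relative to $M$ that the Chernoff exponent is a genuine positive multiple of $T/N$, and one must verify the chain $Te^{-2(\Lambda-k)}\ge 40k/\alpha$ for every gap configuration --- in particular when \emph{no} arm is hard at $\Lambda$, where $N=\exp(2(\Lambda-k))$ can be astronomically larger than $T$ (e.g.\ a single very easy arm with a huge budget), so the crude bound $Z_i\le N$ is useless and one must instead feed the genuinely sub-exponential tail of $Z_i$ (roughly $\Pr(Z_i\ge j)$ decaying like $e^{-j\Delta_i^2/2}$) into the Chernoff bound --- doing this uniformly in the arms, again without paying a $\ln K$, is the crux. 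Matching the exact exponent $-40k/\alpha$ is precisely what pins down the constants $40$ and $40/\alpha+1$ in the statement; by contrast the deterministic reduction, which is the conceptually important step, is short.
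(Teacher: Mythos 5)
Your first stage (the deterministic reduction) is sound and is essentially the same observation the paper makes: on $\overline{\mathcal{F}}_{\Lambda-k}$ the greedy rule forces every non-initialization pull to occur at a per-arm statistic value $\le \Lambda-k$, which converts the failure event into a tail event for a sum of $K$ \emph{independent} per-arm quantities and thereby avoids any union bound over arms. (The paper phrases this via the first-crossing times $\tau_{i,\Lambda-k}$ and shows $\sum_i \tau_{i,\Lambda-k}\le T$ implies $\mathcal{F}_{\Lambda-k}$; your count $Z_i$ dominates $\tau_{i,\Lambda-k}-1$, so your reduction is the contrapositive of the same fact, applied to a slightly larger variable.)

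The genuine gap is in the second stage, and you have in fact diagnosed it yourself. The bound $\mathbb{E}[e^{uZ_i/N}]\le\exp\bigl(\frac{e^u-1}{N}\mathbb{E}[Z_i]\bigr)$ with $N=\exp(2(\Lambda-k))$ yields a deviation probability $\exp(-\Omega(T/N))$, which is vacuous whenever $\Lambda$ is large compared to $\ln T$ (e.g.\ all gaps bounded away from $0$ and $T$ large, so that every arm is ``easy'' and $e^{2\Lambda}\gg T$). Your closing paragraph concedes that in this regime ``one must instead feed the genuinely sub-exponential tail of $Z_i$ into the Chernoff bound'' and calls this the crux --- but that step is never carried out, so the lemma is not proved. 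Moreover, even in the favorable case the optimization over $u$ only delivers $\exp(-c\cdot 40k/\alpha)$ for an unspecified $c$, not the stated constant. The paper closes exactly this hole by splitting the arms: for arms with $\Lambda\le\ln\Delta_i^{-1}+k$ it uses the \emph{deterministic} cap $\tau_{i,C}\le 2\exp(2C)$ (these arms contribute nothing to the failure probability, so the regime $e^{2\Lambda}\gg T$ causes no harm); for the remaining arms it proves, via Chernoff--Hoeffding applied to the empirical gap at the candidate crossing time, that the excess $z_i=\tau_{i,\Lambda-k}-\frac{40}{\alpha}\cdot\frac{\Lambda-k-\ln\Delta_i^{-1}+1}{\Delta_i^2}$ satisfies $\Pr(z_i>x)\le\exp(-x\Delta_i^2)$, and then invokes the sum-of-independent-exponential-tails bound (Proposition~\ref{prop:tail-of-sum-of-expo}) with $\lambda=40k/\alpha+1$ to obtain $\exp(-40k/\alpha)$ exactly, with the budget slack $\max\{40/\alpha+1,40\}$ absorbing both the deterministic and the mean parts. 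To complete your proof you would need to replace your bounded-range Chernoff step with this (or an equivalent) argument that exploits the per-arm exponential tail at rate $\Delta_i^2$ rather than the crude almost-sure bound $Z_i\le N$.
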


We now introduce the high-level ideas for proving Lemma~\ref{lem:good-event-prob} and defer the formal proofs to Appendix~\ref{sec:ub-step-1}. For any arm $i \in S$ and $C > 0$, let $\tau_{i, C}$ be the random variable representing the smallest positive integer such that $\stat_{i, \tau_{i, C}} > C$ (i.e., $\stat_{i, t} \leq C$ for all $1\leq t<\tau_{i,C}$). Intuitively, $\tau_{i, C}$ denotes the first time arm $i$ is $C$-well explored. We first show that the distribution of $\tau_{i, C}$ has an exponential tail. Hence, the sum of them with the same $C$ also has an exponential tail. Next, we show that with high probability $\sum_{i = 1}^K \tau_{i, \Lambda - k} \leq T$ and the probability vanishes exponentially as $k$ increases. In the last step, thanks to the design of the algorithm, we are able to argue that $\sum_{i = 1}^K \tau_{i, \Lambda - k} \leq T$ implies $\mathcal{F}_{\Lambda - k}$.

We now proceed to the second step of the proof of \eqref{eq:main-thm-1}. The following lemma (whose proof is deferred to Appendix~\ref{sec:ub-step-2}) gives an upper bound of regret incurred by arms in $B$ conditioned on $\mathcal{F}_C$. 

\begin{lemma} \label{lem:regret-incurred-by-bad-arm}
If $k \geq 0.1 \alpha$, then conditioned on $\mathcal{F}_{\Lambda - k}$,
\[
\mathcal{R}_B^{\LSA}(T) \leq \frac{9 \cdot   \sqrt[8\alpha]{2} }{ \sqrt[8\alpha]{2} - 1}  \cdot \sum_{i \in B}  \exp \left( - \frac{\lambda_i \Delta_i^2}{10}  + \frac{k + \alpha - \ln \alpha - 0.5}{4 \alpha} \right) .
\]
\end{lemma}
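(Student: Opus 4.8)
The plan is to establish the per-arm form of \eqref{eq:main-thm-1}: by linearity of expectation it suffices to prove, for each fixed $i\in B$, that $\Pr\bigl(\overline{\mathcal{E}}_i^{\LSA}(T)\mid\mathcal{F}_{\Lambda-k}\bigr)\le \tfrac{9\sqrt[8\alpha]{2}}{\sqrt[8\alpha]{2}-1}\exp\bigl(-\tfrac{\lambda_i\Delta_i^2}{10}+\tfrac{k+\alpha-\ln\alpha-0.5}{4\alpha}\bigr)$. Since $k\ge 0.1\alpha$, Lemma~\ref{lem:good-event-prob} gives $\Pr(\mathcal{F}_{\Lambda-k})\ge 1-e^{-4}$, so it is enough to bound the \emph{unconditional} probability $\Pr\bigl(\overline{\mathcal{E}}_i^{\LSA}(T)\cap\mathcal{F}_{\Lambda-k}\bigr)$, the conditioning costing only the harmless factor $1/(1-e^{-4})$. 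I first dispose of the easy arms by a dichotomy: using $\lambda_i\Delta_i^2=\tfrac{\Lambda-\ln\Delta_i^{-1}+\alpha}{\alpha}$, whenever $\lambda_i\Delta_i^2$ is below an explicit $O(1+k/\alpha)$ threshold the right-hand side is already $\ge 1$, and a single arm contributes at most $1$ to the regret, so there is nothing to prove; hence I may assume $\lambda_i\Delta_i^2$ exceeds that threshold. Fix such an $i$ and assume without loss of generality $\theta_i=\theta+\Delta_i$, so that a wrong decision forces $\hat\theta_{i,T_i(T)}-\theta_i<-\Delta_i$.

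The structural engine of the proof is that $\tau:=\tau_{i,\Lambda-k}$ (the first pull count with $\stat_{i,\tau}>\Lambda-k$) is a \emph{stopping time for the sample filtration of arm $i$ alone}, because $\stat_{i,t}$ depends only on $X_{i,1},\dots,X_{i,t}$; and on $\mathcal{F}_{\Lambda-k}$ one has $T_i(T)\ge\tau$. Consequently $\overline{\mathcal{E}}_i^{\LSA}(T)\cap\mathcal{F}_{\Lambda-k}\subseteq\{\exists n\ge\tau:\ \hat\theta_{i,n}-\theta_i<-\Delta_i\}$, which neutralises the fact that $T_i(T)$ itself is not adapted to arm $i$'s filtration. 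I would then decompose this event over the value of $\tau$, splitting each piece according to whether $|\hat\theta_{i,\tau}-\theta_i|<\Delta_i$ or $\ge\Delta_i$. On the first piece $\hat\Delta_{i,\tau}<2\Delta_i$, so $\stat_{i,\tau}>\Lambda-k$ forces $4\alpha\tau\Delta_i^2+0.5\ln\tau>\Lambda-k$, i.e.\ $\tau\ge t_0$ where $t_0$ solves $4\alpha t_0\Delta_i^2+0.5\ln t_0=\Lambda-k$; the crucial point is that at $t_0\approx\tfrac{\Lambda}{4\alpha\Delta_i^2}$ the term $0.5\ln t_0\approx\ln\Delta_i^{-1}$ cancels the $-\ln\Delta_i^{-1}$ buried in $\lambda_i$, so that $2t_0\Delta_i^2\gtrsim\tfrac12\lambda_i\Delta_i^2$, far more than the $\tfrac1{10}\lambda_i\Delta_i^2$ we target, and precisely the reason the $0.5\ln T_i(\cdot)$ term was inserted into \LSA. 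On the second piece only $\hat\Delta_{i,\tau}\le1$ is available, so $\stat_{i,\tau}>\Lambda-k$ merely gives $\tau\ge t_1$ with $\alpha t_1+0.5\ln t_1=\Lambda-k$; but then already at pull count $\tau\ge t_1$ the empirical mean has deviated by at least $\Delta_i$, an event of probability $\lesssim\exp(-2t_1\Delta_i^2)$ with $2t_1\Delta_i^2\gtrsim\lambda_i\Delta_i^2$, again comfortably enough.

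What remains on each piece is a probability of the form $\Pr(\exists n\ge n_0:\ \hat\theta_{i,n}-\theta_i\le-\Delta_i)$ (or its two-sided analogue). A union bound over the $O(\log T)$ dyadic scales above $n_0$ costs an extra $\ln T$, and the standard stitching still costs $\ln\ln T$; instead I would invoke the \emph{Variable Confidence Level Bound}, whose role is exactly to make this supremum-over-all-pull-counts concentration lose only a constant factor. Organising the dyadic blocks so that the block starting at $n_0 2^j$ contributes at most $\exp(-n_0\Delta_i^2\,2^{j})$ via Hoeffding's maximal inequality on that block, the geometric-type sum $\sum_{j\ge0}\exp(-n_0\Delta_i^2\,2^j)$ is at most $\tfrac{1}{1-\exp(-n_0\Delta_i^2)}=\tfrac{\sqrt[8\alpha]{2}}{\sqrt[8\alpha]{2}-1}$ once $n_0\Delta_i^2\ge\tfrac{\ln2}{8\alpha}$, which is where the $8\alpha$ comes from and which holds for the arms we retained. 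Combining the two pieces and folding into $\exp\bigl(\tfrac{k+\alpha-\ln\alpha-0.5}{4\alpha}\bigr)$ the accumulated slack — the gap between $\Lambda-k$ and $\Lambda$, the factor $2$ lost through $\hat\Delta_{i,\tau}<2\Delta_i$, the rounding of $t_0$ and $t_1$, and the arithmetic constant $\alpha-\ln\alpha-0.5$ — delivers the per-arm bound, hence the lemma.

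The step I expect to be the main obstacle is the last one: extracting the supremum-over-all-time concentration with only a constant rather than $\ln\ln T$ loss, which is exactly what the Variable Confidence Level Bound is for and is indispensable here, together with the delicate bookkeeping needed to make the $0.5\ln t_0\approx\ln\Delta_i^{-1}$ cancellation come out with the precise constant $1/10$ in the exponent and the ratio $2^{-1/(8\alpha)}$ in the geometric series. The measure-theoretic point that $T_i(T)$ is not a stopping time for arm $i$'s own filtration is genuine but is handled cleanly by enlarging to the event governed by the true stopping time $\tau_{i,\Lambda-k}$.
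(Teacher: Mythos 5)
Your skeleton shares several correct structural observations with the paper's proof: the reduction to a per-arm bound, the fact that conditioning on $\mathcal{F}_{\Lambda-k}$ costs only a factor $1/(1-e^{-4})$ via Lemma~\ref{lem:good-event-prob}, the dichotomy that discards arms for which the right-hand side already exceeds $1$, and — crucially — the fact that on $\mathcal{F}_{\Lambda-k}$ one has $T_i(T)\geq\tau_{i,\Lambda-k}$, which is exactly the mechanism behind the paper's Lemma~\ref{lem:enough-pulls}. However, your binary case split at the stopping time $\tau=\tau_{i,\Lambda-k}$ contains a genuine gap in the second case. There you bound $\Pr\bigl(|\hat\theta_{i,\tau}-\theta_i|\geq\Delta_i,\ \tau\geq t_1\bigr)$ by $\Pr\bigl(\exists n\geq t_1:|\hat\theta_{i,n}-\theta_i|\geq\Delta_i\bigr)\lesssim\exp(-2t_1\Delta_i^2)$ with $\alpha t_1+0.5\ln t_1=\Lambda-k$, and claim $2t_1\Delta_i^2\gtrsim\lambda_i\Delta_i^2$. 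This is false for small-gap arms in $B$: take $\alpha=1$, $\Lambda=100$, $\Delta_i=e^{-50}$, so that $i\in B$ and $\lambda_i\Delta_i^2=51$, while $t_1\approx 99$ and $2t_1\Delta_i^2\approx 198e^{-100}\approx 0$. The bound $\exp(-2t_1\Delta_i^2)\approx 1$ is vacuous, and indeed the event itself is not rare — after only $\sim(\Lambda-k)/\alpha$ pulls the empirical mean typically deviates by $\Theta(1/\sqrt{t_1})\gg\Delta_i$, so ``the deviation at time $\tau$ exceeds $\Delta_i$'' happens with probability close to $1$. The target $\exp(-\lambda_i\Delta_i^2/10)\approx e^{-5.1}$ cannot be recovered from this case.

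The underlying problem is that $\Delta_i$ is the wrong deviation threshold at small pull counts. The paper instead works with the event $\mathcal{M}_{i,\varkappa}$, a confidence band whose width at pull count $t$ is $\sqrt{(\lambda_i\Delta_i^2/5-\varkappa/2+\tfrac{1}{4\alpha}\ln(\lambda_i/t))/t}$ — much wider than $\Delta_i$ for small $t$, so that $\Pr(\overline{\mathcal{M}}_{i,\varkappa})\lesssim\exp(-\lambda_i\Delta_i^2/10+\varkappa/4)$ by the Variable Confidence Level Bound, yet tight enough that on $\mathcal{M}_{i,\varkappa}$ one has $\stat_{i,t}\leq\Lambda-k$ for \emph{all} $t<\lambda_i/20$ (Lemma~\ref{lem:enough-pulls}), hence $\tau_{i,\Lambda-k}\geq\lambda_i/20$ and, with $\mathcal{F}_{\Lambda-k}$, $T_i(T)\geq\lambda_i/20$. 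In other words, the uniform band is used to deterministically lower-bound the stopping time over a continuum of admissible deviations, not to classify the deviation at $\tau$ as ``small'' or ``large''; the two-point split cannot reproduce this. Once $T_i(T)\geq\lambda_i/20$ is secured, the paper finishes the classification error with a plain Chernoff--Hoeffding bound summed over the value of $T_i(T)$, so the VCLB is needed only for $\Pr(\overline{\mathcal{M}}_{i,\varkappa})$ rather than for a maximal inequality on the error event as in your plan. Your first case is essentially sound (the cancellation $0.5\ln t_0\approx\ln\Delta_i^{-1}$ does work out, since the residual $\ln(t_0\Delta_i^2)$ term is self-referentially dominated), but the lemma does not follow without repairing the second case along the lines of $\mathcal{M}_{i,\varkappa}$.
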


As mentioned before, the key to proving Lemma~\ref{lem:regret-incurred-by-bad-arm} is to pin down the quantitative relation between the event $\mathcal{F}_C$ and the probability of mis-classifying an arm conditioned on $\mathcal{F}_C$, then the expected regret upper bound can be achieved by summing up the mis-classifying probability for all arms in $B$. 

A key technical challenge in our analysis is to design a concentration bound for the empirical mean of an arm (namely arm $i$) that uniformly holds over all time periods. A typical method is to let the length of the confidence band scale linearly with $\sqrt{1 / t}$, where $t$ is the number of samples made for the arm. However, this would worsen the failure probability, and lead to an extra $\ln \ln T$  factor in the regret upper bound. To reduce the iterated logarithmic factor, we introduce a novel uniform concentration bound where the ratio between the length of the confidence band and $\sqrt{1 / t}$ is almost constant for large $t$, but becomes larger for  smaller $t$. Since this ratio is related to the confidence level of the corresponding confidence band, we refer to this new concentration inequality as the \emph{Variable Confidence Level Bound}. More specifically, in Appendix~\ref{sec:proof-ub-step2-1}, we prove the following lemma.

\begin{namedthm*}{Lemma~\ref{lem:variational-lil} \textnormal{({\it Variable Confidence Level Bound}, { pre-stated})}} 
Let $X_1, \dots, X_L$ be \textit{i.i.d.}\ random variables supported on $[0, 1]$ with mean $\mu$. For any $a > 0$ and $b > 0$, it holds that
\vspace{-1ex}
\[
\Pr\left( \forall t \in [1, L], \left| \frac{1}{t} \sum_{i = 1}^{t} X_i - \mu \right| \leq  \sqrt{ \frac{ a + b \ln (L/t) }{t} } \right) \geq 1 - \frac{2^{b/2+2} }{ 2^{b/2} - 1}  \exp (- a/2).
\] 
\vspace{-1ex}
\end{namedthm*}

This new inequality greatly helps the analysis of our algorithm, where the intuition is that when conditioned on the event $\mathcal{F}_C$, it is much less likely that fewer number of samples are conducted for arm $i$, and therefore we can afford a less accurate (i.e.\ bigger) confidence band for its mean value. 

It is notable that a similar idea is also adopted in the analysis of the MOSS algorithm  \cite{DBLP:conf/colt/AudibertB09} which gives the asymptotically optimal regret bound for the ordinary multi-armed bandits. However, our Variable Confidence Level Bound is more general and may be useful in other applications. We additionally remark that in the celebrated Hoeffding's Maximal Inequality, the confidence level also changes with time. However, the blow-up factor made to the confidence level in our inequality is only the logarithm of that of the Hoefdding's Maximal Inequality. Therefore, if constant factors are ignored, our inequality strictly improves Hoeffding's Maximal Inequality.

The formal proof of Theorem~\ref{thm:upper-bound} involves a few technical tricks to combine Lemma~\ref{lem:good-event-prob} and Lemma~\ref{lem:regret-incurred-by-bad-arm} to deduce the final regret bound, and is deferred to Appendix~\ref{app:proof-main-thm-ub}. The lower bound theorem (Theorem~\ref{thm:lower-bound}) that complements Theorem~\ref{thm:upper-bound} is deferred to Appendix~\ref{sec:lower-bound} due to space constraints.

\vspace{-2ex}
\section{Experiments}\label{sec:experiments}
\vspace{-2ex}

In our experiments, we assume that each arm follows independent Bernoulli distributions with different means. To guarantee a fair comparison, we vary the total number of samples $T$ and compare the empirical average aggregate regret on a logarithmic scale which is averaged over $5000$ independent runs. We consider three different choices of $\{ \theta_i \}_{i \in S}$:

\begin{enumerate}
  \setlength{\itemsep}{0pt}
  \setlength{\parskip}{0pt}
\item[1.] (arithmetic progression I). $K = 10$; $\theta_{1:4} = 0.2 + (0:3) \cdot 0.05, \theta_5 = 0.45, \theta_6 = 0.55$, and $\theta_{7:10} = 0.65 + (0:3) \cdot 0.05$ (see Setup 1 in Figure~\ref{fig:exp1}).
\item[2.](arithmetic progression II). $K = 20$; $\theta_{1:20} = 0.405 + (i - 1) / 100$ (see Setup 2 in Figure~\ref{fig:exp1}).
\item[3.] (two-group setting). $K = 10$; $\theta_{1:5} = 0.45$, and $\theta_{6:10} = 0.505$ (see Setup 3 in Figure~\ref{fig:exp1}).
\end{enumerate}

\begin{figure*}[t]
\centering
  \includegraphics[width=\textwidth]{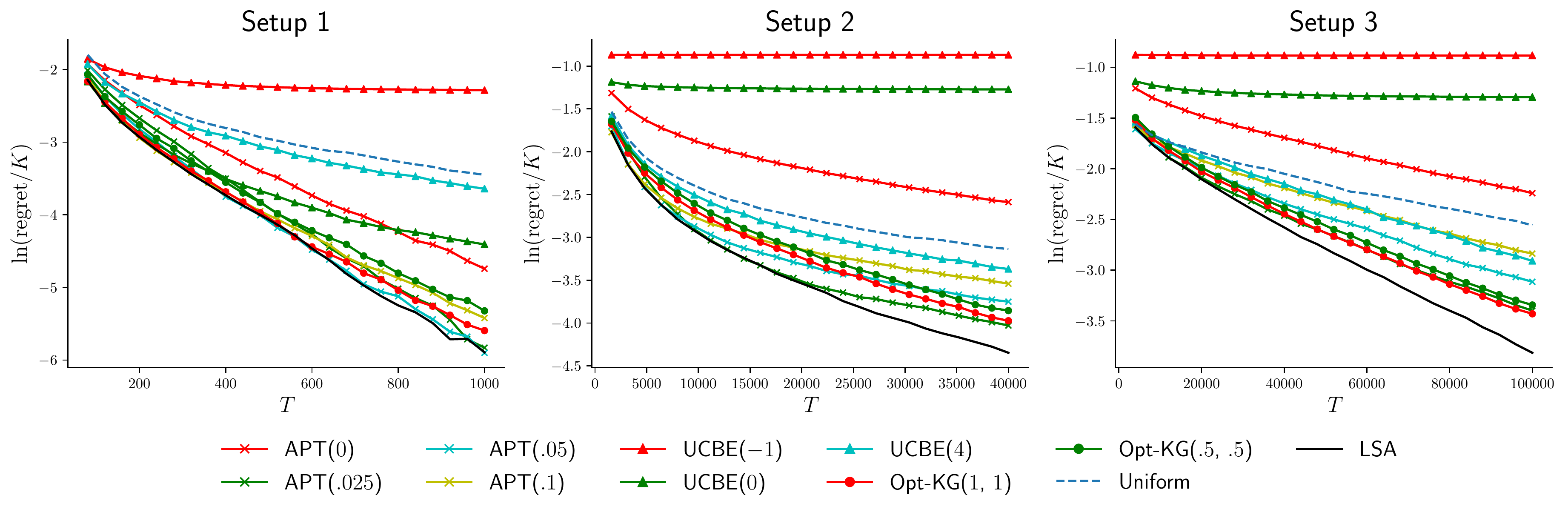}
\caption{Average aggregate regret on a logarithmic scale for different settings.}
\label{fig:exp1}
\end{figure*}

\vspace{-2ex}

In our experiments, we fix $\theta = 0.5$. We notice that the choice of $\alpha$ in our \LSA  is quite robust (see Appendix~\ref{app:more-experiments-b} for experimental results). To illustrate the performance, we fix $\alpha = 1.35$ in \LSA and compare it with four existing algorithms for the TBP problem under a variety of settings. Now we discuss these algorithms and their parameter settings in more details.
\begin{itemize}
  \setlength{\itemsep}{0pt}
  \setlength{\parskip}{0pt}
  \setlength{\parindent}{0pt}
\item{\bf Uniform:} Given the budget $T$, this method pulls each arm sequentially from $1$ to $K$ until budget $T$ is reached such that each arm is sampled roughly $T/K$ times. Then it outputs $\theta_i \geq \theta$ when $\hat{\theta}_i \geq \theta$.  
\item\textbf{APT($\eps$):} Introduced and analyzed in \cite{locatelli2016optimal}, this algorithm aims to output a set of arms ($\{i \in S : \hat{\mu}_i \geq \theta \}$) serving as an estimate of the set of arms with means over $\theta + \eps$. The natural adaptation of the APT algorithm to  our problem corresponds to changing the output: it outputs $\theta_i \geq \theta$ if $\hat{\theta}_i \geq \theta$ and $\theta_i < \theta$ otherwise. In the experiments, we test the following  choices of $\eps$: $0$, $0.025$, $0.05$, and $0.1$.
\item\textbf{UCBE($i$):} Introduced and analyzed in \cite{audibert2010best}, this algorithm aims to identify the best arm (the arm with the largest mean reward). A natural adaptation of this algorithm to TBP is for each time $t$, it pulls $\argmin_{i \in S} ( \hat{\Delta}_i - \sqrt{{a}/{T_i(t - 1)}}) $ where $a$ is a tuning parameter. In \cite{audibert2010best}, it has been proved optimal when $a = \frac{25}{36} \frac{T - K}{H}$ where $H = \sum_{i \in S} \frac{1}{\Delta_i^2}$. Here we set $a = 4^i \frac{T - K}{H}$ and test three different choices of $i$: $-1$, $0$, and $4$.
\item\textbf{Opt-KG($a$, $b$):} Introduced in \cite{chen2015statistical}, this algorithm also aims to minimize the aggregate regret. It models TBP as a Bayesian Markov decision process where $\{ \theta_i \}_{i \in S}$ is assumed to be drawn from a known Beta prior $\mathrm{Beta}(a, b)$. Here we choose two different priors: $\mathrm{Beta}(1, 1)$ (uniform prior) and $\mathrm{Beta}(0.5, 0.5)$ (Jeffreys prior).
\end{itemize}

\vspace{-2ex}

\paragraph{Comparisons.}
In Setup 1, which is a relatively easy setting, \LSA works best among all choices of budget $T$. With the right choice of parameter, APT and Opt-KG also achieve satisfactory performance. Though the performance gaps appear to be small, two-tailed paired t-tests of aggregate regrets indicate that \LSA is significantly better than most of the other methods, except APT(.05) and APT(.025) (see Table~\ref{tab:p-values} in Appendix~\ref{app:more-experiments-c}).

In Setup 2 and 3, where ambiguous arms close to the threshold $\theta$ are presented, the performance difference between \LSA and other methods is more noticeable. \LSA consistently outperforms other methods in both settings over almost all choices of budget $T$ with statistical significance. It is worth noting that, though APT works also reasonably well in Setup 2 when $T$ is small, the best parameter $\eps$ is different from that for bigger $T$ and other setups. On the other hand, the parameters chosen in \LSA are fixed across all setups, indicating that our algorithm is more robust.

We perform additional experiments that due to space limitations are included in Appendix~\ref{app:more-experiments-a}. In all setups, \LSA outperforms its competitors with various parameter choices.

\vspace{-2ex}
\section{Conclusion}
\vspace{-2ex}

In this paper we introduce an algorithm that minimizes the aggregate regret for the thresholding bandit problem. Our algorithm \LSA makes use of a novel approach inspired by the optimal allocation scheme of the budget when the reward gaps are known ahead of time. When compared to APT, \LSA uses an additional term, similar in spirit to the UCB-type algorithms though mathematically different, that encourages the exploration of arms that have bigger gaps, and/or those have not been sufficiently explored. Moreover, \LSA is anytime and robust, while the precision parameter $\epsilon$ needed in the APT algorithm is highly sensitive and hard to choose. Besides showing empirically that \LSA performs better than APT for different values of $\epsilon$ and other algorithms in a variety of settings, we also employ novel proof ideas that eliminate the logarithmic terms usually brought in by the straightforward union bound argument, design the new \emph{Variable Confidence Level Bound} that strictly improves the celebrated Hoeffding's Maximal inequality, and prove that $\LSA$ achieves instance-wise asymptotically optimal aggregate regret.

\bibliography{chaotao}

\newpage

\appendix

{\bf \LARGE \centering Appendix \par}

\section{Probability Tools}

\begin{prop}[Chernoff-Hoeffding Inequality \cite{hoeffding1963probability}]  \label{prop:chernoff-hoeff}
Let $\{ X_i \}_{i \in [t]}$ be a list of independent random variables supported on $[0, 1]$ and set $X = \frac{1}{t} \sum_{i = 1}^t X_i$. Then, for every $\eps > 0$, it holds that
\[
\Pr( |X - \E[X] | \geq \eps ) \leq 2 \exp( -2 t \eps^2 ).
\]
\end{prop}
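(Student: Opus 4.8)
The plan is to establish this via the classical Chernoff exponential-moment method together with Hoeffding's Lemma. First I would split the two-sided deviation into two one-sided tails by a union bound,
\[
\Pr(|X - \E[X]| \geq \eps) \leq \Pr(X - \E[X] \geq \eps) + \Pr(\E[X] - X \geq \eps),
\]
and argue that the two terms are handled identically (replacing each $X_i$ by $1 - X_i$, which is again supported on $[0,1]$, swaps one tail for the other). Thus it suffices to bound the upper tail by $\exp(-2t\eps^2)$; the factor of $2$ in the statement then comes from summing the two symmetric contributions.

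For the upper tail, I would apply the exponential Markov inequality: for any $s > 0$,
\[
\Pr(X - \E[X] \geq \eps) \leq e^{-s\eps}\,\E\!\left[\exp\big(s(X - \E[X])\big)\right].
\]
Writing $s(X - \E[X]) = \sum_{i=1}^t (s/t)(X_i - \E[X_i])$ and using independence factorizes the moment generating function into a product of $t$ single-variable terms. The main work is then the single-variable estimate (Hoeffding's Lemma): for a zero-mean random variable $Y$ supported on an interval of length $1$, one has $\E[e^{uY}] \leq \exp(u^2/8)$ for all $u \in \mathbb{R}$. Applying this with $u = s/t$ to each centered variable $X_i - \E[X_i]$ (whose range has length at most $1$ since $X_i \in [0,1]$) yields
\[
\E\!\left[\exp\big(s(X - \E[X])\big)\right] \leq \prod_{i=1}^t \exp\!\left(\frac{s^2}{8t^2}\right) = \exp\!\left(\frac{s^2}{8t}\right).
\]

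Combining the last two displays gives $\Pr(X - \E[X] \geq \eps) \leq \exp(-s\eps + s^2/(8t))$, and I would then optimize the free parameter over $s > 0$. The exponent is minimized at $s = 4t\eps$, where it equals $-2t\eps^2$, producing the one-sided bound $\exp(-2t\eps^2)$ and hence the claim after doubling. The step I expect to be the genuine obstacle is Hoeffding's Lemma itself: proving $\E[e^{uY}] \leq \exp(u^2/8)$ requires analyzing the cumulant generating function $\psi(u) \eqdef \ln \E[e^{uY}]$, checking that $\psi(0) = \psi'(0) = 0$, and bounding $\psi''(u) \leq 1/4$ uniformly by recognizing $\psi''(u)$ as the variance of $Y$ under an exponentially tilted measure, which is supported on an interval of length $1$ and therefore has variance at most $1/4$. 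A second-order Taylor expansion of $\psi$ with this curvature bound then delivers $\psi(u) \leq u^2/8$. Everything else is a routine optimization and union-bound bookkeeping.
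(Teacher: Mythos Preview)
Your proof is correct and is precisely the classical Chernoff--Hoeffding argument: union bound for the two-sided tail, exponential Markov, factorization by independence, Hoeffding's Lemma $\E[e^{uY}] \leq e^{u^2/8}$ for a centered variable on an interval of length~$1$, and optimization at $s = 4t\eps$. The paper itself does not supply a proof; it simply quotes the inequality as a standard tool with a citation to Hoeffding~(1963), so there is nothing to compare against and your argument stands on its own.
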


\begin{prop}[Restatement of Theorem~5.1(ii) in \cite{janson2018tail}] \label{prop:tail-of-sum-of-expo}
Let $\{X_i\}_{i \in [t]}$ be a list of independent random variables such that $\Pr( X_i > x ) \leq \exp(- a_i x)$ for $x > 0$. And let $\mu = \sum_{i = 1}^t \frac{1}{a_i}$. Then for any $\lambda \geq 1$, it holds that
\[
\Pr( X \geq \lambda \mu ) \leq \exp(1 - \lambda).
\]
\end{prop}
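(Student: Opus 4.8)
The plan is to reduce the general tail hypothesis to the case of genuine exponentials via the moment generating function, and then to prove the sharp tail bound for a sum of independent exponentials by a Chernoff argument augmented with a polynomial (pre-exponential) correction. Throughout I write $X=\sum_{i=1}^t X_i$.

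\emph{Step 1 (reduction to exponentials).} First I would control each $\E[e^{sX_i}]$ for $0<s<a_i$ through the layer-cake identity $\E[e^{sX_i}]=\int_0^\infty \Pr(e^{sX_i}>u)\,du$. Splitting the integral at $u=1$ and using the hypothesis $\Pr(X_i>x)\leq\exp(-a_ix)$ (so that $\Pr(e^{sX_i}>u)\leq u^{-a_i/s}$ for $u>1$) gives $\E[e^{sX_i}]\leq 1+\frac{s}{a_i-s}=(1-s/a_i)^{-1}$, which is exactly the MGF of an $\mathrm{Exp}(a_i)$ variable. Since every subsequent step uses only an upper bound on the MGF, this lets me assume without loss of generality that each $X_i\sim\mathrm{Exp}(a_i)$, so that $\mu=\sum_i 1/a_i=\E[X]$ and $X$ is a sum of independent exponentials.

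\emph{Step 2 (Chernoff with an explicit tilt).} Let $a_*=\min_i a_i$ and note $a_*\mu=a_*\sum_i 1/a_i\geq1$. I would apply the Chernoff bound with the explicit choice $s=a_*(1-1/\lambda)$, which is admissible since $s<a_*\leq a_i$ for all $i$. By independence, $\Pr(X\geq\lambda\mu)\leq \exp(-s\lambda\mu)\prod_i(1-s/a_i)^{-1}$. Setting $r_i=a_*/a_i\in(0,1]$ (so $\sum_i r_i=a_*\mu$), the log-product term equals $\sum_i g(r_i)$ with $g(r)=-\ln(1-r(1-1/\lambda))$; since $g$ is convex with $g(0)=0$, we have $g(r)\leq r\,g(1)=r\ln\lambda$, whence $\sum_i g(r_i)\leq a_*\mu\ln\lambda$. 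This yields $\Pr(X\geq\lambda\mu)\leq\exp(-a_*\mu(\lambda-1-\ln\lambda))$, and because $a_*\mu\geq1$ and $\lambda-1-\ln\lambda\geq0$ for $\lambda\geq1$, already $\Pr(X\geq\lambda\mu)\leq\lambda\,e^{1-\lambda}$.

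\emph{Step 3 (the polynomial correction, the main obstacle).} The previous step overshoots the target by a stray factor of $\lambda$, and removing it is the crux. I would pass to the exponentially tilted measure $\tilde P$ defined by $d\tilde P\propto e^{sX}\,dP$ and use the exact identity $\Pr(X\geq x)=\E[e^{sX}]\cdot\E_{\tilde P}[e^{-sX}\,\mathbb{I}_{\{X\geq x\}}]$ with $x=\lambda\mu$. The chosen tilt makes $\E_{\tilde P}[X]=x$, and bounding the tilted tail integral $\E_{\tilde P}[e^{-s(X-x)}\mathbb{I}_{\{X\geq x\}}]$ by $\tilde f(x)/s$, valid because the tilted density $\tilde f$ is nonincreasing beyond its mean $x$, extracts exactly the multiplicative factor $1-s/a_*=1/\lambda$. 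Combining with Step 2 gives $\Pr(X\geq\lambda\mu)\leq\frac1\lambda\exp(-a_*\mu(\lambda-1-\ln\lambda))\leq e^{1-\lambda}$. The chief difficulty is that the tilted sum of heterogeneous exponentials is not itself exponential, so the density-monotonicity argument must be justified for the convolution (or replaced by a Bahadur--Rao-type estimate for the boundary term). Should a self-contained treatment of the polynomial correction prove too delicate, I would instead invoke the sharp sum-of-exponentials tail estimate of \cite{janson2018tail} directly and deduce $e^{1-\lambda}$ from it using only $a_*\mu\geq1$.
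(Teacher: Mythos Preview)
The paper does not supply its own proof of this proposition: it is listed among the ``Probability Tools'' in the appendix and simply cited as Theorem~5.1(ii) of \cite{janson2018tail}. So there is no in-paper argument to compare against; your fallback in Step~3 of invoking Janson directly is, in effect, exactly what the paper does.

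Regarding your attempted self-contained proof: Steps~1 and~2 are correct and already give $\Pr(X\geq\lambda\mu)\leq \lambda\,e^{1-\lambda}$. Step~3, however, does not work as written. Your chosen tilt $s=a_*(1-1/\lambda)$ does \emph{not} make $\E_{\tilde P}[X]=\lambda\mu$ except in the homogeneous case $a_i\equiv a_*$: under the tilt each $X_i\sim\mathrm{Exp}(a_i-s)$, so $\E_{\tilde P}[X]=\sum_i(a_i-s)^{-1}$, and this equals $\lambda\sum_i a_i^{-1}$ only when all rates coincide. Even granting log-concavity (hence unimodality) of the tilted hypoexponential density, the claim that the boundary term $\tilde f(x)/s$ extracts \emph{exactly} the factor $1-s/a_*=1/\lambda$ is unsupported; you would need a quantitative estimate on $\tilde f(\lambda\mu)$, and for heterogeneous rates there is no clean closed form. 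In short, the Bahadur--Rao correction you sketch is tailored to the i.i.d.\ case and does not transfer as stated. If you want a self-contained argument, it is cleaner to treat the i.i.d.\ (gamma) case exactly and reduce the heterogeneous case to it, as Janson does, rather than to push the saddlepoint correction through the heterogeneous convolution.
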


\begin{prop}[Hoeffding's Maximal Inequality \cite{hoeffding1963probability}] \label{prop:hoeff-max-inequ} 
Let $\{ X_i \}_{i \in [t]}$ be a list of \textit{i.i.d.}\ random variables supported on $[0, 1]$ and set $\mu = \E[ X_1 ]$. Then, for any $\eps > 0$, it holds that
\[
\Pr( \forall i \in [t], X_1 + X_2 + \cdots + X_i \geq i \mu+ \eps ) \leq \exp\left( -\frac{2\eps^2}{t} \right).
\]
\end{prop}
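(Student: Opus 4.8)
The plan is to prove this via the exponential (Chernoff) method lifted from a single partial sum to the whole trajectory using Doob's maximal inequality for submartingales. Throughout I interpret the event as $\{\exists\, i \in [t] : X_1 + \cdots + X_i \geq i\mu + \eps\}$, i.e.\ that the \emph{maximal} partial sum exceeds its mean by $\eps$; this is the standard form of the maximal inequality (and what makes the statement nontrivial), and it is the form used in peeling arguments such as the one behind the Variable Confidence Level Bound.

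First I would center the variables: set $S_i \eqdef \sum_{j=1}^{i}(X_j - \mu)$ with $S_0 = 0$, and let $\mathcal{F}_i = \sigma(X_1, \dots, X_i)$ be the natural filtration. Since the $X_j$ are i.i.d.\ with mean $\mu$, $\{S_i\}$ is a mean-zero martingale. For a free parameter $s > 0$ to be optimized later, define $M_i \eqdef \exp(s S_i)$. The key structural observation is that $\{M_i\}$ is a nonnegative submartingale: conditioning on $\mathcal{F}_i$ gives $\E[M_{i+1} \mid \mathcal{F}_i] = M_i \cdot \E[\exp(s(X_{i+1}-\mu))] \geq M_i$, where the inequality is Jensen's inequality applied to the convex map $x \mapsto e^{sx}$ together with $\E[X_{i+1}-\mu]=0$.

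Next I would apply Doob's maximal inequality to the nonnegative submartingale $\{M_i\}_{i \le t}$: for any $\lambda > 0$, $\Pr(\max_{i \le t} M_i \geq \lambda) \leq \E[M_t]/\lambda$. Since $x \mapsto e^{sx}$ is increasing, the event $\{\exists\, i \le t: S_i \geq \eps\}$ coincides with $\{\max_{i\le t} M_i \geq e^{s\eps}\}$, so taking $\lambda = e^{s\eps}$ yields $\Pr(\exists\, i \le t: S_i \geq \eps) \leq e^{-s\eps}\,\E[M_t]$. To control $\E[M_t]$ I would use independence to factorize $\E[M_t] = \prod_{j=1}^t \E[\exp(s(X_j-\mu))]$ and then Hoeffding's lemma, which for a $[0,1]$-valued centered variable gives $\E[\exp(s(X_j-\mu))] \leq \exp(s^2/8)$; hence $\E[M_t] \leq \exp(ts^2/8)$ and $\Pr(\exists\, i \le t: S_i \ge \eps) \le \exp(ts^2/8 - s\eps)$.

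Finally I would optimize the exponent over $s$. Minimizing $ts^2/8 - s\eps$ gives $s = 4\eps/t$, at which the exponent equals $-2\eps^2/t$, delivering the claimed bound $\exp(-2\eps^2/t)$. I do not expect a genuine obstacle, as this is the classical Chernoff-plus-Doob recipe, but two steps must be handled with care: (i) verifying the submartingale property so that Doob's inequality applies, since this is precisely what upgrades a pointwise Chernoff bound into a uniform-over-$i$ maximal bound rather than a bound on the terminal sum $S_t$ alone; and (ii) invoking the sharp constant $1/8$ in Hoeffding's lemma for $[0,1]$ variables, which is what recovers the tight constant $2$ in the final exponent instead of a weaker one.
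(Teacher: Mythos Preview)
The paper does not give its own proof of this proposition; it is stated without proof in the probability-tools appendix and attributed to Hoeffding (1963). Your argument via Doob's submartingale maximal inequality applied to $M_i = e^{sS_i}$, followed by Hoeffding's lemma and optimization in $s$, is the standard and correct proof, and it recovers the sharp constant $2$ in the exponent.

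Your reinterpretation of the event as $\{\exists\, i \in [t]: S_i \ge i\mu + \eps\}$ rather than the printed $\{\forall\, i: \dots\}$ is also the right call. The literal $\forall$-statement is trivial, since the event $\{\forall i: S_i \ge i\mu+\eps\}$ is contained in $\{S_t \ge t\mu+\eps\}$, to which ordinary Hoeffding already gives $\exp(-2\eps^2/t)$. What the paper actually \emph{uses} in the proof of the Variable Confidence Level Bound is the maximal ($\exists$) form: there one needs $\Pr(\exists j \le t: |\sum_{\ell=1}^{j}(X_\ell-\mu)| > \eps) \le 2\exp(-2\eps^2/t)$ to control all partial sums in a dyadic block simultaneously. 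So the $\forall$ in the proposition is evidently a typo for $\exists$, and your proof establishes exactly the version that is needed.
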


\begin{prop}[Restatement of Lemma 2.6 in \cite{Tsybakov:2008:INE:1522486}] \label{prop:lower-bound-technique}
  Let $P$ and $Q$ be two probability distributions supported on some set $\mathcal{X}$. Then for every set $A \subset \mathcal{X}$, one has
  \begin{equation*}
    \Pr\nolimits_{X \sim P }(A) + \Pr\nolimits_{X \sim Q }(\overline{A}) \geq \frac{1}{2} \exp(-\KL(P \parallel Q)),
  \end{equation*}
where $\overline{A}$ denotes the complement of $A$ and $\KL$ denotes the Kullback-Leibler divergence between $P$ and $Q$ given by 
  \[ 
  \KL(P \parallel Q) \eqdef \sum_{x \in \mathcal{X}} P(x) \ln \left( \frac{P(x)}{Q(x)} \right).
  \]
\end{prop}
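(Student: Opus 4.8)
The plan is to prove the inequality through the Bhattacharyya (Hellinger) affinity $\rho \eqdef \sum_{x \in \mathcal{X}} \sqrt{P(x)Q(x)}$, via a three-step chain of lower bounds. Since the statement writes $\KL$ as a sum, I work in the discrete setting with $p(x) = P(x)$ and $q(x) = Q(x)$; the same argument carries over in general by passing to densities with respect to the dominating measure $P + Q$. If $P$ is not absolutely continuous with respect to $Q$, then $\KL(P \parallel Q) = \infty$, the right-hand side is $0$, and the claim holds trivially; so I may assume $\KL(P \parallel Q) < \infty$.

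First I would reduce the left-hand side to the \emph{overlap} $\sum_x \min(p(x), q(x))$, which no longer depends on $A$. For any $A \subseteq \mathcal{X}$,
\[
\Pr\nolimits_{X \sim P}(A) + \Pr\nolimits_{X \sim Q}(\overline{A}) = \sum_{x \in A} p(x) + \sum_{x \in \overline{A}} q(x) \geq \sum_{x \in A} \min(p,q) + \sum_{x \in \overline{A}} \min(p,q) = \sum_{x \in \mathcal{X}} \min(p(x),q(x)).
\]
It therefore suffices to lower bound the total overlap.

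Second, I would relate the overlap to the affinity by Cauchy--Schwarz. Factoring $\sqrt{pq} = \sqrt{\min(p,q)}\,\sqrt{\max(p,q)}$ pointwise and applying Cauchy--Schwarz gives $\rho^2 \leq \left(\sum_x \min(p,q)\right)\left(\sum_x \max(p,q)\right)$. Since $\max(p,q) = p + q - \min(p,q)$, the second factor equals $2 - \sum_x \min(p,q) \leq 2$, so $\sum_x \min(p,q) \geq \frac{1}{2}\rho^2$. Third, I would bound $\rho$ below by Jensen's inequality: writing $\rho = \E_{X \sim P}\!\left[\sqrt{q(X)/p(X)}\right]$ and using concavity of $\ln$,
\[
\ln \rho \geq \E_{X \sim P}\!\left[\tfrac{1}{2}\ln\tfrac{q(X)}{p(X)}\right] = -\tfrac{1}{2}\KL(P \parallel Q),
\]
so $\rho^2 \geq \exp(-\KL(P \parallel Q))$. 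Chaining the three bounds yields $\Pr_P(A) + \Pr_Q(\overline{A}) \geq \frac{1}{2}\rho^2 \geq \frac{1}{2}\exp(-\KL(P \parallel Q))$, as claimed.

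I expect the main obstacle to be the Cauchy--Schwarz step, which is the one genuinely nonobvious maneuver: factoring $\sqrt{pq}$ into $\sqrt{\min}\cdot\sqrt{\max}$ and exploiting $\sum_x \max(p,q) \leq 2$ is precisely what converts the geometric-mean affinity into a handle on the overlap. The only remaining care is bookkeeping at points where $p(x) = 0$ or $q(x) = 0$: such points contribute $0$ to both $\rho$ and the overlap, and the convention $0 \cdot \ln 0 = 0$ in $\KL$ keeps the Jensen step well defined.
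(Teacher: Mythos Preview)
Your proof is correct. The paper does not supply its own proof of this proposition; it is quoted verbatim as a probability tool from Tsybakov's textbook, and your argument---reducing to the overlap $\sum_x \min(p,q)$, applying Cauchy--Schwarz to pass to the squared Hellinger affinity, and then Jensen to reach $\exp(-\KL)$---is precisely the standard proof given there.
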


\begin{prop}[Restatement of Lemma 15.1 in \cite{LS18}] \label{prop:divergence-decomposition}
  Let $v = P_1 \otimes \cdots \otimes P_K$ and $v' = P'_1 \otimes \cdots \otimes P'_K$ be the reward distributions of two $K$-armed bandits. Assuming $\KL( P_i, P_i' ) < +\infty$ for any arm $i \in [K]$. Fix some policy $\pi$ and let $\Pr_v = \Pr_{v\pi}$ and $\Pr_{v} = \Pr_{v'\pi}$ be the two probability measures induced by the $n$-round interconnection of $\pi$ and $v$ (respectively, $\pi$ and $v'$). Then
  \begin{equation*}
    \KL( \Pr\nolimits_v \parallel \Pr\nolimits_{v'} ) = \sum_{i = 1}^K \E\nolimits_v[ T_i(n) ] \cdot \KL( P_i\parallel P_i' ),
  \end{equation*}
  where $T_i(n)$ is the random variable denoting the number of times arm $i$ is pulled.
\end{prop}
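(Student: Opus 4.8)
The plan is to expand the KL divergence by the chain rule for the joint law of the $n$-round interaction and to exploit the fact that the policy contributes identically under both environments. First I would fix the canonical trajectory space of outcomes $\tau = (A_1, X_1, \dots, A_n, X_n)$, where $A_t$ is the arm pulled and $X_t$ the reward observed at round $t$, and write $H_{t-1} = (A_1, X_1, \dots, A_{t-1}, X_{t-1})$ for the history. Since the policy $\pi$ selects $A_t$ from $H_{t-1}$ by a rule $\pi_t(\cdot \mid H_{t-1})$ that does not depend on the underlying reward distributions, both measures factor, with respect to a common dominating product measure, as
\[
p_v(\tau) = \prod_{t=1}^n \pi_t(A_t \mid H_{t-1}) \, P_{A_t}(X_t), \qquad p_{v'}(\tau) = \prod_{t=1}^n \pi_t(A_t \mid H_{t-1}) \, P'_{A_t}(X_t).
\]
Taking the log-likelihood ratio, every policy factor $\pi_t(A_t \mid H_{t-1})$ cancels, leaving $\ln\frac{p_v(\tau)}{p_{v'}(\tau)} = \sum_{t=1}^n \ln\frac{P_{A_t}(X_t)}{P'_{A_t}(X_t)}$, so by definition $\KL(\Pr\nolimits_v \parallel \Pr\nolimits_{v'}) = \E\nolimits_v\big[\sum_{t=1}^n \ln\frac{P_{A_t}(X_t)}{P'_{A_t}(X_t)}\big]$.

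The second step is a per-round conditioning argument. Conditioning on the event $\{A_t = i\}$, the factorized form of $p_v$ shows that $X_t$ is drawn from $P_i$, so the inner conditional expectation equals $\E\nolimits_{X \sim P_i}[\ln\frac{P_i(X)}{P'_i(X)}] = \KL(P_i \parallel P'_i)$. Averaging over the arm chosen at round $t$ gives $\E\nolimits_v[\ln\frac{P_{A_t}(X_t)}{P'_{A_t}(X_t)}] = \sum_{i=1}^K \Pr\nolimits_v(A_t = i)\,\KL(P_i \parallel P'_i)$. Summing over $t$, swapping the (finite) order of summation, and using $T_i(n) = \sum_{t=1}^n \mathbb{I}_{\{A_t = i\}}$ so that $\E\nolimits_v[T_i(n)] = \sum_{t=1}^n \Pr\nolimits_v(A_t = i)$, yields the claimed identity $\KL(\Pr\nolimits_v \parallel \Pr\nolimits_{v'}) = \sum_{i=1}^K \E\nolimits_v[T_i(n)]\,\KL(P_i \parallel P'_i)$.

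The main obstacle is ensuring the likelihood-ratio manipulations are measure-theoretically sound rather than merely formal. The finiteness hypothesis $\KL(P_i \parallel P'_i) < +\infty$ guarantees $P_i \ll P'_i$, so each ratio $P_{A_t}(X_t)/P'_{A_t}(X_t)$ is well-defined $\Pr_v$-almost surely and the cancellation of policy factors is legitimate; I would phrase the factorization in terms of Radon--Nikodym derivatives against a common dominating measure to cover discrete and continuous rewards uniformly. A secondary point to handle is the interchange of expectation and summation, which is justified because the sum over $t$ is finite, and the conditional claim that given $A_t = i$ one has $X_t \sim P_i$ under $\Pr_v$, which reads off directly from the factorized density and does not require any independence of the $A_t$'s across rounds.
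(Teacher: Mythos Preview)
The paper does not supply its own proof of this proposition: it is listed in the ``Probability Tools'' appendix as a restatement of Lemma~15.1 in \cite{LS18} and is simply cited without argument. Your proposal is correct and is precisely the standard proof one finds in that reference---factor the trajectory density, cancel the policy terms, condition per round on the chosen arm, and regroup the sum---so there is nothing to compare and no gap to flag.
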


\section{Properties of  $\mathscr{P}_{c} $}\label{app-sec:missing-computation}

We first show the optimal solution to $\mathscr{P}_{c} (\{\Delta_i\}_{i\in S}, T)$ by proving the following lemma.

\begin{lemma} \label{lem:solution-of-programming}
If $c > 0$, then the optimal solution to $\mathscr{P}_{c} (\{\Delta_i\}_{i\in S}, T)$ can be expressed in the following form
\[
x_i = \max \left\{  \frac{ \Phi_c - \ln \Delta_i^{-1} }{c\Delta_i^2 / 2} , 0 \right\}, 
\]
where $ \Phi_c \eqdef \argmax_x \{ x : \sum_{i = 1}^K \max\{  \frac{ x - \ln \Delta_i^{-1} }{c\Delta_i^2 / 2} , 0 \} \leq T \} $.
\end{lemma}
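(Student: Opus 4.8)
The plan is to recognize $\mathscr{P}_c(\{\Delta_i\}_{i\in S}, T)$ as a convex program and pin down its minimizer through the Karush--Kuhn--Tucker (KKT) conditions, which for this program are both necessary and sufficient. First I would note that the objective $f(x) = \sum_{i=1}^K \exp(-c x_i \Delta_i^2)$ is smooth and (strictly, when all $\Delta_i > 0$) convex, that the feasible region $\{x \geq 0 : \sum_i x_i = T\}$ is a nonempty compact convex set, and that Slater's condition holds since $x_i = T/K$ is strictly feasible; hence a minimizer exists, is unique, and is characterized exactly by the KKT system. Writing $\nu \in \mathbb{R}$ for the multiplier of the budget equality and $\mu_i \geq 0$ for the multipliers of $x_i \geq 0$, stationarity reads $\nu - \mu_i = c \Delta_i^2 \exp(-c x_i \Delta_i^2)$ for each $i$, together with $\mu_i \geq 0$ and complementary slackness $\mu_i x_i = 0$.

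Next I would exhibit the multipliers that make the claimed solution satisfy this system. Set $x_i^\star := \max\{(\Phi_c - \ln \Delta_i^{-1})/(c\Delta_i^2/2),\, 0\}$ and $\nu^\star := c\exp(-2\Phi_c)$. For an arm with $\Phi_c > \ln \Delta_i^{-1}$ (so $x_i^\star > 0$), a one-line computation gives $c\Delta_i^2 \exp(-c x_i^\star \Delta_i^2) = c \Delta_i^2 \cdot \Delta_i^{-2} \exp(-2\Phi_c) = \nu^\star$, so stationarity holds with $\mu_i = 0$ and $\mu_i x_i^\star = 0$ trivially. For an arm with $\Phi_c \leq \ln \Delta_i^{-1}$ (so $x_i^\star = 0$), the same inequality rearranges to $c\Delta_i^2 \leq c\exp(-2\Phi_c) = \nu^\star$, so putting $\mu_i := \nu^\star - c\Delta_i^2 \geq 0$ satisfies stationarity, dual feasibility, and $\mu_i x_i^\star = 0$. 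This is essentially the computation already sketched for $c = 2$ around \eqref{eq:opt-sol-rel-Pc}, up to the rescaling of $c$ by $2$.

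It remains to check primal feasibility, i.e.\ $\sum_i x_i^\star = T$, and this is exactly where the $\argmax$ definition of $\Phi_c$ enters. The map $g(x) := \sum_{i=1}^K \max\{(x - \ln\Delta_i^{-1})/(c\Delta_i^2/2),\, 0\}$ is continuous and nondecreasing, equals $0$ for $x \leq \min_i \ln\Delta_i^{-1}$, is strictly increasing for $x \geq \min_i \ln\Delta_i^{-1}$, and diverges as $x \to \infty$; since $T > 0$ the sublevel set $\{x : g(x) \leq T\}$ is a half-line $(-\infty, x^\dagger]$ with $g(x^\dagger) = T$, so $\Phi_c = \argmax\{x : g(x) \leq T\} = x^\dagger$ and therefore $\sum_i x_i^\star = g(\Phi_c) = T$. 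With all KKT conditions verified, $x^\star$ is the optimal solution, which is the claim.

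I do not expect a serious obstacle: the program is textbook-convex and everything reduces to a verification. The only point needing care is the bookkeeping at the kink of $\max\{\cdot,0\}$ — keeping the stationarity and complementary-slackness conditions consistent across the arms that receive zero budget — but this is handled cleanly by the choice $\mu_i = \nu^\star - c\Delta_i^2$ above. If one prefers, one can instead derive the form directly from KKT necessity: stationarity forces $x_i \propto ( (1/2)\ln(c/\nu) - \ln\Delta_i^{-1} )/(c\Delta_i^2/2)$ on the support and $x_i = 0$ off it, and matching the budget identifies $(1/2)\ln(c/\nu)$ with $\Phi_c$.
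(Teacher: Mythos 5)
Your proposal is correct and follows essentially the same route as the paper: both apply the KKT conditions to the convex program $\mathscr{P}_{c}$ and verify that the claimed $x_i$ satisfies them, with the $\argmax$ definition of $\Phi_c$ supplying primal feasibility. Your write-up is somewhat more detailed than the paper's (which omits the explicit multipliers for the arms with $x_i=0$ and the feasibility check), but there is no substantive difference in approach.
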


\begin{proof}
Since $\sum_{i = 1}^K \max\{  \frac{ x - \ln \Delta_i^{-1} }{c\Delta_i^2 / 2} , 0 \} $ is an increasing continuous function on $x$, $\Phi_c$ is indeed well-defined.

We apply KKT conditions (see Proposition 8.7.2 in~\cite{MatouekG07}) to solve the minimization problem $\mathscr{P}_{c} (\{\Delta_i\}_{i\in S}, T)$. Concretely, the KKT conditions applies to $\mathscr{P}_{c} (\{\Delta_i\}_{i\in S}, T)$ gives 
\begin{align*}
  (-c\Delta_i^2) \exp(-c x_i \Delta_i^2) - u_i + v & = 0 \mathrm{~for~} i \in [K] \\
  u_i x_i & = 0 \mathrm{~for~} i \in [K] \\
  u_i & \leq 0 \mathrm{~for~} i \in [K]\\
  x_i & \geq 0 \mathrm{~for~} i \in [K] \\
   \sum_{i = 1}^K x_i&  = T ,
\end{align*}
where $u_i$ for $i \in [K]$ and $v$ are $K+1$ newly-introduced variables. In particular, if $x_i > 0$, then $u_i = 0$ and it holds that 
\begin{equation} \label{eq:lem:solution-of-programming}
\frac{c}{2} x_i\Delta_i^2 + \ln \Delta_i^{-1} = \frac{1}{2} \ln \frac{c}{v}.
\end{equation}
It is easy to see the solution $ x_i = \max\{  \frac{ \Phi_c - \ln \Delta_i^{-1} }{ c\Delta_i^2 / 2} , 0 \} $ for $i \in [K]$ satisfies \eqref{eq:lem:solution-of-programming} and is a minimum point.
\end{proof}

For any positive number $c > 0$, let $x = \Psi_c$ be the solution to
\begin{align*}
\sum_{i = 1}^K \Bigg( \mathbb{I}_{\{ x \leq \ln \Delta_i^{-1} \}} \cdot  \exp(2 x) + \mathbb{I}_{\{ x > \ln \Delta_i^{-1} \} } 
\cdot\frac{ x - \ln \Delta_i^{-1} + c/2}{ c \Delta_i^2 / 2} \Bigg)= T .
\end{align*}
Note that 
\begin{align*}
\sum_{i = 1}^K \Bigg( \mathbb{I}_{\{  x \leq \ln \Delta_i^{-1} \}} \cdot \exp(2x) + \mathbb{I}_{\{ x > \ln \Delta_i^{-1} \}} 
 \cdot \frac{x - \ln \Delta_i^{-1} + c/2}{ c \Delta_i^2 / 2} \Bigg)
\end{align*}
is a strictly increasing continuous function on $x$ that equals $K$ when $x = 0$ and tends to infinity when $x \rightarrow \infty$. Hence $\Psi_c$ exists and is uniquely defined.

Then we derive the optimal solution to $\mathscr{P}_{c} (\{\max\{ \Delta_i, \exp(- \Psi_c) \}\}_{i\in S}, T)$, as follows.

\begin{lemma} \label{lem:solution-of-a-special-programming}
If $c > 0$, then the optimal solution to $\mathscr{P}_{c} (\{\max\{ \Delta_i, \exp(- \Psi_c) \}\}_{i\in S}, T)$ can be expressed in the following form
\begin{align*}
x_i = \mathbb{I}_{\{  \Psi_c \leq \ln \Delta_i^{-1} \}} \cdot  \exp(2 \Psi_c ) + \mathbb{I}_{\{ \Psi_c > \ln \Delta_i^{-1} \}} 
\cdot \frac{\Psi_c - \ln \Delta_i^{-1} + c/2}{ c \Delta_i^2 / 2}.
\end{align*}
\end{lemma}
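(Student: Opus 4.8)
The plan is to reduce the statement to Lemma~\ref{lem:solution-of-programming}, applied to the instance whose gap parameters are the clamped values $\tilde\Delta_i \eqdef \max\{\Delta_i, \exp(-\Psi_c)\}$; these are all strictly positive (at least $\exp(-\Psi_c)$), so that lemma applies verbatim. It yields that the optimal solution of $\mathscr{P}_c(\{\tilde\Delta_i\}_{i\in S}, T)$ is $x_i = \max\{ (\Phi - \ln\tilde\Delta_i^{-1})/(c\tilde\Delta_i^2/2), 0\}$, where $\Phi$ is the largest $x$ with $\sum_{i} \max\{(x - \ln\tilde\Delta_i^{-1})/(c\tilde\Delta_i^2/2), 0\} \le T$. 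So the whole task is to identify $\Phi$, and the claim is that $\Phi = \Psi_c + c/2$.

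To verify this, note that for each arm $i$ the inequality $\Delta_i \le \exp(-\Psi_c)$ is equivalent to $\ln\Delta_i^{-1} \ge \Psi_c$, in which case $\tilde\Delta_i = \exp(-\Psi_c)$ and $\ln\tilde\Delta_i^{-1} = \Psi_c$; otherwise $\tilde\Delta_i = \Delta_i$ with $\ln\tilde\Delta_i^{-1} = \ln\Delta_i^{-1} < \Psi_c$. Substituting $x = \Psi_c + c/2$ into $g(x) \eqdef \sum_{i} \max\{(x - \ln\tilde\Delta_i^{-1})/(c\tilde\Delta_i^2/2), 0\}$, one checks that each summand is nonnegative: for a clamped arm it equals $(c/2)/(c\exp(-2\Psi_c)/2) = \exp(2\Psi_c)$, and for an unclamped arm the numerator $\Psi_c + c/2 - \ln\Delta_i^{-1}$ is positive. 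Hence the outer maxima can be dropped and
\[
g(\Psi_c + c/2) = \sum_{i} \Bigl( \mathbb{I}_{\{\Psi_c \le \ln\Delta_i^{-1}\}}\,\exp(2\Psi_c) + \mathbb{I}_{\{\Psi_c > \ln\Delta_i^{-1}\}}\,\frac{\Psi_c - \ln\Delta_i^{-1} + c/2}{c\Delta_i^2/2} \Bigr) = T ,
\]
the last equality being exactly the defining equation of $\Psi_c$. Since $g$ is continuous, vanishes for small $x$, and is strictly increasing (hence injective) once $x$ exceeds $\min_i \ln\tilde\Delta_i^{-1}$ — the region where the equation $g(x) = T > 0$ must be solved — this pins down $\Phi = \Psi_c + c/2$.

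Plugging $\Phi = \Psi_c + c/2$ back into the formula from Lemma~\ref{lem:solution-of-programming} and separating the clamped from the unclamped arms recovers exactly the claimed closed form: clamped arms ($\Psi_c \le \ln\Delta_i^{-1}$) get $x_i = \exp(2\Psi_c)$, and unclamped arms ($\Psi_c > \ln\Delta_i^{-1}$) get $x_i = (\Psi_c - \ln\Delta_i^{-1} + c/2)/(c\Delta_i^2/2)$, with no truncation in either case. I do not expect a genuine obstacle here; the only step that needs care is the bookkeeping that matches the piecewise expression defining $\Psi_c$ to the evaluation $g(\Psi_c + c/2) = T$ — i.e.\ tracking which arms are clamped and confirming the nonnegativity of each summand so that the outer maxima disappear.
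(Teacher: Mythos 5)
Your proposal is correct and follows essentially the same route as the paper: both reduce to Lemma~\ref{lem:solution-of-programming} applied to the clamped gaps $\max\{\Delta_i,\exp(-\Psi_c)\}$ and then identify the resulting water-filling level as $\Phi_c=\Psi_c+c/2$. The only difference is that you spell out the verification that $g(\Psi_c+c/2)=T$ (matching the defining equation of $\Psi_c$) and that every summand is nonnegative so the truncation is inactive, which the paper compresses into ``it is easy to see.''
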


\begin{proof}
By Lemma~\ref{lem:solution-of-programming}, the optimal solution to $\mathscr{P}_{c} (\{\max\{ \Delta_i, \exp(- \Psi_c) \}\}_{i\in S}, T)$ can be expressed as  
\begin{align*}
\frac{c}{2} x_i \max\{ \Delta_i, \exp(- \Psi_c)\}^2 + \ln \max\{ \Delta_i, \exp(- \Psi_c)\}^{-1}  = \Phi_{c},
\end{align*}
where 
\begin{align*}
\Phi_{c} = 
\argmax_x \{ x : \sum_{i = 1}^K \max\{  \frac{ x - \ln \max\{\Delta_i, \exp(- \Psi_c) \}^{-1} }{c \max\{\Delta_i, \exp(- \Psi_c) \}^2 / 2} , 0 \}  \leq T \}. 
\end{align*} 
It is easy to see that $\Phi_{c} = \Psi_c + c / 2$. Therefore the optimal solution to $\mathscr{P}_{c} (\{\max\{ \Delta_i, \exp(- \Psi_c) \}\}_{i\in S}, T)$ is
\begin{multline*}
x_i = \max\left\{ \frac{\Phi_{c} - \ln \max\{\Delta_i, \exp(- \Psi_c) \} ^{-1}}{ \frac{c}{2} \max\{\Delta_i, \exp(- \Psi_c)\}^2}, 0 \right\} 
\\ = \mathbb{I}_{\{  \Psi_c \leq \ln \Delta_i^{-1} \}} \cdot  \exp(2 \Psi_c ) + \mathbb{I}_{\{ \Psi_c > \ln \Delta_i^{-1} \}} \cdot \frac{\Psi_c - \ln \Delta_i^{-1} + c/2}{ c \Delta_i^2 / 2},
\end{multline*}
proving this lemma.
\end{proof}

Using Lemma~\ref{lem:solution-of-a-special-programming}, we derive the following useful inequality.

\begin{lemma} \label{lem:relationship-of-two-programmings}
Suppose $c > 0$ and let $\{x_i^*\}_{i \in S}$ be the solution to $\mathscr{P}_{c} (\{\max\{ \Delta_i, \exp(- \Psi_c) \}\}_{i\in S}, T)$. Then
\begin{align*}
\sum_{i \in S} \exp( -c x_i^* \Delta_i^2 ) 
  \leq \exp(c) \mathscr{P}_{c} (\{\max\{ \Delta_i, \exp(- \Psi_c) \}\}_{i\in S}, T) .
\end{align*}
\end{lemma}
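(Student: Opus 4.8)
The plan is to use the explicit form of the optimizer $\{x_i^*\}_{i\in S}$ supplied by Lemma~\ref{lem:solution-of-a-special-programming}, and to compare, term by term, $\exp(-c x_i^* \Delta_i^2)$ with $\exp\bigl(-c x_i^* \max\{\Delta_i, \exp(-\Psi_c)\}^2\bigr)$, the latter being exactly the $i$-th summand of $\mathscr{P}_{c}(\{\max\{\Delta_i,\exp(-\Psi_c)\}\}_{i\in S}, T)$ since $\{x_i^*\}$ is its optimal solution. First I would split the index set $S$ according to whether $\Psi_c \le \ln \Delta_i^{-1}$ (equivalently $\Delta_i \le \exp(-\Psi_c)$) or $\Psi_c > \ln\Delta_i^{-1}$ (equivalently $\Delta_i > \exp(-\Psi_c)$); this is precisely the case split that appears in the formula for $x_i^*$.

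For the arms with $\Delta_i > \exp(-\Psi_c)$ we have $\max\{\Delta_i, \exp(-\Psi_c)\} = \Delta_i$, so the two summands coincide exactly and contribute nothing to the slack. For the arms with $\Delta_i \le \exp(-\Psi_c)$ we have $x_i^* = \exp(2\Psi_c)$ and $\max\{\Delta_i,\exp(-\Psi_c)\}^2 = \exp(-2\Psi_c)$, hence
\[
\exp\bigl(-c x_i^* \max\{\Delta_i,\exp(-\Psi_c)\}^2\bigr) = \exp(-c \exp(2\Psi_c)\exp(-2\Psi_c)) = \exp(-c),
\]
while $\exp(-c x_i^* \Delta_i^2) = \exp(-c \exp(2\Psi_c)\Delta_i^2) \le 1 = \exp(c)\cdot\exp(-c)$, using $x_i^*\Delta_i^2 \ge 0$. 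Thus for every $i\in S$ the summand $\exp(-c x_i^*\Delta_i^2)$ is at most $\exp(c)$ times the corresponding summand of $\mathscr{P}_{c}(\{\max\{\Delta_i,\exp(-\Psi_c)\}\}_{i\in S}, T)$. Summing over $i\in S$ gives the claimed bound.

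I do not anticipate a genuine obstacle here; the only point requiring a little care is making sure the case split in Lemma~\ref{lem:solution-of-a-special-programming} is read correctly (the indicator $\mathbb{I}_{\{\Psi_c \le \ln\Delta_i^{-1}\}}$ is the ``small-$\Delta_i$'' case), and noting that in that case the crude bound $x_i^*\Delta_i^2 \ge 0$ is exactly what produces the multiplicative factor $\exp(c)$ — one cannot do better in general, since $\Delta_i$ can be arbitrarily close to $0$ there. No concentration arguments or optimization from scratch are needed; the lemma is essentially a bookkeeping consequence of the closed form already established.
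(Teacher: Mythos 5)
Your proposal is correct and follows essentially the same route as the paper: the paper's proof likewise plugs in the closed form of $x_i^*$ from Lemma~\ref{lem:solution-of-a-special-programming} and asserts the termwise bound $\exp(-c x_i^*\Delta_i^2)\le \exp(c)\exp\bigl(-c x_i^*\max\{\Delta_i,\exp(-\Psi_c)\}^2\bigr)$, which you simply verify in more detail via the case split on the indicator. Your write-up actually supplies the omitted one-line justification (the slack arises only when $\Delta_i\le\exp(-\Psi_c)$, where $x_i^*\bigl(\exp(-2\Psi_c)-\Delta_i^2\bigr)\le 1$), so no gap.
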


\begin{proof}
By Lemma~\ref{lem:solution-of-a-special-programming}, the optimal solution to $\mathscr{P}_{c} (\{\max\{ \Delta_i, \exp(- \Psi_c) \}\}_{i\in S}, T)$ can be expressed as 

\begin{align}
x_i^* = \mathbb{I}_{\{  \Psi_c \leq \ln \Delta_i^{-1} \}} \cdot  \exp(2 \Psi_c ) + \mathbb{I}_{\{ \Psi_c > \ln \Delta_i^{-1} \}} \cdot \frac{\Psi_c - \ln \Delta_i^{-1} + c/2}{ c \Delta_i^2 / 2}.
\end{align}

Therefore, we obtain 
\begin{align*}
& \sum_{i \in S} \exp\left( - c x_i^* \Delta_i^2\right) \\
 \leq {} &  \exp(c)  \sum_{i \in S} \exp\left( - c x_i^* \max\{\Delta_i , \exp( - \Phi_c )\}^2 \right) \\
 = {} &  \exp(c) \mathscr{P}_c (\{\max\{ \Delta_i, \exp(- \Psi_c) \} \}_{i\in S}, T),
\end{align*}
and this lemma follows.
\end{proof}

Finally, we will show how the value of $\mathscr{P}_{c} ( \{\Delta_i\}_{i\in S}, T) $ will change when $c$ is changed.

\begin{lemma} \label{lem:change-c-of-a-programming}
If $c,c'>0$, then
\[
\mathscr{P}_{c} ( \{\Delta_i\}_{i\in S}, T) 
  =  \mathscr{P}_{c'} ( \{\Delta_i\}_{i\in S}, Tc / c').
\]
\end{lemma}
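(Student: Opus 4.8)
The plan is to prove Lemma~\ref{lem:change-c-of-a-programming} by exhibiting an explicit bijection between the feasible-and-optimal allocations of the two programs, driven by the scaling $x_i \mapsto (c/c') x_i$. Observe that $\mathscr{P}_{c}(\{\Delta_i\}, T)$ minimizes $\sum_i \exp(-c x_i \Delta_i^2)$ over $\sum_i x_i = T$, $x_i \geq 0$. If I substitute $y_i \eqdef (c/c') x_i$, then $c x_i \Delta_i^2 = c' y_i \Delta_i^2$, so the objective becomes $\sum_i \exp(-c' y_i \Delta_i^2)$, and the constraint $\sum_i x_i = T$ becomes $\sum_i y_i = (c/c') T = Tc/c'$, while $x_i \geq 0 \iff y_i \geq 0$. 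Hence the substitution is a linear bijection between the feasible region of $\mathscr{P}_{c}(\{\Delta_i\}, T)$ and that of $\mathscr{P}_{c'}(\{\Delta_i\}, Tc/c')$ that preserves the objective value exactly.

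The key steps, in order, are: (i) write down both minimization problems explicitly from the definition of $\mathscr{P}_{\cdot}$; (ii) introduce the change of variables $y_i = (c/c') x_i$ and verify that it maps the constraint set of the first program onto the constraint set of the second program bijectively; (iii) check that the objective value is invariant under this map, i.e.\ $\sum_i \exp(-c x_i \Delta_i^2) = \sum_i \exp(-c' y_i \Delta_i^2)$; (iv) conclude that the infima (which are attained, since the feasible set is compact and the objective continuous) coincide, giving $\mathscr{P}_{c}(\{\Delta_i\}, T) = \mathscr{P}_{c'}(\{\Delta_i\}, Tc/c')$. Optionally one can instead cite Lemma~\ref{lem:solution-of-programming} and match the closed-form optimal solutions, but the change-of-variables argument is cleaner and avoids edge cases with the $\Phi_c$ threshold.

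I do not expect any serious obstacle here; this is essentially a one-line scaling identity. The only minor point to be careful about is that the statement as written uses the continuous relaxation $\mathscr{P}$ (with $x_i \geq 0$), not the integer program $\mathscr{P}^*$ — the change of variables $y_i = (c/c') x_i$ need not preserve integrality, so the identity genuinely relies on working with the relaxation. Assuming $T, c, c' > 0$ so that $Tc/c' > 0$ and both feasible sets are nonempty, the argument goes through verbatim, and the proof is complete in a few lines.
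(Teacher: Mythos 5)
Your proposal is correct and matches the paper's own argument: the paper likewise uses the scaling $x_i \mapsto (c/c')x_i$, phrased as two feasibility inequalities rather than an explicit bijection, to conclude the equality. No gaps.
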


\begin{proof}
We observe that for any sequence of positive numbers $\{ x_i\}_{i \in S}$,
\[
\sum_{i = 1}^K \exp( - c x_i \Delta_i^2 ) = \sum_{i = 1}^K \exp( - c' \cdot (cx_i/c') \Delta_i^2 ).
\]
Suppose $\{ x_i \}_{i \in S}$ is the optimal solution to $\mathscr{P}_{c} ( \{\Delta_i\}_{i\in S}, T)$. Then $\{ cx_i/c' \}_{i \in S}$ is a feasible solution to $\mathscr{P}_{c'} ( \{\Delta_i\}_{i\in S}, Tc / c')$. Hence we obtain
$
\mathscr{P}_{c'} ( \{\Delta_i\}_{i\in S}, Tc / c') \leq \mathscr{P}_{c} ( \{\Delta_i\}_{i\in S}, T).
$
On the other hand, using a similar argument, we can also obtain
$
\mathscr{P}_{c} ( \{\Delta_i\}_{i\in S}, T) \leq \mathscr{P}_{c'} ( \{\Delta_i\}_{i\in S}, Tc / c').
$
Therefore, it holds that
\[
\mathscr{P}_{c} ( \{\Delta_i\}_{i\in S}, T) = \mathscr{P}_{c'} ( \{\Delta_i\}_{i\in S}, Tc / c'),
\]
and the lemma follows. 
\end{proof}

\section{Hard Instances for the Uniform Sampling Approach} \label{app:hard-instance-uniform-sampling}

In this section, we describe a class of bad instances for the uniform sampling approach. In such instances, we show that, to achieve the same order of regret, the uniform sampling approach needs at least $\Omega(K)$ times more budget than the optimal policy.

We fix the threshold $\theta = 0.5$. For each $K \geq 20$, we construct two instances $I_1$ and $I_2$. In $I_1$, we set $\theta_1 = 0.5 - \sqrt{1 / (K - 1)}$, and $\theta_{2:K} = 0.5 + \sqrt{0.1}$. In $I_2$, we set $\theta_1 = 0.5 + \sqrt{1 / (K - 1)}$, and $\theta_{2:K} = 0.5 + \sqrt{0.1}$. Hence for both instances, $\Delta_1 = \sqrt{1/(K - 1)}$ and $\Delta_{2:K} = \sqrt{0.1}$. Suppose $T = 2 K(K - 1) t_0$ where $t_0 \geq 10$. For simplicity, we use $\mathcal{R}^{\mathrm{uni}}(I; T)$ and $\mathcal{R}^{\mathrm{opt}}(I; T)$ to represent the regret incurred by the uniform sampling approach and the optimal policy on instance $I$ respectively. 

We now bound $\mathcal{R}^{\mathrm{uni}}(I; T)$ and $\mathcal{R}^{\mathrm{opt}}(I; T)$ in sequence. We first consider the uniform sampling approach and give a lower bound of regret incurred by it. Note that given $T = 2 K(K - 1) t_0$, the uniform sampling approach will play each arm $2(K - 1)t_0$ times. Let $\mathcal{K}$ denote the event that the classification for arm $1$ is incorrect. Define $\Pr_{I}[\cdot]$ as the probability induced by performing uniform sampling approach on instance $I$. We have
\begin{align}
\max\{ \mathcal{R}^{\mathrm{uni}}(I_1; T), \mathcal{R}^{\mathrm{uni}}(I_2; T) \} \notag & \geq \max\{ \Pr\nolimits_{I_1}( \mathcal{K} ), \Pr\nolimits_{I_2}( \mathcal{K} ) \} \notag \\ 
& \geq \frac{1}{2} \exp\left( - 16 \cdot \left( \sqrt{1 / (K - 1)} \right)^2 \cdot 2(K - 1)t_0 \right) \notag \\
& = \Omega( \exp( - 32t_0) ) \label{eq-1:app:example}
\end{align}
where the second inequality is obtained by applying Theorem~\ref{thm:lower-bound} when there is only one arm.

Next we derive an upper bound of regret incurred by the optimal policy. By setting $x_1 = K(K - 1) t_0 - (K - 1) \ln K $ and $x_{2:K} = Kt_0 + \ln K$, and using Chernoff-Hoeffding Inequality (Proposition~\ref{prop:chernoff-hoeff}), we have for any instance $I \in \{I_1, I_2\}$, it holds that
\begin{align}
&~~~ \mathcal{R}^{\mathrm{opt}}(I; T) \notag \\
& \leq 2 \exp\left( -2 \cdot \frac{1}{K - 1} \cdot (K(K - 1) t_0 - (K - 1)\ln K) \right) + 2(K - 1) \exp( - 2 \cdot 0.1 \cdot ( Kt_0 + \ln K )) \notag \\
& \leq 2(K + 1)^2 \exp( -0.2 K t_0) \label{eq-2:app:example}
\end{align}

For any $\eps \leq 1 / (K + 1)$, according to \eqref{eq-1:app:example}, there exists an instance $I' \in \{I_1, I_2\}$ such that, to achieve $\eps$ regret, the uniform sampling approach needs at least $\Omega( K^2 \ln \eps^{-1})$ budget. However, by \eqref{eq-2:app:example}, the optimal policy only needs at most $10 (K - 1) (\ln \frac{2}{\eps} + 2 \ln (K + 1)) \leq 20(K - 1) \ln \frac{2}{\eps} = O( K \ln \eps^{-1})$ plays for $I'$.

\section{Missing Proofs in Section~\ref{sec:upper-bound}} \label{app:proofs-ub}

\subsection{Proof of Theorem~\ref{thm:upper-bound}} \label{app:proof-main-thm-ub}

For convenience, we define the real-valued function $f(x) \eqdef \alpha x + \ln \alpha + 0.5 - \alpha$ and use $f^{-1}$ to denote its inverse. Also, we use ${\mathcal{R}_B^{\LSA}(T)}_{\cond \mathcal{F}}$ to denote the regret incurred by arms in $B$ when conditioned on event $\mathcal{F}$.

\begin{proof}[Proof of Theorem~\ref{thm:upper-bound}]

As discussed before, we only need to establish \eqref{eq:main-thm-1}, i.e.,
\[
    \mathcal{R}_B^{\LSA}(T)
    \leq \Phi(\alpha) \cdot \sum_{i \in B} \exp\left(\frac{ -  \lambda_i \Delta_i^2}{10} \right)  .
\]
Let $\Lambda' = \alpha \lfloor \frac{\Lambda}{\alpha} - 0.1 \rfloor$ and define the events $\mathcal{G}_0\eqdef\mathcal{F}_{\Lambda'}$, $\mathcal{G}_k\eqdef\bigwedge_{i=0}^{k-1}\overline{\mathcal{F}}_{\Lambda'- \alpha i} \land {\mathcal{F}}_{\Lambda'- \alpha k}$ if $1 \leq k \leq \lfloor \frac{\Lambda}{\alpha} -0.1 \rfloor$, and $\mathcal{G}_{\lfloor \frac{\Lambda}{\alpha} -0.1 \rfloor + 1} \eqdef\bigwedge_{i=0}^{\lfloor \frac{\Lambda}{\alpha} - 0.1\rfloor} \overline{\mathcal{F}}_{\alpha i}$. Note that the events $\mathcal{G}_0,\ldots,\mathcal{G}_{\lfloor \frac{\Lambda}{\alpha} - 0.1 \rfloor + 1}$ form a partition of the total probability space.  Then,
\begin{multline} \label{eq-1:thm:upper-bound}
  \mathcal{R}_B^{\LSA}(T) 
  = \sum_{k=0}^{\lfloor \frac{\Lambda}{\alpha} -0.1 \rfloor + 1} {\mathcal{R}_B^{\LSA}(T)}_{\cond \mathcal{G}_k} \cdot \Pr(\mathcal{G}_k)  \\
 \leq  {\mathcal{R}_B^{\LSA}(T)}_{\cond \mathcal{F}_{ \Lambda'} } + \sum_{k = 1}^{\lfloor \frac{\Lambda}{\alpha} -0.1 \rfloor}  {\mathcal{R}_B^{\LSA}(T)}_{\cond \mathcal{F}_{ \Lambda' - \alpha k} } \cdot \Pr(  \overline{\mathcal{F}}_{ \Lambda' - \alpha(k-1) }  ) + |B| \Pr( \overline{\mathcal{F}}_{0}   ) .
\end{multline}

Notice that that $\stat_{i, 10} \geq 0.5 \ln 10 > 0$. Moreover, since $T \geq 10K$, after $T$ rounds it holds that $\stat_{i}(T) > 0$ for all $i\in S$. Therefore, $\Pr( \overline{\mathcal{F}}_{0} ) = 0$. Recall that $f^{-1}(y) = \frac{ y + \alpha - \ln \alpha - 0.5 }{\alpha} $. Combining Lemma~\ref{lem:good-event-prob} and Lemma~\ref{lem:regret-incurred-by-bad-arm}, we upper bound $\eqref{eq-1:thm:upper-bound}$ by
\begin{align*}
& \sum_{i \in B} \frac{9 \cdot \sqrt[8\alpha]{2} }{ \sqrt[8\alpha]{2} - 1}  \exp\left( - \frac{ \lambda_i  \Delta_i^2}{10} + f^{-1}( 1.1\alpha) / 4 \right) \\
  & \qquad + \sum_{k = 1}^{\lfloor \frac{\Lambda}{\alpha} -0.1 \rfloor}  \sum_{i \in B} \frac{9 \cdot \sqrt[8\alpha]{2} }{ \sqrt[8\alpha]{2} - 1}  \exp \left( - \frac{\lambda_i \Delta_i^2}{10} + f^{-1}( (k+1.1)\alpha ) /4 \right) \exp( -40(k -0.9) )  \\
\leq {} & \frac{9 \cdot \sqrt[8\alpha]{2} }{ \sqrt[8\alpha]{2} - 1} \exp \left(  f^{-1}(1.1\alpha) / 4 \right) \Bigg( 1 + \sum_{k = 1}^{\lfloor \frac{\Lambda}{\alpha} - 0.1 \rfloor} \exp( -39.75k + 36 ) \Bigg) \cdot \sum_{i \in B} \exp\left( - \frac{ \lambda_i  \Delta_i^2}{10} \right)  \\
\leq {} & \frac{9.3 \cdot \sqrt[8\alpha]{2} }{ \sqrt[8\alpha]{2} - 1}  \exp \left( \frac{2.1 \alpha - \ln \alpha - 0.5}{ 4\alpha} \right)  \cdot \sum_{i \in B} \exp\left( - \frac{ \lambda_i  \Delta_i^2}{10} \right) 
\end{align*}
This completes the proof of (\ref{eq:main-thm-1}).
\end{proof}

\subsection{Proof of Lemma~\ref{lem:good-event-prob}} \label{sec:ub-step-1}

The goal of this subsection is to establish the following lemma which gives a lower bound on the probability of $\mathcal{F}_C$. 

\begin{namedthm*}{Lemma~\ref{lem:good-event-prob} (restated)}
$\Pr( \mathcal{F}_{\Lambda - k} ) \geq 1 - \exp(-40k/\alpha)$ for $0 \leq k < \Lambda $.
\end{namedthm*}

To prove Lemma~\ref{lem:good-event-prob}, we make use of Lemma~\ref{lem:expo-tail} and Lemma~\ref{lem:not-too-large-samples-prob}, and defer their proofs to the later part of this subsection.  

Recall that for any arm $i \in S$ and $C > 0$, $\tau_{i, C}$ is the random variable representing the smallest positive integer such that $\stat_{i, \tau_{i, C}} > C$. The following Lemma~\ref{lem:expo-tail} shows an exponentially small tail of the distribution of $\tau_{i, C}$. 

\begin{lemma} \label{lem:expo-tail}
For any arm $i \in S$, and $C > 0$, we have the following statements:
\begin{lemlist}
\item $\tau_{i, C} \leq 2 \exp(2C) $; \label{lem:expo-tail:a}
\item if $C > \ln \Delta_i^{-1}$, then for any $k \geq 1$, $\tau_{i, C}$ satisfies
\[
\Pr\left( \tau_{i, C} > \frac{40}{ \alpha } \cdot \frac{C - \ln \Delta_i^{-1} + k}{ \Delta_i^2 } \right) \leq 2\exp(- 40k/ \alpha ).
\]
\label{lem:expo-tail:b}
\end{lemlist} 
\end{lemma}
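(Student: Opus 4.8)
\textbf{Proof proposal for Lemma~\ref{lem:expo-tail}.}

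The plan is to analyze the two parts separately, both by unpacking the definition $\stat_{i,t} = \alpha t (\hat\Delta_{i,t})^2 + 0.5\ln t$ and tracking when this quantity can first exceed $C$.

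For part~\ref{lem:expo-tail:a}: the term $0.5\ln t$ alone is monotone in $t$, and $\alpha t(\hat\Delta_{i,t})^2 \ge 0$ always. So as soon as $0.5 \ln t > C$, i.e.\ $t > \exp(2C)$, we are guaranteed $\stat_{i,t} > C$ regardless of the samples. Hence $\tau_{i,C} \le \lceil \exp(2C) \rceil \le 2\exp(2C)$ (using $\exp(2C) \ge 1$ when $C > 0$, so the ceiling at most doubles it, or more simply $\lceil x \rceil \le 2x$ for $x \ge 1$). This is deterministic and requires no concentration — I would write it in one or two lines.

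For part~\ref{lem:expo-tail:b}: here we want a high-probability bound that kicks in much earlier than $\exp(2C)$, exploiting that $\hat\Delta_{i,t}$ concentrates around $\Delta_i$. Set $t^\star \eqdef \frac{40}{\alpha}\cdot\frac{C - \ln\Delta_i^{-1} + k}{\Delta_i^2}$, which is the threshold in the statement (positive since $C > \ln\Delta_i^{-1}$ and $k \ge 1$). The event $\{\tau_{i,C} > t^\star\}$ is the event $\{\stat_{i,t} \le C \text{ for all } 1 \le t \le \lfloor t^\star\rfloor\}$, and in particular $\{\stat_{i,t'} \le C\}$ for $t' = \lfloor t^\star \rfloor$ (assuming $t^\star \ge 1$; the degenerate small-$t^\star$ case can be handled directly since the RHS probability bound $2\exp(-40k/\alpha)$ may already exceed $1$ or be easily dominated). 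At $t = t'$, $\stat_{i,t'} \le C$ forces $\alpha t' (\hat\Delta_{i,t'})^2 \le C - 0.5\ln t'$, hence $(\hat\Delta_{i,t'})^2 \le \frac{C}{\alpha t'}$, so $\hat\Delta_{i,t'} \le \sqrt{C/(\alpha t')}$, which is bounded well below $\Delta_i$ by our choice of $t'$ — meaning the empirical gap must have shrunk far from the true gap, a deviation event. Concretely, $\Delta_i - \hat\Delta_{i,t'} \ge \Delta_i - \sqrt{C/(\alpha t')}$, and by the reverse triangle inequality $|\hat\theta_{i,t'} - \theta_i| \ge |\Delta_i - \hat\Delta_{i,t'}| \ge \Delta_i - \sqrt{C/(\alpha t')}$. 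I would then apply the Chernoff-Hoeffding Inequality (Proposition~\ref{prop:chernoff-hoeff}) to bound $\Pr(|\hat\theta_{i,t'} - \theta_i| \ge \Delta_i - \sqrt{C/(\alpha t')}) \le 2\exp(-2t'(\Delta_i - \sqrt{C/(\alpha t')})^2)$, and the bulk of the work is checking that $2t'(\Delta_i - \sqrt{C/(\alpha t')})^2 \ge 40k/\alpha$ given the definition of $t'$. This is the main obstacle: it is a one-variable inequality in $t'$ (equivalently in the gap ratio), and one needs the constant $40$ and the $+k$ and $+\alpha$ offsets to work out — expanding the square gives $2t'\Delta_i^2 - 4\sqrt{t'\Delta_i^2 C/\alpha} + 2C/\alpha$, and after substituting $t'\Delta_i^2 \approx \frac{40}{\alpha}(C - \ln\Delta_i^{-1} + k) \ge \frac{40}{\alpha}(C+1)$ one should get something comfortably above $40k/\alpha$ with room to spare for the floor/rounding slack. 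I expect this to go through with the generous constant $40$, but the floor $\lfloor t^\star\rfloor$ versus $t^\star$ discrepancy and the case $t^\star < 1$ (or more generally when the claimed probability bound is vacuous) need a sentence of care rather than real difficulty.

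One subtlety worth flagging: the argument above uses concentration at the single time $t' = \lfloor t^\star\rfloor$, which suffices because $\{\tau_{i,C} > t^\star\}$ is contained in the single-time event $\{\stat_{i,t'} \le C\}$ — no maximal inequality or union over times is needed here, keeping the bound clean. If instead one wanted $\stat_{i,t} \le C$ to fail at some earlier time too, that would require Hoeffding's Maximal Inequality, but the lemma as stated does not, so I would keep it to the single-time Chernoff-Hoeffding bound.
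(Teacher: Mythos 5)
Part~\ref{lem:expo-tail:a} of your proposal is correct and is essentially the paper's argument verbatim: $\stat_{i,t}\geq 0.5\ln t$ deterministically, so $\tau_{i,C}\leq 2\exp(2C)$.

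For part~\ref{lem:expo-tail:b}, your high-level structure is the same as the paper's (the event $\{\tau_{i,C}>t^\star\}$ is contained in the single-time event $\{\stat_{i,t'}\leq C\}$ with $t'=\lfloor t^\star\rfloor$, which is then a deviation event for $\hat{\theta}_{i,t'}$ controlled by one application of Chernoff--Hoeffding; your observation that no maximal inequality is needed is correct and your containment statement is actually cleaner than the paper's conditioning notation). However, there is a genuine gap in the execution: when you pass from $\alpha t'(\hat{\Delta}_{i,t'})^2\leq C-0.5\ln t'$ to $(\hat{\Delta}_{i,t'})^2\leq C/(\alpha t')$, you discard the $0.5\ln t'$ term, and this loses exactly the cancellation that makes the lemma true. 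With $t'\Delta_i^2\approx \frac{40}{\alpha}(C-\ln\Delta_i^{-1}+k)$, your deviation level is
\[
\Delta_i-\sqrt{C/(\alpha t')}\;=\;\Delta_i\left(1-\sqrt{\tfrac{C}{40\,(C-\ln\Delta_i^{-1}+k)}}\right),
\]
and the quantity under the square root is \emph{not} bounded by a constant: take $C=\ln\Delta_i^{-1}+1$ and $k=1$ with $\Delta_i$ small (e.g.\ $\Delta_i=e^{-100}$, so $C=101$ and $C-\ln\Delta_i^{-1}+k=2$); then $\sqrt{C/80}>1$ and your deviation level is negative, so the Chernoff--Hoeffding step gives nothing. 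Relatedly, your claimed substitution $C-\ln\Delta_i^{-1}+k\geq C+1$ is false whenever $\ln\Delta_i^{-1}>k-1$, which is the typical case. This regime ($C$ barely above $\ln\Delta_i^{-1}$) is exactly the one that matters downstream in Lemma~\ref{lem:not-too-large-samples-prob}.

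The fix is to keep the logarithmic term: since $t'\geq \frac{40}{\alpha}\cdot\frac{k}{\Delta_i^2}>\frac{4}{\Delta_i^2}$ (for $\alpha\leq 8$, $k\geq 1$), one has $0.5\ln t'>\ln\Delta_i^{-1}$, so $\stat_{i,t'}\leq C$ forces $(\hat{\Delta}_{i,t'})^2\leq\frac{C-0.5\ln t'}{\alpha t'}<\frac{C-\ln\Delta_i^{-1}}{\alpha t'}\leq\frac{\Delta_i^2}{40}$, i.e.\ $\hat{\Delta}_{i,t'}<\Delta_i/\sqrt{40}$, a deviation of at least a constant fraction of $\Delta_i$; the paper packages the same idea multiplicatively by showing that $|\hat{\Delta}_{i,t'}-\Delta_i|<\frac{\sqrt{10}}{4}\Delta_i$ implies $\stat_{i,t'}>C-\ln\Delta_i^{-1}+k+0.5\ln t'>C$. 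With that correction your calculation closes, with constants at least as good as the paper's.
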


Based on Lemma~\ref{lem:expo-tail}, we are able to show that $\sum_{i=1}^{K} \tau_{i, C}$ also follows an exponential distribution, which leads to the following lemma.

\begin{lemma} \label{lem:not-too-large-samples-prob}
$\Pr( \sum_{i = 1}^K \tau_{i, \Lambda - k } \leq T ) \geq 1 - \exp(-40k/\alpha) $ for all $ 0 \leq k < \Lambda $.
\end{lemma}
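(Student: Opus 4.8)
\textbf{Proof plan for Lemma~\ref{lem:not-too-large-samples-prob}.}

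The plan is to show that $\sum_{i=1}^K \tau_{i,\Lambda-k}$ has an exponentially decaying upper tail by combining the per-arm tail bound of Lemma~\ref{lem:expo-tail} with the concentration inequality for sums of independent (sub-)exponential random variables, Proposition~\ref{prop:tail-of-sum-of-expo}. First I would partition $S$ into the ``easy'' arms $B' \eqdef \{i : \ln\Delta_i^{-1} < \Lambda - k\}$ and the ``hard'' arms $S\setminus B'$. For the hard arms, Lemma~\ref{lem:expo-tail:a} gives the deterministic bound $\tau_{i,\Lambda-k} \leq 2\exp(2(\Lambda-k))$, and since every such arm has $\exp(2(\Lambda-k)) \leq \exp(2\Lambda) = \lambda_i$ (by the first branch of the definition \eqref{eq:lambda-def-2} of $\lambda_i$, noting $\Lambda - k \le \Lambda \le \ln\Delta_i^{-1}$), each of these contributes at most $2\lambda_i$ to the sum. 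For the easy arms, Lemma~\ref{lem:expo-tail:b} with $C = \Lambda - k$ says that $\tau_{i,\Lambda-k}$ has an exponential-type tail with rate parameter $a_i = \frac{\alpha \Delta_i^2}{40}$ around a ``base value'' $\frac{40}{\alpha}\cdot\frac{\Lambda - k - \ln\Delta_i^{-1}}{\Delta_i^2}$; more precisely I would shift to the variable $Y_i \eqdef \tau_{i,\Lambda-k} - \frac{40}{\alpha}\cdot\frac{\Lambda-k-\ln\Delta_i^{-1}}{\Delta_i^2}$ so that $\Pr(Y_i > \frac{40}{\alpha}\cdot\frac{k'}{\Delta_i^2}) \le 2\exp(-40k'/\alpha)$ for $k' \ge 1$, i.e. $\Pr(Y_i > y) \le 2\exp(-a_i y)$ for $y$ above a small threshold, with $a_i = \alpha\Delta_i^2/40$.

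The core computation is then to bound $\mathbb{E}\big[\sum_i \tau_{i,\Lambda-k}\big]$, or rather the relevant truncated/shifted sum, and apply Proposition~\ref{prop:tail-of-sum-of-expo} to the shifted easy-arm variables. The key arithmetic identity I would exploit is that the quantity $\frac{40}{\alpha}$ times the left-hand side of the defining equation \eqref{eq:lambda-def-1} for $\Lambda$ equals $T/\max\{40/\alpha+1, 40\} \cdot \frac{40}{\alpha}$; combined with $\max\{40/\alpha+1,40\} \ge 40/\alpha$ (and $\ge 40$), this gives $\frac{40}{\alpha}\cdot(\text{LHS of }\eqref{eq:lambda-def-1}\text{ at }x=\Lambda) \le T$ with room to spare — indeed $\sum_i \lambda_i = $ LHS of \eqref{eq:lambda-def-1} at $x=\Lambda$, so $\frac{40}{\alpha}\sum_i \lambda_i \le T$ when $\alpha \le 8$ (so $40/\alpha \ge 5$, and $\max\{40/\alpha+1,40\} \ge 40/\alpha$ handles the scaling), and more carefully using the full denominator leaves slack for the constant-$2$ factors. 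Since the base values $\frac{40}{\alpha}\cdot\frac{\Lambda-\ln\Delta_i^{-1}}{\Delta_i^2} \le \frac{40}{\alpha}\lambda_i$ and $\sum_{\text{hard}} 2\exp(2(\Lambda-k)) \le \frac{40}{\alpha}\sum_{\text{hard}}\lambda_i$ (using $40/\alpha \ge 5 > 2$), the deterministic part of $\sum_i \tau_{i,\Lambda-k}$ is at most (roughly) $T - k$-worth of room below $T$. Then by Proposition~\ref{prop:tail-of-sum-of-expo} applied to $\sum_{\text{easy}} Y_i$ (with $\mu = \sum 1/a_i = \frac{40}{\alpha}\sum\frac{1}{\Delta_i^2}$), a deviation of order $\frac{40}{\alpha}\cdot\frac{k}{\Delta_i^2}$ summed corresponds to $\lambda \mu$ for $\lambda$ large enough that $\exp(1-\lambda) \le \exp(-40k/\alpha)$, giving the claimed bound; one has to track that the $k$-dependent shift inside $\lambda_i$ for easy arms (the $\Lambda - k$ vs. $\Lambda$ in the numerator) exactly accounts for the slack that makes the sum stay below $T$ except on an event of probability $\exp(-40k/\alpha)$.

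The main obstacle I anticipate is the bookkeeping around the constant factors and the mismatch between $\lambda_i$ (defined via $\Lambda$) and the quantities appearing in $\tau_{i,\Lambda-k}$ (defined via $\Lambda-k$): I need the ``budget'' $T$ to dominate not just the expected sum of the $\tau$'s at level $\Lambda-k$ but a $\lambda$-inflated version of it, and the factor-$2$ in Lemma~\ref{lem:expo-tail} together with the $\exp(1-\lambda)$ (rather than $\exp(-\lambda)$) in Proposition~\ref{prop:tail-of-sum-of-expo} must be absorbed by the generous constant $\max\{40/\alpha+1,40\}$ in the denominator of \eqref{eq:lambda-def-1} and by the fact that we only need $40k/\alpha$ in the exponent while the deviation we are forcing is $\Theta(40/\alpha)$ times larger in the relevant sense. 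Concretely, I would verify: (i) $\frac{40}{\alpha}\sum_i\lambda_i \le T$ when $40/\alpha+1 \ge 40$ i.e. $\alpha \le 40/39$, and handle $40/\alpha+1 < 40$ (i.e. larger $\alpha$, still $\le 8$) using the other branch $40$; (ii) the shift from $\Lambda$ to $\Lambda - k$ reduces each easy-arm base value by $\frac{40}{\alpha}\cdot\frac{k}{\Delta_i^2}$, so that requiring $\sum\tau_{i,\Lambda-k} > T$ forces $\sum_{\text{easy}} Y_i$ to exceed its mean by at least a $\Theta(40 k/(\alpha\mu))$-scaled amount; (iii) apply Proposition~\ref{prop:tail-of-sum-of-expo} with this $\lambda$ and check $\exp(1-\lambda)\cdot(\text{constant for the hard-arm/factor-2 slack}) \le \exp(-40k/\alpha)$. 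Once these three arithmetic facts are pinned down the lemma follows immediately.
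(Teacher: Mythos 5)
Your plan is essentially the paper's proof: the same partition of $S$ into the arms with $\ln\Delta_i^{-1} < \Lambda - k$ and the rest, the same deterministic bound from Lemma~\ref{lem:expo-tail:a} on the latter group, the same recentering of $\tau_{i,\Lambda-k}$ around its ``base value'' for the former group, and the same application of Proposition~\ref{prop:tail-of-sum-of-expo} with the $k$-shift in the base values supplying exactly the deviation that yields $\exp(-40k/\alpha)$. However, three of your bookkeeping items are not just loose ends but would derail the computation as written. First, the rate is misidentified: from $\Pr(Y_i > \frac{40}{\alpha}\cdot\frac{k'}{\Delta_i^2}) \le 2\exp(-40k'/\alpha)$, substituting $y=\frac{40k'}{\alpha\Delta_i^2}$ gives $-40k'/\alpha = -y\Delta_i^2$, so $a_i=\Delta_i^2$ and $\mu=\sum_i\Delta_i^{-2}$, not $a_i=\alpha\Delta_i^2/40$; with your $\mu=\frac{40}{\alpha}\sum_i\Delta_i^{-2}$ the forced deviation $\frac{40k}{\alpha}\sum_i\Delta_i^{-2}$ is only $k\mu$, which would give $\exp(1-k)$ rather than the claimed $\exp(-40k/\alpha)$. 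With the correct $\mu$ one takes $\lambda=40k/\alpha+1$ and everything closes. Second, it is false that every ``hard'' arm satisfies $\Lambda\le\ln\Delta_i^{-1}$: arms with $\Lambda-k\le\ln\Delta_i^{-1}<\Lambda$ fall in the second branch of \eqref{eq:lambda-def-2}. The bound $\tau_{i,\Lambda-k}\le 2\exp(2(\Lambda-k))\le 2\lambda_i$ still holds for them, since $\lambda_i\ge\Delta_i^{-2}\ge\exp(2(\Lambda-k))$, but your stated justification does not cover this case. Third, the factor $2$ and the ``small threshold'' in the tail of $Y_i$ cannot be deferred: Proposition~\ref{prop:tail-of-sum-of-expo} requires $\Pr(z_i>x)\le\exp(-a_ix)$ for all $x>0$ with no leading constant. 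The paper resolves both issues at once by putting an extra $+1$ inside the shift, i.e.\ $z_i=\tau_{i,\Lambda-k}-\frac{40}{\alpha}\cdot\frac{\Lambda-k-\ln\Delta_i^{-1}+1}{\Delta_i^2}$, so that Lemma~\ref{lem:expo-tail:b} yields $\Pr(z_i>x)\le 2\exp(-40/\alpha)\exp(-x\Delta_i^2)\le\exp(-x\Delta_i^2)$ for all $x>0$ when $\alpha\le 8$; the corresponding $+1+\alpha/40$ then reappears in the target sum and is absorbed by $\max\{40/\alpha+1,40\}\cdot\lambda_i$. Once these three points are repaired, your argument coincides with the paper's.
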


 We are now ready to prove the main lemma (Lemma~\ref{lem:good-event-prob}) of this subsection.

\begin{proof}[Proof of Lemma~\ref{lem:good-event-prob}]
By Lemma~\ref{lem:not-too-large-samples-prob}, it suffices to prove that $\mathcal{F}_{\Lambda - k}$ occurs when $\sum_{i = 1}^K \tau_{i, \Lambda - k} \leq T$. So we assume that all the random rewards are generated before the algorithm starts and that $\sum_{i = 1}^K \tau_{i, \Lambda - k} \leq T$. 

Since $\stat_{i,t} \geq \ln t$, it is easy to see that there exists $T^*$ satisfying $\max_{i \in S} \stat_i(T^*) > \Lambda - k$ and $\max_{i \in S} \stat_i(t) \leq \Lambda - k$ for any  $1\leq t<T^*$.  We claim that $T^* \leq T$. Indeed, notice that for any arm $i \in S$ and $t \leq T^* - 1$, $\stat_i (t) \leq \Lambda  - k$. Hence $T_i(T^* - 1) < \tau_{i, \Lambda - k}$, and so $T^* - 1 = \sum_{i=1}^K T_i(T^* - 1) < \sum_{i=1}^K \tau_{i ,\Lambda - k} \leq T$. Therefore $T^*\leq T$.

Now, we assume without loss of generality that for arm $i^*\in S$, $\stat_{i^*}(T^*) > \Lambda - k$. Since at time $t$ Algorithm~\ref{alg:HTA} pulls $\argmin_{i \in S} \stat_{i}(t-1)$, arm $i^*$ will not be pulled until all the other arms $i\in S\setminus\{i^*\}$ satisfy $\stat_i(t-1) > \Lambda - k$. Since $\sum_{i = 1}^K \tau_{i, \Lambda - k} \leq T$, then we can find $T^{\natural}$ such that $T^* \leq T^{\natural} \leq T$ and $\stat_i(T^{\natural}) > \Lambda - k$ for any arm $i \in S$. This proves the lemma.
\end{proof}

\subsubsection{Proof of Lemma~\ref{lem:expo-tail}}


\begin{namedthm*}{Lemma~\ref{lem:expo-tail} (restated)}
For any arm $i \in S$, and $C > 0$, we have the following statements:
\begin{lemlist}
\item $\tau_{i, C} \leq 2 \exp(2C) $;
\item if $C > \ln \Delta_i^{-1}$, then for any $k \geq 1$, $\tau_{i, C}$ satisfies
\[
\Pr\left( \tau_{i, C} > \frac{40}{ \alpha } \cdot \frac{C - \ln \Delta_i^{-1} + k}{ \Delta_i^2 } \right) \leq 2\exp(- 40k/ \alpha ).
\]
\end{lemlist} 
\end{namedthm*}

\begin{proof}
We first prove Lemma~\ref{lem:expo-tail:a}. Note that if $t \geq \lfloor 2 \exp(2C) \rfloor$, then we have $\stat_{i, t} > 0.5\ln t \geq C$. Hence $t \leq \lfloor 2 \exp(2C) \rfloor \leq 2 \exp(2C)$ as desired. 

Now we prove Lemma~\ref{lem:expo-tail:b}. Note that $\forall k \geq 1$,
\begin{align*} 
 \Pr\left( \tau_{i, C} > \frac{40}{\alpha} \cdot \frac{C - \ln \Delta_i^{-1} + k}{ \Delta_i^2 } \right)  \leq {}   \Pr \left( \stat_{i, \tau_{i, C}} \leq C  \cond \tau_{i, C} = \left\lfloor \frac{40}{\alpha} \cdot \frac{C - \ln \Delta_i^{-1} + k}{ \Delta_i^2 } \right\rfloor \right).
\end{align*}

Assuming $\tau_{i, C} = \left \lfloor \frac{40}{\alpha} \cdot \frac{C - \ln \Delta_i^{-1} + k}{ \Delta_i^2 } \right \rfloor$ and $| \hat{\Delta}_{i, \tau_{i, C} } - \Delta_i|  < \sqrt{10} \Delta_i / 4$, we get that
\begin{align*}
 \stat_{i, \tau_{i, C}} 
= {} & \alpha \tau_{i, C} (\hat{\Delta}_{i, \tau_{i, C}})^2 + 0.5 \ln \tau_{i, C}  \\
> {} & \alpha \cdot \left \lfloor \frac{40}{\alpha} \cdot  \frac{C - \ln \Delta_i^{-1} + k}{ \Delta_i^2 } \right \rfloor \cdot (( 1- \sqrt{10} /4 ) \Delta_i)^2 +  0.5 \ln \tau_{i, C} \\
\geq {} & \alpha \cdot \frac{4}{5} \cdot \frac{40}{\alpha} \cdot  \frac{C - \ln \Delta_i^{-1} + k}{ \Delta_i^2 } \cdot (( 1- \sqrt{10} /4 ) \Delta_i)^2 +  0.5 \ln \tau_{i, C} \\
> {} & C - \ln \Delta_i^{-1} + k  + 0.5 \ln \tau_{i, C} 
> C,
\end{align*}
where we used $\tau_{i, C} = \left \lfloor \frac{40}{\alpha} \cdot  \frac{C - \ln \Delta_i^{-1} + k}{ \Delta_i^2 } \right \rfloor \geq \frac{4}{5} \cdot \frac{40}{\alpha} \cdot  \frac{C - \ln \Delta_i^{-1} + k}{ \Delta_i^2 } > \frac{4}{
\Delta_i^2} $ when $\alpha \leq 8$ and $k \geq 1$.

Therefore, we have that
\begin{align*}
& \Pr \left( \stat_{i, \tau_{i, C}} \leq C  \cond \tau_{i, C} = \left \lfloor \frac{40}{\alpha} \cdot \frac{C - \ln \Delta_i^{-1} + k}{ \Delta_i^2 } \right \rfloor \right) \\ 
\leq {} & \Pr\left( | \hat{\Delta}_{i, \tau_{i, C}} - \Delta_i | \geq \frac{\sqrt{10}}{4} \Delta_i \cond \tau_{i, C} = \left \lfloor \frac{40}{\alpha} \cdot \frac{C - \ln \Delta_i^{-1} + k}{ \Delta_i^2 } \right \rfloor \right) \\
\leq {} & \Pr\left( | \hat{\theta}_{i, \tau_{i, C}} - \theta_i | \geq \frac{\sqrt{10}}{4} \Delta_i \cond \tau_{i, C} = \left \lfloor \frac{40}{\alpha} \cdot \frac{C - \ln \Delta_i^{-1} + k}{ \Delta_i^2 } \right \rfloor\right) \\
\leq {} & 2 \exp\left( -2 \cdot \frac{4}{5} \cdot \frac{40}{\alpha} \cdot \frac{C - \ln \Delta_i^{-1} + k}{ \Delta_i^2 } \cdot ( \sqrt{10}\Delta_i/4 )^2 \right)\\ 
\leq {} &  2 \exp( -40k/\alpha),
\end{align*}
where the second inequality follows since $| \hat{\Delta}_{i, \tau_{i, C}} - \Delta_i | = | | \hat{\theta}_{i, \tau_{i, C}} - \tau | - |\theta_i - \tau| |\leq | \hat{\theta}_{i, \tau_{i, C}} - \theta_i |$, and the third inequality follows from Chernoff-Hoeffding Inequality (Proposition~\ref{prop:chernoff-hoeff}). This proves the desired result.
\end{proof}

\subsubsection{Proof of Lemma~\ref{lem:not-too-large-samples-prob}}

\begin{namedthm*}{Lemma~\ref{lem:not-too-large-samples-prob} (restated)} 
$\Pr( \sum_{i = 1}^K \tau_{i, \Lambda - k } \leq T ) \geq 1 - \exp(-40k/\alpha) $ for all $ 0 \leq k < \Lambda $.
\end{namedthm*}

\begin{proof}
Define the set $A\eqdef\{i\in S : \Lambda > \ln \Delta_i^{-1} + k\}$. We can assume without loss of generality that $A$ is not empty. Let $\mathcal{E}_1$ be the event
\begin{multline}
  \sum_{i\in S\setminus A} \tau_{i, \Lambda - k} \leq 
  \sum_{i\in S\setminus A} \Bigg ( \mathbb{I}_{\{  \Lambda  \leq \ln \Delta_i^{-1} \}} \cdot 2 \exp(2 \Lambda) 
   + \mathbb{I}_{\{ \Lambda > \ln \Delta_i^{-1} \}} \cdot \frac{40}{\alpha} \cdot \frac{\Lambda - \ln \Delta_i^{-1} + 1 + \alpha / 40}{ \Delta_i^2 } \Bigg) ;
\end{multline}
and let $\mathcal{E}_2$ be the event 
\begin{align}
 \sum_{i\in A} \tau_{i, \Lambda - k} \leq &  \sum_{i\in A}  \frac{40}{\alpha} \cdot \frac{\Lambda - \ln \Delta_i^{-1} + 1 + \alpha / 40}{ \Delta_i^2} .
\end{align}

Note that when $\mathcal{E}_1$ and $
\mathcal{E}_2$ hold, we have 
$\sum_{i = 1}^{K} \tau_{i, \Lambda - k}
\leq \sum_{i = 1}^{K} \max\{ 40/\alpha + 1, 40\} \lambda_i = T$.
Hence $\Pr( \sum_{i = 1}^{K} \tau_{i, \Lambda - k} > T) \leq \Pr( \overline{\mathcal{E}}_1 ) +  \Pr( \overline{\mathcal{E}}_2 )$, and since $\mathcal{E}_1$ always holds by Lemma~\ref{lem:expo-tail:a}, we have
\[
\Pr\left( \sum_{i = 1}^{K} \tau_{i, \Lambda - k} > T \right) 
\leq  \Pr\left(  \sum_{i\in A} \tau_{i, \Lambda - k} > \sum_{i\in A} \frac{40}{\alpha} \cdot \frac{\Lambda - \ln \Delta_i^{-1} + 1 + \alpha / 40}{ \Delta_i^2}  \right) .
\]

Now, for any arm $i \in A$, let $z_i = \tau_{i, \Lambda - k} - \frac{40}{\alpha} \cdot \frac{\Lambda - k - \ln \Delta_i^{-1} + 1}{\Delta_i^2} $. By Lemma~\ref{lem:expo-tail:b}, for any $x \geq 0$, $z_i$ satisfies
\[
\Pr( z_i > x) \leq 2\exp \left(- \frac{40}{\alpha} (\alpha x\Delta_i^2/40 + 1) \right) 
\leq \exp(-x \Delta_i^2) .
\]

Applying Proposition~\ref{prop:tail-of-sum-of-expo}, we have that for any $\lambda\geq 1$,
\begin{equation*} 
\Pr\left( \sum_{i \in A} z_i > \lambda \cdot \sum_{i \in A} \frac{1}{\Delta_i^2} \right) \leq \exp(1 - \lambda).
\end{equation*}
Therefore,
\begin{multline*}
 \Pr\left(  \sum_{i\in A } \tau_{i, \Lambda - k} > \sum_{i\in A} \frac{40}{\alpha} \cdot \frac{\Lambda - \ln \Delta_i^{-1} + 1 + \alpha / 40}{ \Delta_i^2}  \right) \\
 = \Pr\left( \sum_{i \in A} z_i > (40k/\alpha + 1) \cdot \sum_{i \in A} \frac{1}{\Delta_i^2} \right) 
\leq \exp(-40k/\alpha).
\end{multline*}
This completes the proof of the lemma.
\end{proof}

\subsection{Proof of Lemma~\ref{lem:regret-incurred-by-bad-arm}} \label{sec:ub-step-2}

Recall that we defined $B= \{ i\in S\ :\ \Lambda > \Delta_i^{-1} \}$ and $f(x) = \alpha x + \ln \alpha + 0.5 - \alpha$. We point out that if $x \geq \frac{ \alpha - \ln \alpha - 0.5 }{\alpha} + 0.1$, then $f(x) \geq 0.1\alpha$. The goal of this subsection is to build the following lemma.

\begin{namedthm*}{Lemma~\ref{lem:regret-incurred-by-bad-arm} (restated)}
If $\varkappa \geq  \frac{ \alpha - \ln \alpha - 0.5 }{\alpha} + 0.1$, then conditioned on $\mathcal{F}_{\Lambda - \num}$,
\[
\mathcal{R}_B^{\LSA}(T) \leq \frac{9 \cdot   \sqrt[8\alpha]{2} }{ \sqrt[8\alpha]{2} - 1} \cdot \sum_{i \in B}  \exp \left( - \frac{\lambda_i \Delta_i^2}{10}  + \varkappa /4 \right).
\]
\end{namedthm*}

To prove Lemma~\ref{lem:regret-incurred-by-bad-arm}, we make use of Lemma~\ref{lem:good-event-for-each-arm}, Lemma~\ref{lem:enough-pulls} and Corollary~\ref{corol:enough-pulls}, and defer their proofs in the later part of this subsection.

For any arm $i \in B$ and $\varkappa$, we define the event $\mathcal{M}_{i, \varkappa}$ by
\[
\mathcal{M}_{i, \varkappa} \eqdef \left\{ \forall t \in [1, \lambda_i ], |\hat{\Delta}_{i, t} - \Delta_i | \leq  \sqrt{ \frac{ \lambda_i \Delta_i^2 / 5 -  \varkappa /2 + \frac{1}{4\alpha} \ln \frac{\lambda_i}{t} }{t} } \right\}.
\]

Intuitively, $\mathcal{M}_{i, \varkappa}$ requires that the estimation error of $\Delta_i$ during any time of the algorithm stays within a small band that is parameterized by the quality parameter $\varkappa$. The following Lemma~\ref{lem:good-event-for-each-arm} gives a lower bound on $\mathcal{M}_{i, \varkappa}$.

\begin{lemma} \label{lem:good-event-for-each-arm} 
For any arm $i \in B$ and $\varkappa$, it holds that
$\Pr( \mathcal{M}_{i,\varkappa} ) \geq 1 - \frac{4 \cdot \sqrt[8\alpha]{2} }{ \sqrt[8\alpha]{2} - 1}  \exp \left( - \frac{\lambda_i \Delta_i^2}{10}  + \varkappa / 4 \right) $. 
\end{lemma}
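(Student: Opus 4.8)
The plan is to recognize Lemma~\ref{lem:good-event-for-each-arm} as essentially a direct instantiation of the Variable Confidence Level Bound (Lemma~\ref{lem:variational-lil}) with parameters chosen so that the resulting constants match the statement, after first passing from the empirical gap to the empirical mean. I would start by reducing the event $\mathcal{M}_{i,\varkappa}$: by the reverse triangle inequality, for every integer $t$ we have
\[
|\hat\Delta_{i,t} - \Delta_i| = \bigl|\,|\hat\theta_{i,t}-\theta| - |\theta_i-\theta|\,\bigr| \le |\hat\theta_{i,t}-\theta_i|,
\]
exactly as in the proof of Lemma~\ref{lem:expo-tail}. Hence the event
\[
\Bigl\{ \forall t\in[1,\lambda_i]\ :\ |\hat\theta_{i,t}-\theta_i| \le \sqrt{\tfrac{\lambda_i\Delta_i^2/5 - \varkappa/2 + \frac{1}{4\alpha}\ln(\lambda_i/t)}{t}} \Bigr\}
\]
is contained in $\mathcal{M}_{i,\varkappa}$, so it suffices to lower bound the probability of this smaller event.

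Next I would dispose of the degenerate case. Set $a_0 \eqdef \lambda_i\Delta_i^2/5 - \varkappa/2$. If $a_0 \le 0$ then $\exp(-a_0/2)\ge 1$; since $\frac{4\sqrt[8\alpha]{2}}{\sqrt[8\alpha]{2}-1} > 1$, the right-hand side of the claimed inequality is non-positive and the lemma holds trivially (also, in this range the displayed band need not even be real). Thus I may assume $a_0 > 0$, which in addition guarantees the confidence band above is well-defined for all integers $t\le\lambda_i$.

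Finally I would apply Lemma~\ref{lem:variational-lil} to the i.i.d.\ $[0,1]$-supported rewards $X_{i,1},\dots,X_{i,L}$ with mean $\theta_i$, taking $L = \lfloor\lambda_i\rfloor$, $a = a_0$, and $b = \tfrac{1}{4\alpha}$. This yields, with probability at least $1 - \frac{2^{b/2+2}}{2^{b/2}-1}\exp(-a_0/2)$, that $|\hat\theta_{i,t}-\theta_i| \le \sqrt{(a_0 + \frac{1}{4\alpha}\ln(\lfloor\lambda_i\rfloor/t))/t}$ simultaneously for all integers $t\in[1,\lfloor\lambda_i\rfloor]$; since $\lambda_i\ge\lfloor\lambda_i\rfloor$ gives $\ln(\lambda_i/t)\ge\ln(\lfloor\lambda_i\rfloor/t)$, this event is contained in the displayed event above (recall $t$ ranges over positive integers). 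It remains to simplify constants: $2^{b/2} = 2^{1/(8\alpha)} = \sqrt[8\alpha]{2}$, so $\frac{2^{b/2+2}}{2^{b/2}-1} = \frac{4\sqrt[8\alpha]{2}}{\sqrt[8\alpha]{2}-1}$, while $\exp(-a_0/2) = \exp\bigl(-\lambda_i\Delta_i^2/10 + \varkappa/4\bigr)$, which is exactly the bound to be proved. I do not anticipate a genuine obstacle here: the only care needed is the bookkeeping around $\lambda_i$ being non-integral and around the sign of $a_0$, since all of the analytic content has been absorbed into Lemma~\ref{lem:variational-lil}.
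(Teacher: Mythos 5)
Your proposal is correct and matches the paper's approach exactly: the paper likewise obtains Lemma~\ref{lem:good-event-for-each-arm} as a direct corollary of the Variable Confidence Level Bound (Lemma~\ref{lem:variational-lil}), and your parameter choices $a = \lambda_i\Delta_i^2/5 - \varkappa/2$ and $b = \tfrac{1}{4\alpha}$ reproduce the stated constants. Your added bookkeeping (the reverse triangle inequality reduction from $\hat\Delta_{i,t}$ to $\hat\theta_{i,t}$, the degenerate case $a\le 0$, and the non-integrality of $\lambda_i$) only makes explicit what the paper leaves implicit.
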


The following Lemma~\ref{lem:enough-pulls} shows that $\mathcal{M}_{i, \varkappa}$ together with $\mathcal{F}_{\Lambda - \num}$ guarantees that arm $i$ is explored by enough queries.

\begin{lemma} \label{lem:enough-pulls}
For any arm $i \in B$ and $\varkappa \geq \frac{ \alpha - \ln \alpha - 0.5 }{\alpha}$, conditioning on $\mathcal{M}_{i, \varkappa} \wedge \mathcal{F}_{\Lambda - \num}$, we have that $T_i(T) \geq \lambda_i / 20$.
\end{lemma}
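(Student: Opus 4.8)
The plan is to argue by contradiction: suppose $T_i(T) < \lambda_i/20$, and derive a contradiction with the event $\mathcal{F}_{\Lambda - f(\varkappa)}$, which guarantees there exists a time $T' \leq T$ at which $\stat_j(T') > \Lambda - f(\varkappa)$ for \emph{every} arm $j \in S$, in particular for arm $i$. The core of the argument is to show that, on $\mathcal{M}_{i,\varkappa}$, if arm $i$ has been pulled only $t < \lambda_i/20$ times, then $\stat_{i,t} = \alpha t (\hat\Delta_{i,t})^2 + 0.5\ln t$ must be \emph{small} — specifically, at most $\Lambda - f(\varkappa)$ — so that $\stat_i(T')$ cannot exceed that threshold unless arm $i$ has been pulled at least $\lambda_i/20$ times by time $T'$ (and hence by time $T \geq T'$). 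So the real work is a monotone-type estimate: controlling $\stat_{i,t}$ from above for all $t \le \lambda_i/20$.

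First I would recall from \eqref{eq:lambda-def-2} that for $i \in B$ we have $\lambda_i \Delta_i^2 = \Lambda - \ln\Delta_i^{-1} + \alpha$, so that $\Lambda = \lambda_i\Delta_i^2 + \ln\Delta_i^{-1} - \alpha$; this is the identity that will let me convert the bound on $\stat_{i,t}$ into the desired threshold $\Lambda - f(\varkappa)$. On the event $\mathcal{M}_{i,\varkappa}$ I have $|\hat\Delta_{i,t} - \Delta_i| \le \sqrt{(\lambda_i\Delta_i^2/5 - \varkappa/2 + \frac{1}{4\alpha}\ln(\lambda_i/t))/t}$, hence $(\hat\Delta_{i,t})^2 \le \big(\Delta_i + \sqrt{\cdots}\big)^2 \le 2\Delta_i^2 + 2(\cdots)$, i.e.
\[
\alpha t (\hat\Delta_{i,t})^2 \le 2\alpha t \Delta_i^2 + 2\alpha\Big( \tfrac{\lambda_i\Delta_i^2}{5} - \tfrac{\varkappa}{2} + \tfrac{1}{4\alpha}\ln\tfrac{\lambda_i}{t}\Big).
\]
Plugging in $t \le \lambda_i/20$ gives $2\alpha t\Delta_i^2 \le \alpha\lambda_i\Delta_i^2/10$, and combining with the $\tfrac{2\alpha}{5}\lambda_i\Delta_i^2/... $ wait — more carefully, $2\alpha\cdot\lambda_i\Delta_i^2/5 = \tfrac{2\alpha}{5}\lambda_i\Delta_i^2$, so the $\lambda_i\Delta_i^2$-terms sum to at most $\big(\tfrac{\alpha}{10} + \tfrac{2\alpha}{5}\big)\lambda_i\Delta_i^2 = \tfrac{\alpha}{2}\lambda_i\Delta_i^2$. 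Then $\stat_{i,t} = \alpha t(\hat\Delta_{i,t})^2 + 0.5\ln t \le \tfrac{\alpha}{2}\lambda_i\Delta_i^2 - \alpha\varkappa + \tfrac12\ln(\lambda_i/t) + 0.5\ln t = \tfrac{\alpha}{2}\lambda_i\Delta_i^2 - \alpha\varkappa + 0.5\ln\lambda_i$. I would then need the arithmetic inequality $\tfrac{\alpha}{2}\lambda_i\Delta_i^2 + 0.5\ln\lambda_i - \alpha\varkappa \le \Lambda - f(\varkappa)$; using $\Lambda = \lambda_i\Delta_i^2 + \ln\Delta_i^{-1} - \alpha$ and $f(\varkappa) = \alpha\varkappa + \ln\alpha + 0.5 - \alpha$, and bounding $\ln\lambda_i$ against $\ln(\Delta_i^{-2}) + \text{const}$ via $\lambda_i \le \text{(polynomial in }\Delta_i^{-1})$ from its definition, this reduces to checking a clean inequality in $\lambda_i\Delta_i^2$ and the constants (this is where the hypothesis $\alpha \le 8$ and the slack in the constants $5$, $20$, $10$ are consumed). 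The key point: once $\stat_{i,t} \le \Lambda - f(\varkappa)$ for all $t \le \lambda_i/20$, and $\mathcal{F}_{\Lambda - f(\varkappa)}$ forces $\stat_i(T') > \Lambda - f(\varkappa)$ at some $T' \le T$, we must have $T_i(T') > \lambda_i/20$, hence $T_i(T) \ge T_i(T') > \lambda_i/20$.

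The main obstacle I anticipate is \emph{not} the contradiction structure — which is straightforward — but pinning down the constants in the arithmetic inequality $\tfrac{\alpha}{2}\lambda_i\Delta_i^2 + 0.5\ln\lambda_i - \alpha\varkappa \le \Lambda - f(\varkappa)$ while keeping it valid uniformly over all $i \in B$ and all admissible $\Delta_i, \Lambda, \varkappa$ (recall $\varkappa \ge \frac{\alpha - \ln\alpha - 0.5}{\alpha}$, which is exactly the threshold making $f(\varkappa) \ge 0$). Here I would use the definition of $\lambda_i$ to get $\ln\lambda_i \le \ln\big(\tfrac{\Lambda - \ln\Delta_i^{-1} + \alpha}{\alpha\Delta_i^2}\big)$, split the logarithm, and absorb the $\ln(\Lambda - \ln\Delta_i^{-1} + \alpha)$ term — which is $O(\ln\ln\Delta_i^{-1})$ relative to the dominant $\lambda_i\Delta_i^2$ growth — into the generous gap between $\tfrac{\alpha}{2}\lambda_i\Delta_i^2$ and $\lambda_i\Delta_i^2 = \Lambda - \ln\Delta_i^{-1} + \alpha$. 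A secondary subtlety is that the definition of $\mathcal{M}_{i,\varkappa}$ only controls $\hat\Delta_{i,t}$ up to $t = \lambda_i$, so I should make sure I only ever invoke it at times $t \le \lambda_i/20 \le \lambda_i$, which is automatic in the contradiction argument. Finally, Corollary~\ref{corol:enough-pulls} (referenced but not yet stated in the excerpt) presumably packages the conclusion $T_i(T) \ge \lambda_i/20$ together with $\mathcal{M}_{i,\varkappa}$ into a concentration statement for $\hat\theta_i(T)$, which is what feeds the subsequent mis-classification bound — but that is the business of the \emph{next} lemma, not this one.
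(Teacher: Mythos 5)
Your plan is essentially the paper's proof: assume $T_i(T)<\lambda_i/20$, use $\mathcal{M}_{i,\varkappa}$ to bound $\stat_{i,t}\leq \tfrac{\alpha}{2}\lambda_i\Delta_i^2-\alpha\varkappa+0.5\ln\lambda_i$ for all $t\leq\lambda_i/20$ (the paper gets exactly this line via $(a+b)^2\le 2a^2+2b^2$), and contradict $\mathcal{F}_{\Lambda-f(\varkappa)}$. One correction: from \eqref{eq:lambda-def-2} the identity for $i\in B$ is $\alpha\lambda_i\Delta_i^2=\Lambda-\ln\Delta_i^{-1}+\alpha$, not $\lambda_i\Delta_i^2=\Lambda-\ln\Delta_i^{-1}+\alpha$; with your version the final comparison $\tfrac{\alpha}{2}\lambda_i\Delta_i^2\le\lambda_i\Delta_i^2+\cdots$ fails for $\alpha>2$, whereas with the correct identity the remaining inequality is $0.5\ln\lambda_i+0.5\ln\alpha+0.5\le 0.5\alpha\lambda_i\Delta_i^2+\ln\Delta_i^{-1}$, which the paper dispatches cleanly by noting that $x\mapsto 0.5\alpha\lambda_i x^2+\ln x^{-1}$ is minimized at $x=(\alpha\lambda_i)^{-1/2}$ with value $0.5+\ln\sqrt{\alpha\lambda_i}$ (your route via $\ln\lambda_i\le\ln\bigl(\tfrac{\Lambda-\ln\Delta_i^{-1}+\alpha}{\alpha\Delta_i^2}\bigr)$ reduces to $\ln y\le y-1$ and also works, just less directly).
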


A corollary of  Lemma~\ref{lem:enough-pulls} is as follows.

\begin{corollary} \label{corol:enough-pulls}
For any arm $i \in B$ and $\varkappa \geq \frac{ \alpha - \ln \alpha - 0.5 }{\alpha}$, we have 
\[
\Pr(T_i(T) < \lambda_i / 20 \cond \mathcal{F}_{\Lambda - \num} ) \leq \frac{ \Pr( \overline{ \mathcal{M}}_{i, \varkappa} ) }{ \Pr( \mathcal{F}_{\Lambda - \num} ) }.
\]
\end{corollary}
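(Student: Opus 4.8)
The plan is to read Corollary~\ref{corol:enough-pulls} off directly from Lemma~\ref{lem:enough-pulls} by a set-containment argument, with no new probabilistic input. Lemma~\ref{lem:enough-pulls} asserts that on the event $\mathcal{M}_{i, \varkappa} \cap \mathcal{F}_{\Lambda - \num}$ one has $T_i(T) \geq \lambda_i / 20$ deterministically, i.e.\ $\mathcal{M}_{i, \varkappa} \cap \mathcal{F}_{\Lambda - \num} \subseteq \{ T_i(T) \geq \lambda_i / 20 \}$. First I would note that the hypothesis $\varkappa \geq \frac{\alpha - \ln\alpha - 0.5}{\alpha}$ is exactly the one required to invoke Lemma~\ref{lem:enough-pulls}, so it applies verbatim. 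Taking the contrapositive of this inclusion and intersecting both sides with $\mathcal{F}_{\Lambda - \num}$ gives
\[
\{ T_i(T) < \lambda_i / 20 \} \cap \mathcal{F}_{\Lambda - \num} \subseteq \big( \overline{\mathcal{M}}_{i, \varkappa} \cup \overline{\mathcal{F}}_{\Lambda - \num} \big) \cap \mathcal{F}_{\Lambda - \num} = \overline{\mathcal{M}}_{i, \varkappa} \cap \mathcal{F}_{\Lambda - \num} \subseteq \overline{\mathcal{M}}_{i, \varkappa}.
\]

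Next I would expand the conditional probability by its definition and use monotonicity of $\Pr$:
\[
\Pr\big( T_i(T) < \lambda_i / 20 \cond \mathcal{F}_{\Lambda - \num} \big) = \frac{ \Pr\big( \{ T_i(T) < \lambda_i / 20 \} \cap \mathcal{F}_{\Lambda - \num} \big) }{ \Pr( \mathcal{F}_{\Lambda - \num} ) } \leq \frac{ \Pr( \overline{\mathcal{M}}_{i, \varkappa} ) }{ \Pr( \mathcal{F}_{\Lambda - \num} ) },
\]
which is exactly the claimed inequality. The only bookkeeping is that the conditioning is meaningful only when $\Pr(\mathcal{F}_{\Lambda - \num}) > 0$; in the intended applications (inside the proof of Lemma~\ref{lem:regret-incurred-by-bad-arm}) one has $\num \geq 0.1\alpha$, and then Lemma~\ref{lem:good-event-prob} yields $\Pr(\mathcal{F}_{\Lambda - \num}) \geq 1 - \exp(-40\num/\alpha) \geq 1 - e^{-4} > 0$.

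I do not anticipate any real obstacle here: all the probabilistic content lives in Lemma~\ref{lem:enough-pulls} (which in turn rests on Lemma~\ref{lem:good-event-for-each-arm} and the Variable Confidence Level Bound), and this corollary merely repackages that statement through the identity $\Pr(A \cond B) = \Pr(A \cap B)/\Pr(B)$ together with $\Pr(A \cap B) \leq \Pr(A)$. The only things needing a sliver of care are getting the directions of the set inclusions right and confirming that the conditioning event is non-null — both entirely routine.
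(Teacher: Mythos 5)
Your proposal is correct and is essentially the same argument as the paper's: the paper writes $\Pr(T_i(T) < \lambda_i/20 \cond \mathcal{F}_{\Lambda - \num}) = 1 - \Pr(T_i(T) \geq \lambda_i/20 \wedge \mathcal{F}_{\Lambda - \num})/\Pr(\mathcal{F}_{\Lambda - \num})$ and then applies Lemma~\ref{lem:enough-pulls} to replace the joint event by $\mathcal{M}_{i,\varkappa} \wedge \mathcal{F}_{\Lambda - \num}$, which is exactly your set-containment argument phrased through the complement. Your extra remark that the conditioning event is non-null (via Lemma~\ref{lem:good-event-prob}) is a harmless bit of added care that the paper leaves implicit.
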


We are now ready to give an upper bound for the contribution of the arms in $B$ to the aggregate regret of Algorithm~\ref{alg:HTA}. 

\begin{proof}
Let $i$ be an arbitrary arm in $B$. Since $\Pr( \overline{\mathcal{E}}_i(T) \cond  \mathcal{F}_{\Lambda - \num} ) \leq 1$, it suffices to prove that 
\begin{equation} \label{eq:regret-incurred-by-bad-arm-1}
\Pr( \overline{\mathcal{E}}_i(T) \cond  \mathcal{F}_{\Lambda - \num} ) 
\leq \frac{9 \cdot \sqrt[8\alpha]{2} }{ \sqrt[8\alpha]{2} - 1} \exp \left( - \frac{\lambda_i \Delta_i^2}{10}  + \varkappa / 4 \right)
\end{equation} 
whenever $ \frac{\sqrt[8\alpha]{2} }{ \sqrt[8\alpha]{2} - 1}  \exp \left( - \frac{\lambda_i \Delta_i^2}{10}  + \varkappa /4 \right)  \leq 1 / 9$. Then the lemma follows by summing up the inequality for all arms in $B$. 

Notice that 
\begin{align}
& \Pr( \overline{\mathcal{E}}_i(T) \cond  \mathcal{F}_{\Lambda - \num} )  \nonumber \\
= {} & \Pr( \overline{\mathcal{E}}_i(T) \cond  T_i \geq \lambda_i / 20, \mathcal{F}_{\Lambda - \num} ) \Pr( T_i \geq \lambda_i / 20 \cond  \mathcal{F}_{\Lambda - \num} ) \nonumber \\
& + \Pr( \overline{\mathcal{E}}_i(T) \cond  T_i <  \lambda_i / 20, \mathcal{F}_{\Lambda - \num} ) \Pr( T_i < \lambda_i / 20 \cond  \mathcal{F}_{\Lambda - \num} )  \nonumber \\
\leq {} & \Pr( \overline{\mathcal{E}}_i(T) \cond  T_i \geq \lambda_i / 20, \mathcal{F}_{\Lambda - \num} )  + \Pr( T_i < \lambda_i / 20 \cond  \mathcal{F}_{\Lambda - \num} ).  \label{eq-0:lem:regret-incurred-by-good-arm}
\end{align}
We first focus on the first term of \eqref{eq-0:lem:regret-incurred-by-good-arm}, and note that
\begin{align}
    &\Pr( \overline{\mathcal{E}}_i(T) \cond  T_i \geq \lambda_i / 20, \mathcal{F}_{\Lambda - \num} ) \nonumber \\
    = {} & \frac{ \Pr( \overline{\mathcal{E}}_i(T) \wedge \mathcal{F}_{\Lambda - \num} \cond  T_i \geq \lambda_i / 20 ) }{\Pr( \mathcal{F}_{\Lambda - \num} \cond  T_i \geq \lambda_i / 20) } \nonumber  \\
     \leq {} & \frac{ \Pr( \overline{\mathcal{E}}_i(T) \cond  T_i \geq \lambda_i / 20 ) }{\Pr( \mathcal{F}_{\Lambda - \num } \cond  T_i \geq \lambda_i / 20) } = \frac{ \Pr( \overline{\mathcal{E}}_i(T) \wedge  T_i \geq \lambda_i / 20 ) }{\Pr( \mathcal{F}_{\Lambda - \num} \wedge  T_i \geq \lambda_i / 20) } \nonumber \\
     = {} &\frac{ \Pr( \overline{\mathcal{E}}_i(T) \wedge  T_i \geq \lambda_i / 20 ) }{ (1 - \Pr( T_i < \lambda_i / 20 \cond \mathcal{F}_{\Lambda - \num} )) \cdot \Pr( \mathcal{F}_{\Lambda - \num} ) }. \label{eq-0a:lem:regret-incurred-by-good-arm}
\end{align}
Then plugging \eqref{eq-0a:lem:regret-incurred-by-good-arm} into \eqref{eq-0:lem:regret-incurred-by-good-arm}, we derive
\begin{equation} \label{eq-0b:lem:regret-incurred-by-good-arm}
\Pr( \overline{\mathcal{E}}_i(T) \cond  \mathcal{F}_{\Lambda - \num} ) \leq \frac{ \Pr( \overline{\mathcal{E}}_i(T) \wedge  T_i \geq \lambda_i / 20 ) }{ (1 - \Pr( T_i < \lambda_i / 20 \cond \mathcal{F}_{\Lambda - \num} )) \cdot \Pr( \mathcal{F}_{\Lambda - \num} ) } + \Pr( T_i < \lambda_i / 20 \cond  \mathcal{F}_{\Lambda - \num} ).
\end{equation}

Using Chernoff-Hoeffding Inequality (Proposition~\ref{prop:chernoff-hoeff}), we have
\begin{multline} \label{eq-2:lem:regret-incurred-by-good-arm}
\Pr( \overline{\mathcal{E}}_i(T) \wedge T_i \geq \lambda_i / 20 ) = \sum_{t = \lceil \lambda_i / 20 \rceil }^ {+\infty} \Pr( \overline{\mathcal{E}}_i(T) \cond T_i = t ) \Pr( T_i = t) \\
\leq \sum_{t = \lceil \lambda_i / 20 \rceil }^ {+\infty} \Pr(T_i = t) \cdot 2 \exp( - \lambda_i  \Delta_i^2 / 10) \leq 2 \exp( - \lambda_i  \Delta_i^2 / 10).
\end{multline}
Moreover, by Lemma~\ref{lem:good-event-prob} and the fact that $f(\varkappa) \geq 0.1\alpha$ for $\varkappa \geq \frac{ \alpha - \ln \alpha - 0.5 }{\alpha} + 0.1$,  we have
\begin{equation} \label{eq-3:lem:regret-incurred-by-good-arm}
\Pr( \mathcal{F}_{\Lambda - \num} ) \geq 1 - \exp(- 40 f(k) / \alpha ) \\
\geq 1 - \exp(-4) \geq 0.9.
\end{equation} 
Combining (\ref{eq-3:lem:regret-incurred-by-good-arm}) with Corollary~\ref{corol:enough-pulls} and Lemma~\ref{lem:good-event-for-each-arm}, we have
\begin{align}  \label{eq-4:lem:regret-incurred-by-good-arm}
&\Pr( T_i < \lambda_i / 20 \cond  \mathcal{F}_{\Lambda - \num} ) \notag\\
\leq {} & \frac{ \Pr( \overline{ \mathcal{M}}_{i, \varkappa} ) }{ \Pr( \mathcal{F}_{\Lambda - \num} ) } \leq \frac{ \frac{4 \cdot \sqrt[8\alpha]{2} }{ \sqrt[8\alpha]{2} - 1}  \exp \left( - \frac{\lambda_i \Delta_i^2}{10}  + \varkappa/4 \right)  }{ 0.9 } \notag\\
\leq {} & \frac{4.5 \cdot \sqrt[8\alpha]{2} }{ \sqrt[8\alpha]{2} - 1}  \exp \left( - \frac{\lambda_i \Delta_i^2}{10}  + \varkappa / 4 \right) .
\end{align}
Putting together 
\eqref{eq-0b:lem:regret-incurred-by-good-arm}, \eqref{eq-2:lem:regret-incurred-by-good-arm}, \eqref{eq-3:lem:regret-incurred-by-good-arm}, and \eqref{eq-4:lem:regret-incurred-by-good-arm}, we obtain
\begin{align*}
&\Pr( \overline{\mathcal{E}}_i(T) \cond  \mathcal{F}_{\Lambda - \num} ) \\
\leq {} & \frac{  2 \exp( - \lambda_i  \Delta_i^2 / 10) }{ \left(1 - \frac{4.5 \cdot \sqrt[8\alpha]{2} }{ \sqrt[8\alpha]{2} - 1}  \exp \left( - \frac{\lambda_i \Delta_i^2}{10}  + \varkappa / 4 \right)  \right) \cdot 0.9 } + \frac{4.5 \cdot \sqrt[8\alpha]{2} }{ \sqrt[8\alpha]{2} - 1}  \exp \left( - \frac{\lambda_i \Delta_i^2}{10}  + \varkappa / 4 \right)  \\
\leq {} & \left( \frac{2}{(1 - 4.5 /9) \cdot 0.9} + \frac{ 4.5 \cdot \sqrt[8\alpha]{2} }{ \sqrt[8\alpha]{2} - 1}  \right) \exp \left( - \frac{\lambda_i \Delta_i^2}{10}  + \varkappa / 4 \right)  \\
\leq {} & \frac{9 \cdot   \sqrt[8\alpha]{2} }{ \sqrt[8\alpha]{2} - 1}  \exp \left( - \frac{\lambda_i \Delta_i^2}{10}  + \varkappa / 4 \right) ,
\end{align*}
where the second inequality follows from our assumption that $ \frac{ \sqrt[8\alpha]{2} }{ \sqrt[8\alpha]{2} - 1}  \exp \left( - \frac{\lambda_i \Delta_i^2}{10}  + \varkappa / 4 \right)  \leq 1 / 9$. This proves \eqref{eq:regret-incurred-by-bad-arm-1} and therefore the lemma.
\end{proof}

\subsubsection{Proof of Lemma~\ref{lem:good-event-for-each-arm}}\label{sec:proof-ub-step2-1}

\begin{namedthm*}{Lemma~\ref{lem:good-event-for-each-arm} (restated)}
For any arm $i \in B$ and $\varkappa$, it holds that
$\Pr( \mathcal{M}_{i,\varkappa} ) \geq 1 - \frac{4 \cdot \sqrt[8\alpha]{2} }{ \sqrt[8\alpha]{2} - 1}  \exp \left( - \frac{\lambda_i \Delta_i^2}{10}  + \varkappa /4 \right) $. 
\end{namedthm*}

In order to estimate the probability of $\mathcal{M}_{i, \varkappa}$, we introduce a more general lemma as follows and Lemma~\ref{lem:good-event-for-each-arm} becomes a simple corollary of Lemma~\ref{lem:variational-lil}.

\begin{lemma}{(Variable Confidence Level Bound)} \label{lem:variational-lil}
Let $X_1, \dots, X_L$ be \textit{i.i.d.}\ random variables supported on $[0, 1]$ with mean $\mu$. For any $a > 0$ and $b > 0$, it holds that
\[
\Pr\left( \forall t \in [1, L], \left| \frac{1}{t} \sum_{i = 1}^{t} X_i - \mu \right| \leq  \sqrt{ \frac{ a + b \ln (L/t)}{t} } \right) \geq 1 - \frac{2^{b/2+2} }{ 2^{b/2} - 1}  \exp (- a/2).
\] 
\end{lemma}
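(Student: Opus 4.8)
\textbf{Proof proposal for Lemma~\ref{lem:variational-lil}.}

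The plan is to reduce to a one-sided estimate and then run a \emph{peeling} (geometric-slicing) argument over dyadic blocks of the time axis, closing off each block with the already-available Hoeffding Maximal Inequality (Proposition~\ref{prop:hoeff-max-inequ}). First I would note that, after multiplying through by $t$, it suffices to bound the one-sided failure probability $\Pr\bigl(\exists\, t\in[1,L]:\ S_t-t\mu>\sqrt{t\,(a+b\ln(L/t))}\bigr)$, where $S_t\eqdef X_1+\cdots+X_t$. The matching lower-deviation event $t\mu - S_t>\sqrt{t\,(a+b\ln(L/t))}$ is governed by exactly the same estimate applied to the variables $1-X_i$ (again supported on $[0,1]$, with mean $1-\mu$); taking a union of the two one-sided events costs only a factor of $2$ in the final failure probability.

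A plain union bound over all $t\le L$ is hopeless here: feeding the one-sided Chernoff--Hoeffding bound (Proposition~\ref{prop:chernoff-hoeff}) into it produces roughly $e^{-2a}L^{-2b}\sum_{t\le L}t^{2b}=\Theta(L\,e^{-2a})$, which grows with $L$. Instead I would partition $\{1,\dots,L\}$ into the dyadic blocks $B_j=\{t\in\mathbb{Z}:\ L/2^{j+1}<t\le L/2^j\}$ for $j=0,1,\dots,\lfloor\log_2 L\rfloor$; these are the only nonempty blocks and they cover every integer in $[1,L]$ (endpoints, and the non-power-of-two case, included). The crucial observation is that the confidence radius $\sqrt{(a+b\ln(L/t))/t}$ varies by only a bounded factor within a block, so within $B_j$ it can be replaced by its block minimum: since $t>L/2^{j+1}$ and $\ln(L/t)\ge j\ln 2$ there, one has $t\,(a+b\ln(L/t))>(L/2^{j+1})(a+bj\ln 2)$, and hence the bad event restricted to $B_j$ is contained in $\bigl\{\exists\, i\le\lfloor L/2^j\rfloor:\ S_i-i\mu>\sqrt{(L/2^{j+1})(a+bj\ln 2)}\bigr\}$. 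Applying Hoeffding's Maximal Inequality with horizon $\lfloor L/2^j\rfloor\le L/2^j$ bounds this by $\exp\bigl(-2\cdot(L/2^{j+1})(a+bj\ln 2)\cdot 2^j/L\bigr)=\exp\bigl(-(a+bj\ln 2)\bigr)=e^{-a}\,2^{-bj}$.

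Summing over the $\lfloor\log_2 L\rfloor+1$ blocks and bounding by the full geometric series $\sum_{j\ge 0}2^{-bj}=2^b/(2^b-1)$ gives a one-sided failure probability at most $e^{-a}\,2^b/(2^b-1)$, so doubling for the symmetrization step yields $2e^{-a}\,2^b/(2^b-1)$. This already lies below the claimed bound $2^{b/2+2}e^{-a/2}/(2^{b/2}-1)$, because $e^{-a}\le e^{-a/2}$, $2\le 4$, and $2^b/(2^b-1)=1/(1-2^{-b})\le 1/(1-2^{-b/2})=2^{b/2}/(2^{b/2}-1)$. In fact this route gives a somewhat stronger constant than stated; recovering the exact stated form is a routine matter of slightly coarser block accounting and I would not grind through it.

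The step I expect to demand the most care is the bookkeeping of the peeling: the block endpoints must be chosen so that the ratio between a block's squared deviation threshold and that block's Hoeffding horizon exactly absorbs the across-block growth of $a+b\ln(L/t)$ --- this is precisely what converts the per-block estimates into a summable geometric series rather than a lossy $O(\log L)$ union bound --- and one must check that each $t\in[1,L]$ lands in exactly one nonempty block and that the coarsest block still has horizon $\ge 1$ so the maximal inequality applies. Apart from this, the argument invokes no probabilistic input beyond the given Hoeffding Maximal Inequality (and, for the sanity check on why the naive bound fails, the plain Chernoff--Hoeffding bound).
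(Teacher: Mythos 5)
Your proof is correct, but it takes a genuinely different route from the paper's. The paper anchors at the powers of two $t\in\{1,2,4,\dots,2^{\lfloor\log_2 L\rfloor}\}$, controls the deviation at each anchor by Chernoff--Hoeffding (Proposition~\ref{prop:chernoff-hoeff}) with half the radius, controls the increments $X_{t+1}+\cdots+X_{t+j}$ from each anchor by Hoeffding's Maximal Inequality (Proposition~\ref{prop:hoeff-max-inequ}) with the other half, and then recombines via the triangle inequality and divides by $t+j$; the weights $t^{b/2}/L^{b/2}$ make the union over anchors a geometric series. You instead peel the time axis into the dyadic blocks $B_j=\{t:\ L/2^{j+1}<t\le L/2^j\}$, replace the confidence radius by its block minimum, and fire the maximal inequality once per block with horizon $\lfloor L/2^j\rfloor$, getting the per-block bound $e^{-a}2^{-bj}$ directly. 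Your version is shorter, avoids both the two-term decomposition and the somewhat delicate final step of converting a bound indexed by the anchor $t$ into one indexed by the actual time $t+j$, and yields the strictly stronger one-sided constant $e^{-a}\,2^b/(2^b-1)$ (versus the paper's $2e^{-a/2}\,2^{b/2}/(2^{b/2}-1)$ per side); your comparison showing this implies the stated bound is also correct. The only real content difference is that the paper's anchor-plus-increment decomposition localizes the randomness around each dyadic time, whereas your block argument re-uses the full prefix maximal inequality on each block --- both rely on exactly the same probabilistic inputs, and for this statement your accounting is the cleaner of the two.
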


Now we only need to prove Lemma~\ref{lem:variational-lil}.

\begin{proof}[Proof of Lemma~\ref{lem:variational-lil}]
Let $l = \lfloor \log_2 L \rfloor$. By Chernoff-Hoeffding Inequality (Proposition~\ref{prop:chernoff-hoeff}), we have for any $t \in \{1, 2, 4, \dots, 2^{l} \}$, 
\[
\Pr \left( \left| \frac{1}{t} \sum_{i = 1}^t X_i - \mu \right| \leq  \frac{1}{2} \sqrt{ \frac{ a + b \ln \frac{L}{t} }{t} } \right) \geq 1 - 2  \exp ( -a/2) \cdot \frac{t^{b/2}}{L^{b/2}}.
\]
Via a union bound and using the fact that $2^{l+1} \leq 2 L$, we get 
\begin{multline}  \label{eq-1:lem:variational-lil}
\Pr \left( \forall t \in \{1, 2, 4, \dots, 2^{l} \}, \left| \frac{1}{t} \sum_{i = 1}^t X_i - \mu \right| \leq  \frac{1}{2} \sqrt{ \frac{ a + b \ln \frac{L}{t} }{t} } \right) \\
\geq 1 - 2  \exp ( -a/2) \cdot \sum_{i = 0}^l \frac{2^{bi/2}}{L^{b/2}} \geq 1 - \frac{2^{b/2 + 1} }{ 2^{b/2} - 1}  \exp ( -a/2) .
\end{multline}

By Hoeffding's Maximal Inequality (Proposition~\ref{prop:hoeff-max-inequ}), we have for any $t \in \{1, 2, 4, \dots, 2^{l} \}$,
\begin{multline*}
\Pr \left( \forall j \in [1, \min\{t, L - t\}], | X_{i, t + 1} + \cdots + X_{i, t + j} - j \mu | \leq  \frac{1}{2} \sqrt{ t \left( a + b \ln \frac{L}{t} \right) } \right) \\ \geq 1 - 2  \exp ( - a/2 ) \cdot \frac{t^{b/2}}{L^{b/2}}.
\end{multline*}
Again via a union bound and using the fact that $2^{l+1} \leq 2 L$, we get 
\begin{multline}  \label{eq-2:lem:variational-lil}
\Pr \Bigg( \forall t \in \{1, 2, 4, \dots, 2^{l} \}, \forall j \in [1, \min\{t, L - t\}], | X_{i, t + 1} + \cdots + X_{i, t + j} -  j \mu | \leq  \frac{1}{2} \sqrt{ t \left( a + b \ln \frac{L}{t} \right) } \Bigg) \\
 \geq 1 - \frac{2^{b/2+1} }{ 2^{b/2} - 1}  \exp (-a/2) .
\end{multline}

Combining \eqref{eq-1:lem:variational-lil} and \eqref{eq-2:lem:variational-lil}, and using a union bound, we have with probability at least $1 - \frac{2^{b/2+2} }{ 2^{b/2} - 1}  \exp (- a/2)$ uniformly over all $t \in \{1, 2, 4, \dots, 2^{l} \}$ and $ j \in [1, \min\{t, \lambda_i - t\}]$ that
\[
| X_{1} + \cdots + X_{t + j} -  (t + j) \mu | \leq \sqrt{ t \left( a + b \ln \frac{L}{t} \right) }.
\]
Dividing both sides of the above inequality by $(t + j)$, we complete the proof of this lemma.

\end{proof}

\subsubsection{Proof of Lemma~\ref{lem:enough-pulls} and Corollary~\ref{corol:enough-pulls}}

\begin{namedthm*}{Lemma~\ref{lem:enough-pulls} (restated)}
For any arm $i \in B$ and $\varkappa \geq \frac{ \alpha - \ln \alpha - 0.5 }{\alpha}$, conditioning on $\mathcal{M}_{i, \varkappa} \wedge \mathcal{F}_{\Lambda - \num}$, we have that $T_i(T) \geq \lambda_i / 20$.
\end{namedthm*}

\begin{proof}
Fix an arm $i \in B$ and $\varkappa \geq \frac{ \alpha - \ln \alpha - 0.5}{\alpha}$. We now condition on $\mathcal{M}_{i, \varkappa} \wedge \mathcal{F}_{\Lambda - \num}$ and prove this lemma by contradiction. Suppose for contradiction that we have $t < \lambda_i / 20$.
Notice that
 \begin{align}
     \stat_{i, t} \nonumber 
    \leq {}&  \alpha  t\left(  \Delta_i  + \sqrt{ \frac{ \lambda_i \Delta_i^2 / 5 -  \varkappa / 2 + \frac{1}{4\alpha} \ln \frac{\lambda_i}{t} }{t} }  \right)^2 + \ln \sqrt{t} \notag \\
     \leq{} &  \alpha \left(  \sqrt{\frac{\lambda_i}{20}}\Delta_i  + \sqrt{ \lambda_i \Delta_i^2 / 5 -  \varkappa / 2 + \frac{1}{4\alpha} \ln \frac{\lambda_i}{t}  } \right)^2 + \ln \sqrt{t} \notag\\
     \leq{} & \alpha \left( 0.5 \lambda_i\Delta_i^2 - \varkappa + \frac{1}{2\alpha} \ln \frac{\lambda_i}{t} \right) + \ln \sqrt{t} \notag\\
     \leq{} & \alpha ( 0.5\lambda_i\Delta_i^2 - \varkappa)+\ln\sqrt{\lambda_i}  \label{eq:lem-enough pulls},
 \end{align} 
 It is easy to verify that $x= (\alpha \lambda_i)^{-\frac{1}{2}}$ is the minimum of the function $0.5 \alpha \lambda_i x^2+\ln x^{-1}$ when $x > 0$. Hence, we have
 \begin{align*}
 & \ln \sqrt{\lambda_i} + \ln \sqrt{\alpha} + 0.5 
 = {}  0.5\alpha \lambda_i  ( \alpha \lambda_i)^{-1} + \ln  ( \alpha \lambda_i)^{\frac{1}{2}} 
     \leq {}   0.5\alpha \lambda_i \Delta_i^2 + \ln \Delta_{i}^{-1}.
 \end{align*}
 Therefore,
 \begin{align*}
     (\ref{eq:lem-enough pulls}) \leq {} &  \alpha \lambda_i\Delta_i^2 + \ln\Delta_i^{-1}- \alpha - ( \alpha \varkappa + \ln \alpha + 0.5 - \alpha)  .
 \end{align*} 
 Finally, since $\Lambda= \alpha \lambda_i\Delta_i^2 + \ln\Delta_i^{-1} - \alpha $ for all $i\in B$ by definition, the last inequality yields $\stat_{i,t}\leq \Lambda- \num $, which contradicts the assumption that $\mathcal{F}_{\Lambda - \num}$ is true.
\end{proof}

\begin{namedthm*}{Corollary~\ref{corol:enough-pulls} (restated)}
For any arm $i \in B$ and $\varkappa \geq \frac{ \alpha - \ln \alpha - 0.5 }{\alpha}$, we have 
\[
\Pr(T_i(T) < \lambda_i / 20 \cond \mathcal{F}_{\Lambda - \num} ) \leq \frac{ \Pr( \overline{ \mathcal{M} }_{i, \varkappa} ) }{ \Pr( \mathcal{F}_{\Lambda - \num} ) }.
\]
\end{namedthm*}

\begin{proof}
Note that
\begin{align*}
    \Pr(T_i(T) < \lambda_i / 20 \cond \mathcal{F}_{\Lambda - \num} ) 
    = 1 - \frac{ \Pr( T_i(T) \geq \lambda_i / 20 \wedge \mathcal{F}_{\Lambda - \num} ) }{\Pr( \mathcal{F}_{\Lambda - \num} )}.
\end{align*}
Further by Lemma~\ref{lem:enough-pulls}, we obtain
\begin{align*}
\Pr(T_i(T) < \lambda_i / 20 \cond \mathcal{F}_{\Lambda - \num} )
\leq  1 - \frac{ \Pr( \mathcal{M}_{i, \varkappa} \wedge \mathcal{F}_{\Lambda - \num} ) }{\Pr( \mathcal{F}_{\Lambda - \num} )} 
= \frac{ \Pr( \overline{\mathcal{M}}_{i, \varkappa}  \wedge \mathcal{F}_{\Lambda - \num} ) }{ \Pr(\mathcal{F}_{\Lambda - \num}) }
\leq  \frac{ \Pr( \overline{ \mathcal{M}}_{i, \varkappa} ) }{ \Pr( \mathcal{F}_{\Lambda - \num} ) },
\end{align*}
which concludes the proof of this corollary.
\end{proof}

\section{The Lower Bound} \label{sec:lower-bound}

In this section we discuss the regret lower bound of \emph{any} algorithm. For any sequence of $K$ gaps $\Delta_1, \ldots, \Delta_K > 0$, let $\mathcal{I}_{\Delta_1, \dots, \Delta_K}$ denote the set of instances of the problem where the gap between $\theta_i$ and $\theta$ is $\Delta_i$ for every arm $i \in [K]$.   We now show a parameter dependent lower bound of the aggregate regret when the time horizon $T \geq K$.

\begin{theorem} \label{thm:lower-bound}
Let $(\Delta_1, \dots, \Delta_K) \in (0, 1/4]^K$ be a sequence of gaps. Then for any algorithm $\mathbb{A}$ and time horizon $T \geq K$, there exists an instance $I \in \mathcal{I}_{\Delta_1, \dots, \Delta_K}$ such that
 \[
 \mathcal{R}^{\mathbb{A}}(I; T) \geq \frac{1}{4} \mathscr{P}_{16} (\{\Delta_i\}_{i\in S}, T).
 \]
\end{theorem}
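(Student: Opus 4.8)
The plan is to prove a lower bound for each single arm separately, then combine via linearity of expectation, and finally optimize the sample allocation using a minimax-over-nature argument. For a fixed arm $i$, consider two candidate instances that agree on all arms except arm $i$: in one instance arm $i$ has mean $\theta + \Delta_i$ (so the correct label is $1$), and in the other it has mean $\theta - \Delta_i$ (so the correct label is $0$). An algorithm $\mathbb{A}$ must output a single decision $d_i \in \{0,1\}$, and on these two instances the ``correct'' answer differs. Applying Proposition~\ref{prop:lower-bound-technique} (the Bretagnolle--Huber / Le Cam two-point inequality) with $A$ the event $\{d_i = 1\}$, and $P, Q$ the laws of the full interaction transcript under the two instances, gives
\[
\Pr\nolimits_P(d_i = 1) + \Pr\nolimits_Q(d_i = 0) \geq \tfrac12 \exp\big( -\KL(P \parallel Q) \big),
\]
i.e.\ the sum of the mis-classification probabilities of arm $i$ across the two instances is at least $\tfrac12 e^{-\KL(P\parallel Q)}$. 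By the divergence decomposition (Proposition~\ref{prop:divergence-decomposition}), $\KL(P\parallel Q) = \E_P[T_i(T)] \cdot \KL(\mathcal{D}_i \parallel \mathcal{D}'_i)$, where the two reward distributions for arm $i$ differ only in mean by $2\Delta_i$. Choosing the distributions to be (say) Bernoulli or appropriately truncated and using a standard bound $\KL(\mathcal{D}_i \parallel \mathcal{D}'_i) \leq c\,\Delta_i^2$ for $\Delta_i \le 1/4$ — the $(0,1/4]$ hypothesis is exactly what makes this clean, with the constant absorbed into the ``$16$'' — yields the per-arm bound $e_i^{(1)} + e_i^{(2)} \geq \tfrac12 \exp(-16\,\E[T_i(T)]\,\Delta_i^2)$, where $e_i^{(j)}$ is the error probability for arm $i$ on instance $j$.

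Next I would pass to a genuinely adversarial instance. Build a single ``bad'' instance $I$ by, for each arm $i$ independently, picking the sign of its deviation ($\theta \pm \Delta_i$) to be whichever of the two options makes arm $i$ harder for $\mathbb{A}$; more precisely, since the pair of instances above share everything off arm $i$, one can run the argument arm-by-arm against a fixed background and conclude that for every arm $i$ there is a choice of sign so that the error probability $e_i$ on the resulting instance is at least $\tfrac14 \exp(-16\,\E[T_i(T)]\,\Delta_i^2)$ (the extra factor $2$ loss comes from taking the max of two terms whose sum is bounded below, rather than their average). Combining all arms into one instance $I$ and summing,
\[
\mathcal{R}^{\mathbb{A}}(I; T) = \sum_{i=1}^K e_i \geq \frac14 \sum_{i=1}^K \exp\big(-16\,\E[T_i(T)]\,\Delta_i^2\big).
\]
Here I must be a little careful: the $\E[T_i(T)]$ quantities are those under $I$ itself (or under whichever instance realizes the max), but the key structural fact — $\sum_i T_i(T) = T$ always, hence $\sum_i \E[T_i(T)] \le T$ — holds on every instance, so it survives.

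Finally, set $x_i \eqdef \E[T_i(T)]$; these are nonnegative reals with $\sum_i x_i \leq T$. Therefore
\[
\mathcal{R}^{\mathbb{A}}(I; T) \geq \frac14 \sum_{i=1}^K \exp(-16\, x_i\, \Delta_i^2) \geq \frac14 \min_{\substack{y_1 + \cdots + y_K = T \\ y_1, \dots, y_K \ge 0}} \sum_{i=1}^K \exp(-16\, y_i\, \Delta_i^2) = \frac14 \mathscr{P}_{16}(\{\Delta_i\}_{i\in S}, T),
\]
where the second inequality uses that $t \mapsto e^{-16t\Delta_i^2}$ is decreasing, so shrinking the budget constraint from $\le T$ to $= T$ (by allocating the leftover arbitrarily) only decreases the sum, and then replacing the actual allocation by the minimizing one can only decrease it further. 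This is exactly the claimed bound. The main obstacle is the second paragraph: making the ``choose the worst sign for each arm, all within one instance'' step rigorous. The subtlety is that the transcript law, and hence $\E[T_i(T)]$, depends on the signs chosen for \emph{all} arms, so one cannot literally optimize the signs independently; the clean fix is to either (i) invoke the two-point argument with $P$ being the mixture over the $2^K$ sign-patterns and carefully track that $\sum_i \E_{\text{mix}}[T_i(T)] \le T$ still holds, or (ii) use a hybrid/averaging argument (as is standard in fixed-budget lower bounds, cf.\ \cite{carpentier2016tight}) to extract one fixed instance on which the summed regret is large. Everything else — the Le Cam inequality, the divergence decomposition, the KL-to-$\Delta^2$ conversion, and the final convexity/monotonicity manipulation — is routine given the tools already assembled in the appendix.
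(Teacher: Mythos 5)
Your proposal is essentially the paper's proof: the paper makes precise exactly the averaging fix you name at the end, by constructing all $2^K$ sign-pattern instances, writing $\max_j \mathcal{R}^{\mathbb{A}}(I_j;T) \geq 2^{-K}\sum_j \mathcal{R}^{\mathbb{A}}(I_j;T)$, pairing each $I_j$ with $I_{j\oplus 2^{i-1}}$ for each arm $i$, applying the two-point inequality plus divergence decomposition to each pair to get $\tfrac12\exp(-16\,\E_{I_j}[T_i(T)]\Delta_i^2)$, and then using that $\sum_i \E_{I_j}[T_i(T)] = T$ holds for each \emph{fixed} $j$ so that every inner sum over $i$ is already at least $\tfrac12\mathscr{P}_{16}(\{\Delta_i\}_{i\in S},T)$ before averaging (the $\tfrac14$ being your Le Cam $\tfrac12$ times the $\tfrac12$ from double-counting each pair). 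Your second-paragraph ``pick the worst sign arm-by-arm within one instance'' step would indeed not go through as literally stated, since the $\E[T_i(T)]$'s depend on all the signs, but you correctly flagged this and the averaging argument you propose as the repair is precisely what the paper does.
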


\begin{proof}
Let $\mathcal{B}(\mu)$ denote the Bernoulli distribution with mean $\mu$. We use $\Delta_i(I)$ to denote the gap between arm $i$ and the threshold $\theta$, given the instance $I$. For any algorithm $\mathbb{A}$ and instance $I$, let $\mathcal{D}_{I\mathbb{A}}$ denote the probability space induced by $I$ and $\mathbb{A}$. We use $\Pr_{I \mathbb{A}}$ to denote the measure of the probability space $\mathcal{D}_{I\mathbb{A}}$, and use $\E_{I\mathbb{A}}[ \cdot ]$ to denote the expectation with respect to $\Pr_{I \mathbb{A} }$. When clear from the context, the reference to $\mathbb{A}$ is omitted.

We fix the threshold $\theta = 1/2$.
To prove the theorem, it suffices to prove that there exist $2^K$ instances $I_0, \dots, I_{2^K - 1} \in \mathcal{I}_{\Delta_1, \dots, \Delta_K}$ such that 
\[
\max_{0 \leq j < 2^K} \mathcal{R}^{\mathbb{A}}(I_j; T) \geq \frac{1}{4} \mathscr{P}_{16} (\{\Delta_i\}_{i\in S}, T).
\]
We now define these instances explicitly. Suppose the binary representation of $j$ is denoted by $\overline{a^j_1\cdots{}a^j_K}$. Then for any arm $i$ in $I_j$, the associated distribution is $\mathcal{B}(1/2 + a^j_i \Delta_i)$. Thus the distribution associated with $I_j$ can be represented by the product distribution
\[
\mathcal{B}(1/2 + a^j_1 \Delta_1) \otimes \cdots \otimes \mathcal{B}(1/2 + a^j_K \Delta_K).
\]  
    
First, we note that 
\begin{align} \label{eq-0:thm:lower-bound}
  \max_{0 \leq j < 2^K} \mathcal{R}^{\mathbb{A}}(I_j; T) &= \max_{0 \leq j < 2^K} \sum_{i = 1}^K \Pr\nolimits_{I_j} (  \overline{\mathcal{E}}_i(T) ) \notag\\
  &\geq \frac{1}{2^K} \sum_{j = 0}^{2^K - 1} \sum_{i = 1}^K \Pr\nolimits_{I_j} (  \overline{\mathcal{E}}_{i}(T) ).
\end{align}
By counting $\Pr\nolimits_{I_j} (  \overline{\mathcal{E}}_{i}(T) )$ twice and then reordering, we have 
\begin{equation} \label{eq-1:thm:thm:lower-bound}
  \eqref{eq-0:thm:lower-bound} =  \frac{1}{2^{K + 1} } \sum_{i = 1}^K \sum_{j = 0}^{2^K - 1}   \left( \Pr\nolimits_{I_j} (  \overline{\mathcal{E}}_{i}(T) ) + \Pr\nolimits_{I_{j \oplus 2^{i - 1}} } (  \overline{\mathcal{E}}_{i}(T) )  \right)  ,
\end{equation}
where $\oplus$ denotes the binary XOR operation. Now from Proposition~\ref{prop:lower-bound-technique}, we get that for $i\in[K]$,
\begin{align}\label{eq-2:thm:thm:lower-bound}
 & \Pr\nolimits_{I_j} (  \overline{\mathcal{E}}_{i}(T) ) + \Pr\nolimits_{I_{j \oplus 2^{i - 1}} } (  \overline{\mathcal{E}}_{i}(T) )  \notag\\
 \geq {} &  \frac{1}{2} \exp( -\KL( \Pr\nolimits_{I_j} \parallel \Pr\nolimits_{ I_{j \oplus 2^{i - 1}} } ) ) \notag \\
  \overset{(a)}{=} {} &  \frac{1}{2} \exp( -  \E\nolimits_{I_j}[x_i] \cdot \KL( \mathcal{B}(1/2 + a^j_i \Delta_i ) \parallel  \mathcal{B}(1/2 + a^{j \oplus 2^{i - 1} }_i \Delta_i ) ) \nonumber \\
  = {} & \frac{1}{2} \exp \left( - \E\nolimits_{I_j}[x_i] \cdot 2\Delta_i \ln \left( 1 + \frac{ 2\Delta_i}{ 1/2 - \Delta_i } \right) \right) \notag \\
  \overset{(b)}{\geq} {}  & \frac{1}{2} \exp \left( - \frac{ 4 \E\nolimits_{I_j}[x_i]\Delta_i^2}{ 1/2 - \Delta_i }  \right) \notag \\
  \geq {}  & \frac{1}{2} \exp( - 16  \E\nolimits_{I_j}[x_i] \Delta_i^2 ), 
\end{align}
where (a) follows from standard divergence decomposition (Proposition~\ref{prop:divergence-decomposition}) and (b) follows from $\Delta_i \leq 1/4$.
Finally, plugging \eqref{eq-2:thm:thm:lower-bound} into \eqref{eq-1:thm:thm:lower-bound}, we have 
\begin{align*}
  &\max_{0 \leq j < 2^K} \mathcal{R}^{\mathbb{A}}(I_j; T) & \\
  \geq {} & \frac{1}{2^{K + 1} } \sum_{j = 0}^{2^K - 1} \sum_{i = 1}^K \frac{1}{2} \exp( - 16 \E\nolimits_{I_j} [x_i] \Delta_i^2 ) \\
  \geq {} &  \frac{1}{2^{K + 1} } \sum_{j = 0}^{2^K - 1} \min_{ \substack{x_1 + \cdots + x_K = T \\ x_1,\ldots,x_K\geq0 } } \sum_{i = 1}^K \frac{1}{2} \exp( - 16x_i\Delta_i^2 ) \\
  = {} &  \min_{ \substack{x_1 + \cdots + x_K = T \\ x_1,\ldots,x_K\geq0 } } \sum_{i = 1}^K \frac{1}{4} \exp( - 16x_i\Delta_i^2 ) \\
  ={} & \frac{1}{4} \mathscr{P}_{16}(\{\Delta_i\}_{i\in S}, T),
\end{align*}
which concludes the proof of this theorem.
\end{proof}

\section{Additional Experimental Results} \label{app:more-experiments}

\subsection{T-tests}\label{app:more-experiments-c}

In order to statistically compare our algorithm and other algorithms, we perform two-tailed paired t-tests between our algorithm and other algorithms respectively on $5000$ independent runs. When performing t-tests, we set $T = 1000$, $T = 40000$, and $T = 100000$ in Setup 1, 2 and 3 respectively. The null hypothesis is that our algorithm and other algorithms have the same mean. The p-values are listed in the following table.

\begin{small}
\begin{table}[H]
    \centering
    \begin{small}
    \begin{tabular}{cccccc}
    \toprule
    \multirow{2}{*}{Setup} & \multicolumn{5}{c}{P-values} \\
    \cline{2-6}
    &  APT(0) & APT(.025) & APT(.05) & APT(.1) & UCBE(-1)  \\
         \hline
      Setup 1 & 1.4e-32 & 0.60 & 0.95 & 1.3e-5 & 0  \\
      Setup 2 & 0 & 8.9e-21 & 1.5e-78 & 4.0e-160 & 0  \\
      Setup 3 & 0 & 7.8e-28 & 3.6e-88 & 1.5e-192 & 0  \\
      \midrule
    \multirow{2}{*}{Setup} & \multicolumn{5}{c}{P-values} \\
    \cline{2-6}
    &   UCBE(0) & UCBE(4) & Opt-KG(1,1) & Opt-KG(.5,.5) & Uniform \\
         \hline
      Setup 1 & 2.5e-69 & 4.1e-225 & 7.8e-3 & 5.6e-8 & 1.1e-295 \\
      Setup 2  & 0 & 1.6e-251 & 2.9e-26 & 4.6e-46 & 0 \\
      Setup 3 & 0 & 2.0e-157 & 2.1e-24 & 6.5e-36 & 0 \\
      \bottomrule
    \end{tabular}
    \caption{T-test results between our algorithm and other algorithms in Setup 1, 2 and 3}
    \label{tab:p-values}
    \end{small}
\end{table}
\end{small}

\subsection{Experimental Results for More Setups}\label{app:more-experiments-a}


We present additional experimental results with other settings of $\{ \theta_i \}_{i=1}^K$.

\begin{enumerate}
\item[4.] (geometric progression). $K = 10$; $\theta_{1:4} = 0.4 - 0.2^{(1:4)}, \theta_5 = 0.45, \theta_6 = 0.55$, and $\theta_{7:10} = 0.6 + 0.2^{5 - (1:4)}$ (see Setup 4 in Figure~\ref{fig:exp2}).
\item[5.](two-group setting II). $K = 100$; $\theta_{1:50} = 0.4$ and $\theta_{51:100} = 0.51$ (see Setup 5 in Figure~\ref{fig:exp2}).
\item[6.] (one-side group). $K = 10$; $\theta_{1:10} = 0.4 + (i - 1) / 100$ (see Setup 6 in Figure~\ref{fig:exp2}).
\end{enumerate}

\begin{figure}[H]
\centering
 \includegraphics[width=\textwidth]{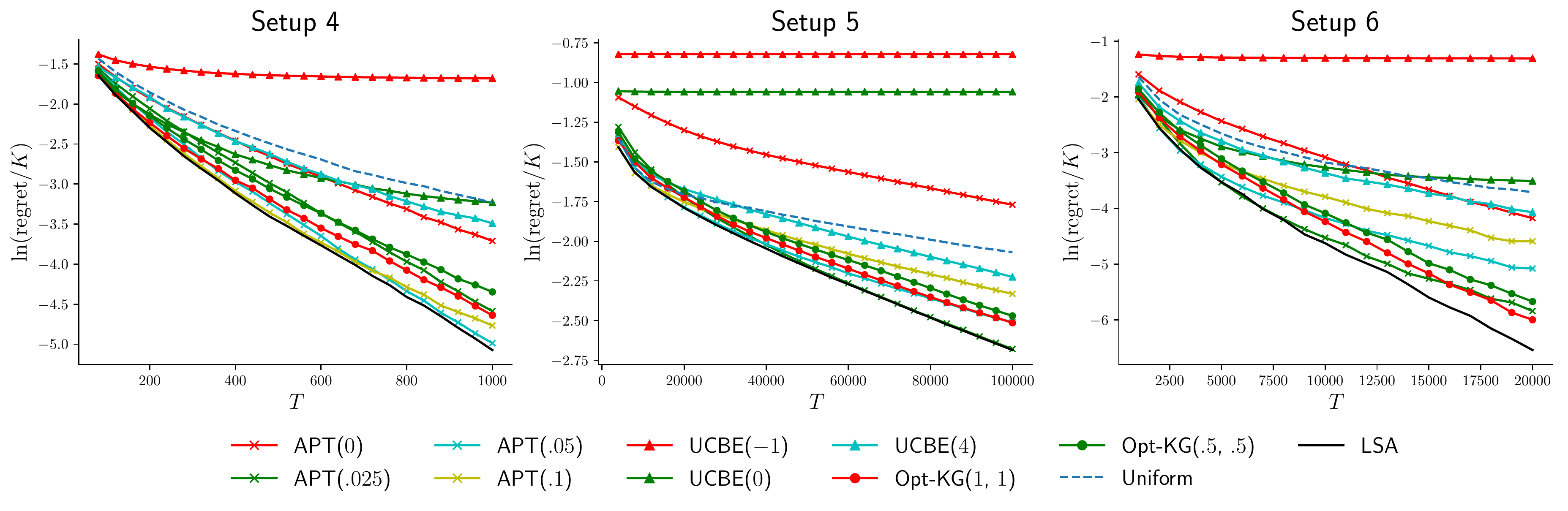}
\caption{Average aggregate regret on a logarithmic scale for additional set of settings.}
\label{fig:exp2}
\end{figure}

\subsection{Robustness of the Tuning Parameter in \LSA}\label{app:more-experiments-b}
To test the robustness of our algorithm, we show the results of our algorithm for Setup 1 when varying $\alpha$ in Figure~\ref{fig:exp3}. We find that the performance is very consistent with different choices of $\alpha$. The differences are marginal and not statistically significant. For simplicity, in our experiments in the main text, we use $\alpha = 1.35$ as default (black curve in the figure). 

\begin{figure}[H]
\centering
\includegraphics[width=0.5\textwidth]{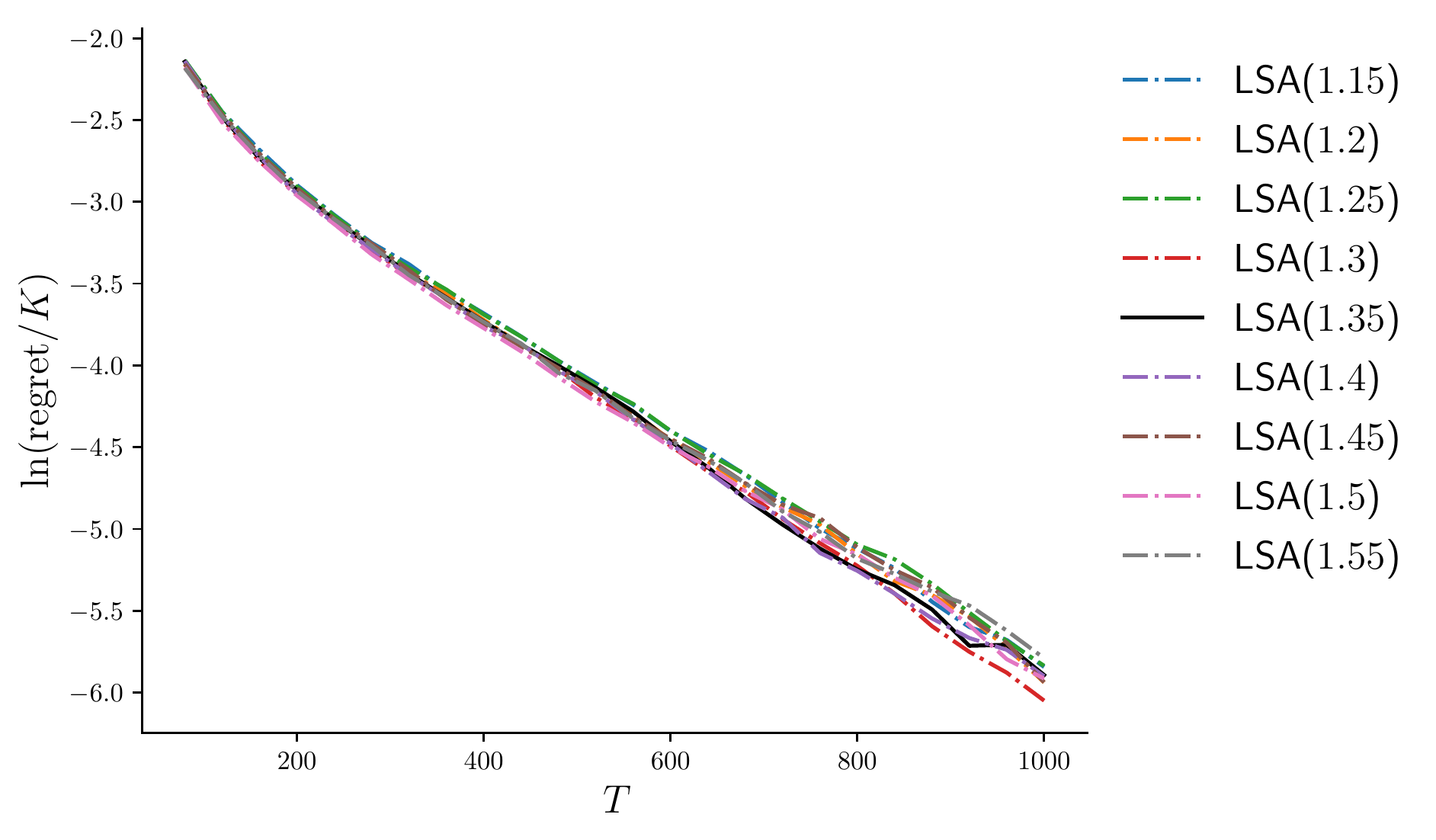}
\caption{Average aggregate regret on a logarithmic scale of $\LSA(\alpha)$ on Setup~1 for different $\alpha$.}
\label{fig:exp3}
\end{figure}

\end{document}